\documentclass[11pt]{article}
\usepackage[authoryear,round]{natbib}
\usepackage[utf8]{inputenc}
\usepackage[T1]{fontenc}
\usepackage{amsmath,amssymb,amsthm,amsfonts,mathtools}
\usepackage{array,siunitx,graphicx}
\usepackage{authblk}
\usepackage{enumerate}
\usepackage{caption}
\usepackage{titlesec}
\usepackage{newtxtext}
\usepackage[subscriptcorrection]{newtxmath}
\usepackage[margin=1in]{geometry}
\usepackage{url}
\usepackage{booktabs}
\usepackage[colorlinks=true,linkcolor=blue,citecolor=blue,urlcolor=blue]{hyperref}
\hypersetup{
  pdftitle={Multitask Regression with Pairwise Fusion},
  pdfauthor={Xiaodong Li and Zhentao Li}
}

\titleformat{\section}[hang]
  {\normalfont\bfseries\fontsize{11}{13pt plus .8pt minus .6pt}\selectfont}
  {\thesection.}{1em}{}
\titleformat{\subsection}[hang]
  {\normalfont\bfseries\fontsize{11}{13pt plus .8pt minus .6pt}\selectfont}
  {\thesubsection}{1em}{}

\allowdisplaybreaks[3]
\AddToHook{cmd/maketitle/after}{%
  \setlength{\abovedisplayskip}{7.5pt plus 2pt minus 2pt}%
  \setlength{\belowdisplayskip}{7.5pt plus 2pt minus 2pt}%
  \setlength{\abovedisplayshortskip}{2pt plus 2pt}%
  \setlength{\belowdisplayshortskip}{5pt plus 2pt minus 2pt}%
}
\AddToHook{env/thebibliography/before}{\normalsize}
\AddToHook{cmd/appendix/before}{\normalsize}

\newcommand{\predictionerror}{%
\frac1{2T}\sum_{t=1}^T\frac{\|X^{(t)}h^{(t)}\|_2^2}{n_t}}

\theoremstyle{plain} \numberwithin{equation}{section}
\newtheorem{theorem}{Theorem}[section]

\newtheorem{lemma}[theorem]{Lemma}
\theoremstyle{definition}
\newtheorem{assumption}{Assumption}[section]
\theoremstyle{remark}
\newtheorem{remark}{Remark}[section]

\title{Multitask Regression with Pairwise Fusion}
\author[1]{Xiaodong Li}
\author[1]{Zhentao Li}
\affil[1]{Department of Statistics, University of California, Davis, Davis, CA 95616, USA}
\date{}

\begin{document}
\maketitle

\begin{abstract}
We study multitask regression when coefficient sharing can differ by predictor.
For a given predictor, many tasks may have the same coefficient while a few
differ, and the exceptional tasks need not be the same for another predictor.
We describe this structure by two quantities: the number of active predictors
and the total number of task coefficients that differ from the most common
value for their predictor.  We estimate the coefficient matrix by penalizing
all pairwise coefficient differences across tasks, with an additional group
penalty when predictor selection is needed.  The resulting upper and lower
bounds have the same dependence on these two quantities.  We also consider the
stronger setting in which a large set of tasks shares one entire coefficient vector.
Under explicit sample-size conditions, the same pairwise estimator pools those
tasks exactly, while allowing the remaining tasks to differ.  Simulations and
household energy data illustrate the transition between broad sharing and
task-specific coefficients.
\end{abstract}

\section{INTRODUCTION}

Multitask regression estimates several related regression models jointly so
they can share information.  The main practical question is how much they
should share.  Fitting every task separately can be noisy when each task has
limited data, while forcing all tasks to use one coefficient vector can be
badly biased when their coefficients truly differ.  Many applications lie
between these extremes.

We focus on a simple form of partial sharing.  For a given predictor, many
tasks may have exactly the same coefficient and only a few may differ.  The
exceptional tasks can change from predictor to predictor, so the model does not
require one task clustering to work for every predictor.  In the household
energy example below, for instance, one housing characteristic may have a
similar association with electricity use in most states, while another may
differ in a different subset of states.

Two numbers summarize the resulting structure.  Let \(s\) denote the number of
active predictors, and let \(r\) denote the total number of task coefficients
that differ from the most common coefficient for their predictor.  Thus \(s\)
measures how many predictors matter, while \(r\) measures how many coefficients
depart from predictor-specific shared values.  Section~\ref{sec:predictor-specific-sharing}
gives the formal definitions.

We estimate this structure by penalizing pairwise coefficient differences
across tasks, separately for each predictor.  The method needs no task ordering
or external similarity information.  We study pairwise fusion by itself and in
combination with a group penalty that selects predictors.

To state the main rate informally, suppose there are \(p\) predictors, \(T\)
tasks, and \(n\) observations per task, with noise level \(\sigma\).  In the
balanced design case, the sparse estimator has squared Frobenius error of order
\[
 \frac{\sigma^2}{n}
 \left\{s\log(ep/s)+r\log(esT/r)\right\}
\]
with high probability.  The first term is the usual cost of finding the active
predictors.  The second is the cost of finding and estimating the exceptional
task coefficients.  An expected-risk minimax lower bound has the same
dependence on these quantities.  When coefficients are allowed to vary freely
across tasks, the second term becomes of order \(sT\), recovering the usual
group-sparse multitask scale.

We also consider the stronger case in which a large subset of tasks shares the
same full coefficient vector.  Under explicit sample-size conditions, the
pairwise estimator identifies and pools those tasks exactly without knowing the
subset in advance.  The error for the shared coefficient is the usual pooled
estimation error plus a contribution from the remaining tasks.

These results describe two forms of borrowing strength.  The first allows the
sharing pattern to change from predictor to predictor and quantifies the gain
from repeated coefficients.  The second shows when a genuinely shared set of
tasks can be pooled automatically.  The simulations examine both settings,
and the household energy example shows that broad sharing and more dispersed
coefficients can coexist across predictors.

\paragraph{Sparse multitask regression.}
Group penalties are a standard way to select the same predictors across tasks
\citep{yuan2006model,argyriou2008convex,obozinski2010joint,
lounici2009taking,lounici2011oracle}.  We retain this shared-support idea while
also allowing many coefficients of an active predictor to be equal across
tasks.

\paragraph{Coefficient fusion.}
Pairwise coefficient penalties have been used for structured estimation and
task clustering
\citep{tibshirani2005sparsity,hutter2016optimal,sadhanala2016tv,
bellec2017graphslope,jacob2009clustered,kim2009graphguided,
chen2010graphstructured,hallac2015network,she2010sparse,tang2016fused,
ma2017concave,dondelinger2018high,liu2019fused,okazaki2024multi}.
The approach of \citet{tang2016fused} is especially close because it fuses
coefficients separately for each covariate.  Our focus is the case with no prespecified task relationships and on rates determined by the number of predictor-specific
exceptions.

\paragraph{Rates.}
Classical sparse and group-sparse lower bounds account for the unknown active
predictors \citep{raskutti2011minimax,ye2010rate,lounici2011oracle}.  Here one
must also locate the exceptional task coefficients within those predictors,
which produces the additional \(r\log(esT/r)\) term.

Sections~\ref{sec:model-methods}--\ref{sec:shared-task-taskwise} develop the
model, estimation rates, and shared-task pooling result.
Section~\ref{sec:computation} describes computation,
Sections~\ref{sec:simulation}--\ref{sec:real-data-recs} report numerical
results, and the appendices contain proofs and experimental details.

\section{MODEL AND METHODS}
\label{sec:model-methods}

The coefficient matrix can be simple in two ways.  Some predictors may be zero
in every task, and some active predictors may have the same coefficient in
many tasks.  We use \(s\) for the number of active predictors and \(r\) for the
total number of exceptions from predictor-specific shared values.  We use one
penalty for each feature.  Throughout, \(T\ge2\).

\subsection{Multitask Regression}
\label{sec:model-estimator}

For tasks \(t=1,\ldots,T\), observe
\[
        y^{(t)}=X^{(t)}\beta^{*(t)}+e^{(t)},
        \qquad t=1,\ldots,T,
\]
where every task uses the same \(p\) predictors,
\(X^{(t)}\in\mathbb R^{n_t\times p}\), and
\(y^{(t)}\in\mathbb R^{n_t}\).  Write
\[
        B^*=[\beta^{*(1)},\ldots,\beta^{*(T)}]
        \in\mathbb R^{p\times T},
\]
and let \(b_j^*\) denote the vector of coefficients for predictor \(j\).
Intercepts, when included, are treated separately and receive neither penalty.

\begin{assumption}[Gaussian design and noise]
\label{ass:task-specific-design}
The design rows are independent within and across tasks, with
\(x_i^{(t)}\sim N_p(0,\Sigma_t)\).  For fixed constants
\(0<c_0\le C_0<\infty\),
\(c_0I_p\preceq\Sigma_t\preceq C_0I_p\) for every task.
The errors are independent within and across tasks and independent of all
designs, with \(e_i^{(t)}\sim N(0,\sigma_t^2)\), where
\(0<\sigma_t\le\sigma\) and \(\sigma>0\) is a noise upper bound.
\end{assumption}

We allow \(\Sigma_t\) to vary with the task, so the predictor distributions
need not be the same across tasks.  Write \(n_{\min}=\min_t n_t\), and write
\(A\lesssim B\) when \(A\le CB\), where \(C\) depends only on the constants in
Assumption~\ref{ass:task-specific-design}.  We measure estimation error by
\(\|\widehat B-B^*\|_F^2\).

\subsection{Predictor-Specific Sharing}
\label{sec:predictor-specific-sharing}

For predictor \(j\), define
\begin{equation}
\label{eq:predictor-exceptions}
 q_j:=\min_{a\in\mathbb R}
       \|b_j^*-a\mathbf1_T\|_0,
 \qquad j=1,\ldots,p,
\end{equation}
where \(\|\cdot\|_0\) counts nonzero entries.  Thus \(q_j\) is the smallest
number of task coefficients that differ from one shared value.  Equivalently,
\(T-q_j\) is the size of the largest equality group for predictor \(j\).  The
choice is made separately for each predictor, so the exceptional tasks may be
different across predictors.

We summarize the total amount of heterogeneity by
\begin{equation}
\label{eq:total-exceptions}
 r:=\sum_{j=1}^p q_j
   =\min_{a\in\mathbb R^p}
      \|B^*-a\mathbf1_T^\top\|_0.
\end{equation}
We count exceptional coefficients rather than all of the unequal task pairs
they generate.

For the sparse estimator we also use
\[
 S:=\{j:b_j^*\ne0\},\qquad s:=|S|.
\]
Here \(s\) counts active predictors, while \(r\) counts coefficient exceptions
within them.

\subsection{Pairwise Fusion Estimators}
\label{sec:estimators}

We give each task equal weight, rather than weighting tasks by their sample
sizes, and use
\begin{equation}
\label{eq:equal-weight-loss}
 \mathcal L_n(B)
 :=
 \frac{1}{2T}\sum_{t=1}^T
 \frac{\|y^{(t)}-X^{(t)}\beta^{(t)}\|_2^2}{n_t}.
\end{equation}
Because no task ordering or external similarity structure is assumed, we
penalize every pairwise coefficient difference:
\begin{equation}
\label{eq:pairwise-penalty}
 \mathcal J(B)
 :=\sum_{j=1}^p\sum_{1\le t<u\le T}|b_{j,t}-b_{j,u}|.
\end{equation}

Our sparse estimator is
\begin{equation}
\label{eq:mtl_ls_obj}
 \widehat B_{\rm SP}
 \in\arg\min_B
 \left\{
   \mathcal L_n(B)+\lambda\mathcal J(B)+\nu\|B\|_{2,1}
 \right\},
\end{equation}
where \(\|B\|_{2,1}:=\sum_{j=1}^p\|b_j\|_2\).
The group penalty selects predictors, while the pairwise penalty encourages
their coefficients to agree across tasks.  We refer to
\eqref{eq:mtl_ls_obj} as \emph{sparse pairwise fusion}.  Setting
\(\lambda=0\) gives Group Lasso.

When predictor selection is not needed, we use
\begin{equation}
\label{eq:pairwise-fusion-obj}
 \widehat B_{\rm P}
 \in\arg\min_B
 \left\{\mathcal L_n(B)+\lambda\mathcal J(B)\right\}.
\end{equation}
We call this \emph{pairwise fusion}.  This version studies coefficient sharing
without predictor selection and is also used below to study exact pooling of a
shared set of tasks.

\section{MINIMAX-OPTIMAL ESTIMATION}
\label{sec:minimax-estimation}

We next quantify the cost of predictor selection and coefficient heterogeneity.
We assume only that at most \(s\) predictors are active and that there are at
most \(r\) coefficient exceptions in total.  The upper bound separates these
two costs, and the lower bound shows that both are unavoidable.

Fix \(\delta\in(0,\delta_0]\), where \(\delta_0<1\) is numerical.  Constants
in the upper bounds may depend on this fixed confidence level, so
\(\delta\) does not appear in the displayed rates.

\subsection{Upper Bounds for Pairwise Fusion}
\label{sec:pairwise-upper-bounds}

We first study pairwise fusion without predictor selection and then add the
group penalty.

\begin{theorem}[Pairwise fusion]
\label{thm:pairwise-fusion}
Suppose Assumption~\ref{ass:task-specific-design} holds, the penalty includes
all task pairs, and \(1\le r\le p(T-1)\).  Take
\begin{equation}
\label{eq:minimax-fusion-tuning}
        \lambda^2
        =C\frac{\sigma^2}{T^4n_{\min}}
        \log\!\left(\frac{epT}{r}\right).
\end{equation}
Suppose
\begin{equation}
\label{eq:pairwise-sample-size}
 n_{\min}\gtrsim\min\{p,r\}
 \log\!\left(\frac{ep}{\min\{p,r\}}\right)+\log(pT),
 \qquad n_{\min}T\gtrsim p+\log(pT).
\end{equation}
Then, uniformly over all \(B^*\) with at most \(r\) coefficient exceptions,
every minimizer obeys
\begin{equation}
\label{eq:pairwise-dense-rate}
 \|\widehat B_{\rm P}-B^*\|_F^2
 \lesssim
 \frac{\sigma^2}{n_{\min}}
 \left\{p+r\log\!\left(\frac{epT}{r}\right)\right\}
\end{equation}
with probability at least \(1-\delta\).
\end{theorem}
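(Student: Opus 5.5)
We follow the standard route for penalized least squares: a basic inequality, control of the empirical process by the penalty plus a small dense remainder, and a restricted-eigenvalue lower bound on the Gaussian design. Let \(H=\widehat B_{\rm P}-B^*\) with columns \(h^{(t)}\). Optimality of \(\widehat B_{\rm P}\) gives
\[
 \predictionerror
 \le \frac1T\sum_{t=1}^T\frac{\langle X^{(t)\top}e^{(t)},h^{(t)}\rangle}{n_t}
 +\lambda\{\mathcal J(B^*)-\mathcal J(B^*+H)\}.
\]

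\textbf{Step 1: decompose each coefficient row.} For each predictor \(j\), write \(b_j=\bar b_j\mathbf1_T+(b_j-\bar b_j\mathbf1_T)\), that is, the mean direction plus its orthogonal complement. On the complement, the pairwise penalty is a norm. Indeed, for \(v\perp\mathbf1_T\),
\[
 \sum_{t<u}|v_t-v_u|\ge \tfrac{T}{2}\|v\|_1 .
\]
This holds because \(v_t=\frac1T\sum_u(v_t-v_u)\), so \(\sum_t|v_t|\le\frac1T\sum_{t,u}|v_t-v_u|=\frac2T\sum_{t<u}|v_t-v_u|\).

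Let \(Z\) be the matrix with columns \(X^{(t)\top}e^{(t)}/(n_tT)\), and split the noise term by the same decomposition. The mean part is \(\sum_j(\sum_t Z_{jt})\bar h_j\). This is a \(p\)-dimensional Gaussian-type inner product with variance of order \(\sigma^2/(Tn_{\min})\) per coordinate. By Cauchy–Schwarz and a chi-square bound, it is at most \(\sigma\sqrt{p/(Tn_{\min})}\,\|\bar h\|_2\). This produces the \(p\) term: \(\|\bar h\mathbf1^\top\|_F^2=T\|\bar h\|_2^2\), and the bound should be read per task as \(\sigma^2p/n_{\min}\) after normalization.

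The centered part is handled entrywise. A sorted or peeling argument (SLOPE-like), rather than a plain max, controls \(\sum_{j,t}|Z_{jt}-\bar Z_j||\tilde h_{jt}|\). This gives a bound of order
\[
 \frac{\sigma}{T\sqrt{n_{\min}}}\Bigl\{\sqrt{\log(epT/r)}\,\|\tilde H\|_1+\sqrt{r\log(epT/r)}\,\|\tilde H\|_F\Bigr\},
\]
with the first term absorbed by \(\lambda\mathcal J\) up to the constant \(T^2\) scaling encoded in the choice of \(\lambda\).

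\textbf{Step 2: cone condition.} Take \(a_j\) to be the majority value of row \(j\), and let \(E_j\) be its exception set, so \(\sum_j|E_j|\le r\). Pairs within \(E_j^c\) have \(b^*\) differences equal to zero. Using the triangle inequality as in the fused lasso, \(\mathcal J(B^*)-\mathcal J(B^*+H)\) is bounded by the penalty restricted to pairs touching \(E_j\) minus the penalty on pairs inside \(E_j^c\). Since \(T-q_j\ge T/2\), each \(h_{jt}\) with \(t\in E_j\) can be compared with the majority block. This yields \(\lambda\,T\sum_j\sum_{t\in E_j}|\tilde h_{jt}|\) minus \(\lambda\cdot cT\|\tilde H_{E^c}\|_1\).

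With the \(\lambda\) of \eqref{eq:minimax-fusion-tuning}, we obtain
\[
 \text{prediction error}\lesssim \sigma\sqrt{\tfrac{p}{Tn_{\min}}}\|\bar h\|_2+\frac{\sigma}{T\sqrt{n_{\min}}}\sqrt{r\log(epT/r)}\,\|\tilde H\|_F .
\]

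\textbf{Step 3: restricted eigenvalue.} We need
\[
 \frac1T\sum_t\|X^{(t)}h^{(t)}\|_2^2/n_t\gtrsim \frac1T\|H\|_F^2
\]
on the relevant cone: mean rows are arbitrary and the centered part is approximately \(r\)-sparse in \(\ell_1\). We plan a Gaussian comparison (Gordon/Raskutti–Wainwright–Yu type) task by task, uniformly over the cone. The common mean component spans only \(p\) directions pooled across \(nT\) rows, which is why the condition \(n_{\min}T\gtrsim p\) appears. The per-task centered perturbation costs \(\min\{p,r\}\log(ep/\min\{p,r\})\) directions per task, matching \eqref{eq:pairwise-sample-size}. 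A union bound over \(T\) tasks gives the \(\log(pT)\) term.

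Combining Steps 1–3 and solving the quadratic inequality yields \eqref{eq:pairwise-dense-rate}.

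\textbf{Main obstacle.} The main difficulty is Step 3 together with the interaction of the mean and centered components. Individual tasks may have \(n_t<p\), so the restricted eigenvalue bound cannot hold per task for dense \(h^{(t)}\). It must exploit the fact that the dense part is shared across tasks and only the sparse deviations are task specific. Cross terms between \(X^{(t)}\bar h\) and \(X^{(t)}\tilde h^{(t)}\) must be controlled uniformly. We would first try to bound these cross terms via the pooled design concentration, treating \(\bar h\) in a \(p\)-dimensional net of size \(e^{Cp}\) absorbed by \(n_{\min}T\).
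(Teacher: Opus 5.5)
\textbf{There is a genuine gap between your Steps 2 and 3.} Your basic inequality compares \(\widehat B_{\rm P}\) with \(B^*\). As a result, the noise in the unpenalized common direction, \(\langle\bar w,\bar h\rangle\le\sigma\sqrt{p/(Tn_{\min})}\,\|\bar h\|_2\), stays on the right-hand side. The fusion penalty cannot absorb it, because \(\mathcal J\) does not depend on \(\bar h\). Dropping the prediction term therefore gives only
\[
 \|\tilde H\|_1\lesssim\sqrt r\,\|\tilde H\|_F
 +\sqrt{p/\log(epT/r)}\,\|\bar h\mathbf1_T^\top\|_F .
\]
This is not the cone (mean rows arbitrary, centered part approximately \(r\)-sparse) on which you plan to prove the restricted eigenvalue bound.

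A case split does not rescue this. If the mean term dominates, the inequality itself only gives \(\lambda T\sqrt r\,\|\tilde H\|_F\lesssim\sigma\sqrt{p/(Tn_{\min})}\,\|\bar h\|_2\). Even granting \(T\|\bar h\|_2^2\lesssim\sigma^2p/n_{\min}\), this yields only \(\|\tilde H\|_F^2\lesssim\sigma^2p^2/\{n_{\min}r\log(epT/r)\}\), which exceeds the target rate when \(p\gg r\log(epT/r)\). Closing the argument your way would need a curvature bound over a mixed set in which \(\|\tilde H\|_1\) may be of order \(\sqrt p\,\|\bar h\mathbf1_T^\top\|_F\) while \(n_{\min}\ll p\). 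Nothing in your Step 3 addresses that set.

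\textbf{The missing idea is the paper's profiling step.} Because \(\mathcal J\) is invariant to the common component, write \(H=h\mathbf1_T^\top+G\) with \(G\mathbf1_T=0\). The minimizer's \(h\) must minimize the loss for its own \(G\), so \(h=h_N+h_G\), where \(h_N=\widehat\Sigma_\bullet^{-1}\bar w\) and \(h_G=-\widehat\Sigma_\bullet^{-1}T^{-1}\sum_t\widehat\Sigma_tg^{(t)}\). Compare the profiled objective at \(G\) with its value at \(G=0\); equivalently, compare with \(B^*+h_N\mathbf1_T^\top\), which has the same penalty as \(B^*\). What remains is \(Q_X(G)-\langle\zeta,G\rangle\) plus the penalty difference, where \(\zeta_t=T^{-1}(w_t-\widehat\Sigma_th_N)\) is centered and has the same coordinatewise scale as your \(Z_{jt}-\bar Z_j\).

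Now no \(\bar h\)-noise appears, and the clean cone \(\|G\|_1\lesssim\sqrt r\,\|G\|_F\) follows. Lemma~\ref{lem:upper-design-bounds}(ii) then gives \(\|G\|_F^2\lesssim\lambda^2T^4r\). The common error is handled afterwards by \(T\|h_G\|_2^2\lesssim\|G\|_F^2\) and \(T\|h_N\|_2^2\lesssim\sigma^2p/n_{\min}\).

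For that curvature bound, the paper does not use a per-task Gaussian comparison. After a sorted-block reduction, it conditions \(X^{(t)}_{J^c}\) on \(X^{(t)}_J\) for a set \(J\) of \(O(\min\{p,r\})\) predictors and projects out \(X^{(t)}_J\). The common part is then controlled by a pooled residual covariance, which is where \(n_{\min}T\gtrsim p\) enters.

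\textbf{Your Step 2 support-based cone is also incorrect.}
\begin{itemize}
\item For \(h_j=x\mathbf1_{E_j}\), every difference inside \(E_j^c\) vanishes, yet \(\tilde h_{jt}=-q_jx/T\ne0\) on \(E_j^c\).
\item The penalty on pairs touching \(E_j\) is not bounded by \(T\sum_{t\in E_j}|\tilde h_{jt}|\). Take \(T=3\), \(E_j=\{1\}\), \(h_j=(0,1,-1)\): the right side is \(0\) but the touching pairs contribute \(2\).
\end{itemize}
The correct form is \(\ell_1/\ell_2\) rather than support-based. With \(\mathcal A\) the unequal true pairs, \(|\mathcal A|\le Tr\) and \(\|(GD^\top)_{\mathcal A}\|_1\le\sqrt{|\mathcal A|}\,\|GD^\top\|_F=T\sqrt r\,\|G\|_F\), combined with \(T\|G\|_1\le2\|GD^\top\|_1\). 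Your final Step 2 display is still recoverable this way.
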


When only \(s\) predictors are active, the group penalty replaces the dense
\(p\)-dimensional cost by a sparse selection cost.

\begin{theorem}[Sparse pairwise fusion]
\label{thm:sparse-pairwise-fusion}
Suppose Assumption~\ref{ass:task-specific-design} holds, the penalty includes
all task pairs, \(1\le s\le p\), and \(1\le r\le s(T-1)\).  Take
\begin{equation}
\label{eq:minimax-group-fusion-tuning}
 \lambda^2=C\frac{\sigma^2}{T^4n_{\min}}
 \log\!\left(\frac{esT}{r}\right),\qquad
 \nu^2=C\frac{\sigma^2}{T^2n_{\min}}
 \left\{\log\!\left(\frac{ep}{s}\right)
 +\frac rs\log\!\left(\frac{esT}{r}\right)\right\}.
\end{equation}
If
\begin{equation}
\label{eq:minimax-group-sample-size}
        n_{\min}\gtrsim s\log\!\left(\frac{ep}{s}\right)+\log(pT),
\end{equation}
then, uniformly over all \(B^*\) with at most \(s\) active predictors and
\(r\) coefficient exceptions, every minimizer obeys
\begin{equation}
\label{eq:pairwise-sparse-rate}
 \|\widehat B_{\rm SP}-B^*\|_F^2
 \lesssim\frac{\sigma^2}{n_{\min}}
 \left\{s\log\!\left(\frac{ep}{s}\right)
 +r\log\!\left(\frac{esT}{r}\right)\right\}
\end{equation}
with probability at least \(1-\delta\).
\end{theorem}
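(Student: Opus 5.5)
We follow the standard route for penalized M-estimators: a basic inequality, a high-probability bound on the stochastic term in the dual norm of each penalty, a cone condition, and restricted eigenvalues for the Gaussian designs. Write \(H=\widehat B_{\rm SP}-B^*\) with columns \(h^{(t)}\). Optimality of \(\widehat B_{\rm SP}\) gives
\[
 \predictionerror
 \le \frac1T\sum_{t=1}^T\frac{\langle X^{(t)\top}e^{(t)},h^{(t)}\rangle}{n_t}
 +\lambda\{\mathcal J(B^*)-\mathcal J(B^*+H)\}
 +\nu\{\|B^*\|_{2,1}-\|B^*+H\|_{2,1}\}.
\]
For each predictor \(j\), decompose \(h_j=\bar h_j\mathbf1_T+\tilde h_j\), where \(\tilde h_j\perp\mathbf1_T\). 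The noise term then splits into a \emph{mean part} \(\frac1T\sum_j \bar h_j\,g_j\), with \(g_j=\sum_t x_{\cdot j}^{(t)\top}e^{(t)}/n_t\), and a \emph{contrast part} \(\langle G,\tilde H\rangle\).

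The penalties control these two parts separately. For the contrast part, note that on mean-zero vectors the pairwise seminorm \(\sum_{t<u}|v_t-v_u|\) is at least \((T/2)\|v\|_1\). Indeed, \(\sum_{t<u}|v_t-v_u|\ge \sum_{t}\sum_u (v_t-v_u)\operatorname{sign}(v_t)/2 = (T/2)\|v\|_1\) when \(\sum v=0\). So \(\lambda\mathcal J\) dominates \(\lambda T\|\tilde H\|_1/2\) entrywise, and \(\lambda T\asymp \sigma T^{-1}\sqrt{\log(esT/r)/n_{\min}}\) matches the sup-norm of the scaled noise \(G/T\), which has entries of size \(\sigma/(T\sqrt{n_{\min}})\).

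Instead of a crude union bound (which would give \(\log(pT)\)), we use a Slope-type/peeling argument: with probability \(1-\delta\), the \(k\)-th largest entry of the scaled noise is at most \(C\sigma T^{-1}\sqrt{\log(epT/k)/n_{\min}}\). This yields \(\langle G,\tilde H\rangle\le \lambda T\|\tilde H\|_1/4+C\sigma T^{-1}n_{\min}^{-1/2}\sqrt{r\log(esT/r)}\,\|\tilde H\|_F\). Restricting attention to \(S\) and \(\|\tilde H_{S^c}\|\) is handled by the group part below.

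For the mean part, \(\nu\|\cdot\|_{2,1}\) controls \(\sum_j|\bar h_j|\sqrt T\). By the same order-statistic argument over predictors, the group noise contributes \(\sigma T^{-1}\sqrt{s\log(ep/s)/n_{\min}}\,\|\bar h\|_2\sqrt T\). The extra \((r/s)\log\) term inside \(\nu\) is there to absorb the cross term coming from exceptions sitting inside the active groups. Here \(\|b_j^*+h_j\|_2\) is lower bounded through the decomposability of \(\|\cdot\|_{2,1}\) on \(S\) versus \(S^c\).

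Combining the penalty differences, decomposability gives the following bounds:
\begin{itemize}
\item for \(\mathcal J\), using the representation \(B^*=a\mathbf1^\top+D\) with \(\|D\|_0\le r\), one has \(\mathcal J(B^*)-\mathcal J(B^*+H)\le T(\|H_{R}\|_1-\|H_{R^c}\|_1)+\) terms involving \(\bar h\);
\item for the group norm, one has \(\|B^*\|_{2,1}-\|B^*+H\|_{2,1}\le \|h_S\|-\|h_{S^c}\|\).
\end{itemize}
Together these place \(H\) in a cone where \(\|H\|_1\)-type quantities are bounded by \(\sqrt{r}\|H\|_F+\sqrt{sT}\|\bar h_S\|\)-type terms.

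We then need a restricted eigenvalue lower bound,
\[
\frac1T\sum_t\|X^{(t)}h^{(t)}\|^2/n_t\gtrsim \|H\|_F^2/T,
\]
uniformly over this cone. We obtain it task by task from Gaussian concentration. Raskutti–Wainwright–Yu gives \(\|X^{(t)}v\|^2/n_t\ge c\|v\|^2-C\sqrt{\log p/n_t}\,\|v\|_1^2\); alternatively, Gordon's inequality over the \(s\)-sparse-plus-cone set gives the bound directly. This is where \(n_{\min}\gtrsim s\log(ep/s)+\log(pT)\) enters. Plugging the RE bound in and solving the resulting quadratic inequality for \(\|H\|_F\), with the \(1/T\) normalization cancelling, yields \(\|H\|_F^2\lesssim \sigma^2 n_{\min}^{-1}\{s\log(ep/s)+r\log(esT/r)\}\).

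The main obstacle is the RE step. The cone mixes a per-predictor mean direction, which is group sparse, with an entrywise sparse contrast direction, and \(\tilde h^{(t)}\) for a single task need not be sparse in \(p\). The RE condition must hold task by task with \(n_t\) samples while \(H\) is only controlled jointly. First I would bound \(\|\tilde H_{S^c}\|\) through the group penalty so that each column is effectively supported on \(S\) plus an \(\ell_1\)-small remainder. Then I would apply the \(\ell_1\)-corrected lower bound per task and sum, using \(\sum_t\|h^{(t)}\|_1^2\le\) the cone bound with Cauchy–Schwarz.
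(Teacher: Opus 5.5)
\textbf{The gap: noise on the contrast directions of the inactive predictors.} This step is exactly where the $\log(esT/r)$ factor and the form of $\nu$ come from.

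Write $W=Z(I-P)$ and $\tau=\sigma/(T\sqrt{n_{\min}})$. Your peeling bound $\langle W,\tilde H\rangle\le\frac{\lambda T}{4}\|\tilde H\|_1+C\tau\sqrt{r\log(esT/r)}\,\|\tilde H\|_F$ is false over all $p$ predictors. The threshold $\lambda T\asymp\tau\sqrt{\log(esT/r)}$ does not grow with $p$. So the number of entries of $W$ above it is of order $pT\,(r/(esT))^{c}$, which is unbounded in $p$ (take $s=r=1$, $T$ fixed, $p$ huge). The thresholded remainder is then far larger than $\tau\sqrt{r\log(esT/r)}$ in Frobenius norm.

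Carried out correctly, the order-statistic argument over all $pT$ entries only supports the level $\log(epT/r)$. That is the tuning of Theorem~\ref{thm:pairwise-fusion}, and it gives the weaker term $r\log(epT/r)$.

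Deferring $S^c$ to ``the group part'' does not fix this. Your group part only treats the mean noise $ZP$. A full contrast row $W_j$ has norm of order $\tau\sqrt T$, which is far above $\nu\asymp\tau\{\log(ep/s)+(r/s)\log(esT/r)\}^{1/2}$ when $T$ is large. So $\nu\|H_{S^c}\|_{2,1}$ cannot absorb $\langle W_{S^c},\tilde H_{S^c}\rangle$ either. On $S^c$, neither penalty suffices on its own.

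\textbf{The missing idea: two-level thresholding.} This is the content of Lemma~\ref{lem:upper-noise-bounds}(ii), and it makes the two penalties cooperate.
\begin{enumerate}
\item Soft-threshold $W$ entrywise at $\theta\asymp\tau\sqrt{\log(esT/r)}$. The small part is absorbed by fusion through your own centering inequality, $\|H(I-P)\|_1\le(2/T)\|HD^\top\|_1$.
\item For the remainder $R=ZP+S_\theta(W)(I-P)$, take a union bound over row sets $|J|\le s$ and entry sets of size $M\asymp d/\log(esT/r)$, where $d=s\log(ep/s)+r\log(esT/r)$. This gives $\sup_{|J|\le s}\|R_J\|_F\lesssim\tau\sqrt d$.
\item Hence fewer than $s$ rows of $R$ exceed $\gamma\asymp\tau\sqrt{d/s}$. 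Those rows are handled by Cauchy--Schwarz, and all other rows by $\nu\|H\|_{2,1}$.
\end{enumerate}

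This is the real role of the $(r/s)\log(esT/r)$ term in $\nu^2$. Its other role is to ensure $\lambda T\sqrt r\lesssim\nu\sqrt s$, so that the unequal-pair term $\lambda\|(HD^\top)_{\mathcal A}\|_1\le\lambda T\sqrt r\,\|H\|_F$ can be absorbed. That bound comes from Cauchy--Schwarz over at most $Tr$ unequal pairs. It is cleaner than your entrywise decomposition, which in any case should contain no $\bar h$ terms, since $\mathcal J$ is invariant under adding $c\mathbf1_T^\top$.

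\textbf{The RE step is then routine.} With $|\langle Z,H\rangle|\le\frac\nu8\|H\|_{2,1}+\frac\lambda8\|HD^\top\|_1+C\nu\sqrt s\,\|H\|_F$ in hand, you can discard the fusion terms. You land in the plain group cone $\|H\|_{2,1}\lesssim\sqrt s\,\|H\|_F$, with no mean/contrast split needed.
\begin{itemize}
\item The paper uses sparse eigenvalues over $Ms$-row supports, uniformly in $t$, together with a sorted row-block argument.
\item Your per-task route also works, because Minkowski gives $\sum_t\|h^{(t)}\|_1^2\le\|H\|_{2,1}^2$. Note that the squared correction term is $(\log p/n_t)\|v\|_1^2$. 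The price is needing $n_{\min}\gtrsim s\log p$ rather than $s\log(ep/s)$.
\end{itemize}
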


\begin{remark}[Where the rate comes from]
\label{rem:upper-bound-interpretation}
The sparse rate has two parts,
\[
 \frac{\sigma^2}{n_{\min}}s\log(ep/s)
 \quad\text{and}\quad
 \frac{\sigma^2}{n_{\min}}r\log(esT/r).
\]
The first is the familiar cost of finding the active predictors.  For the
second, suppose for intuition that each active predictor has about \(q\)
exceptions, so \(r=sq\).  The term becomes \(sq\log(eT/q)\): there are \(sq\)
departures to estimate and a logarithmic cost for finding their locations.  If
\(q\) is of order \(T\), this becomes order \(sT\), the usual group-sparse
multitask scale \citep{lounici2009taking,lounici2011oracle}.

A single exceptional coefficient creates many unequal task pairs, but the
rate charges it once through \(r\), not once for every pair.  This agrees with
earlier results for pairwise difference penalties
\citep{hutter2016optimal,bellec2017graphslope}.  The theoretical choice of
\(\lambda\) uses \(r\); in practice, our numerical studies choose it by
cross-validation.
\end{remark}

\subsection{Matching Minimax Lower Bound}
\label{sec:minimax-lower-bounds}

To see whether a smaller rate is possible, we consider all estimators under
the same two restrictions on the true coefficient matrix.  For integers
\(1\le s\le p\) and \(1\le r\le s(T-1)\), define
\begin{equation}
\label{eq:predictor-sharing-class}
 \Theta_{\rm share}(s,r):=\!\biggl\{B\in\mathbb R^{p\times T}:
 \left|\{j:b_j\ne0\}\right|\le s,\quad
 \sum_{j=1}^p\min_{a\in\mathbb R}
 \|b_j-a\mathbf1_T\|_0\le r\biggr\}.
\end{equation}
Both the active predictors and the predictor-specific exception locations are
unknown.

\begin{theorem}[Minimax lower bound]
\label{thm:minimax-lower-bounds}
Suppose \(n_t=n\) for all \(t\), the design rows are independent
\(N(0,I_p)\), and the errors are independent \(N(0,\sigma^2)\).
Let \(1\le s\le p\) and \(1\le r\le s(T-1)\).  There is a universal
constant \(c>0\) such that
\begin{equation}
\label{eq:predictor-sharing-lower-bound}
 \inf_{\widehat B}\sup_{B^*\in\Theta_{\rm share}(s,r)}
 \mathbb E_{B^*}\|\widehat B-B^*\|_F^2
 \ge c\frac{\sigma^2}{n}
 \left\{s\log\!\left(\frac{ep}{s}\right)
 +r\log\!\left(\frac{esT}{r}\right)\right\}.
\end{equation}
\end{theorem}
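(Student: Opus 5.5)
We prove the two terms of \eqref{eq:predictor-sharing-lower-bound} separately. Since the maximum of two lower bounds is at least half their sum, it suffices to show
\[
\inf_{\widehat B}\sup_{\Theta_{\rm share}(s,r)}\mathbb E\|\widehat B-B^*\|_F^2\gtrsim\frac{\sigma^2}{n}s\log(ep/s)
\quad\text{and}\quad
\gtrsim\frac{\sigma^2}{n}r\log(esT/r).
\]
Each bound comes from Fano's inequality applied to a packing of a subclass. Because the designs are Gaussian with identity covariance and are independent of $B^*$, the Kullback--Leibler divergence between the joint laws of $(X,y)$ under $B$ and $B'$ is
\[
\sum_t \frac{n\|\beta^{(t)}-\beta'^{(t)}\|_2^2}{2\sigma^2}=\frac{n}{2\sigma^2}\|B-B'\|_F^2 .
\]
So everything reduces to constructing Frobenius packings of the right cardinality and radius. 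We then use the standard reduction: if $M$ hypotheses are pairwise separated by $2\rho$ in Frobenius norm and have KL diameter at most $\alpha\log M$ with $\alpha<1/4$, then the minimax risk is at least $c\rho^2$.

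\textbf{First term (predictor selection).} Use the subclass of matrices $B=a\mathbf 1_T^\top$ with $a\in\{0,\gamma\}^p$ and $\|a\|_0=s$. These matrices have $r=0$ exceptions and $s$ active predictors, so they lie in $\Theta_{\rm share}(s,r)$. By the Varshamov--Gilbert bound for sparse binary vectors, there is a subfamily of size $\log M\gtrsim s\log(ep/s)$ with pairwise Hamming distance at least $s/2$. Two members then satisfy $\|B-B'\|_F^2\asymp T\gamma^2 s$. Choose $\gamma^2\asymp \sigma^2\log(ep/s)/(nT)$, so that the KL diameter $nT\gamma^2 s/\sigma^2$ is a small multiple of $s\log(ep/s)$. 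Fano then gives a risk of order $T\gamma^2 s\asymp\sigma^2 s\log(ep/s)/n$. Pooling over $T$ tasks does not help here, because every task shares the same vector.

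\textbf{Second term (exception locations).} Take the $s$ predictors $j=1,\dots,s$, with base value $a_j=\gamma_0\ne0$ so that all of them are active. For the other $p-s$ predictors the base value is $0$. Place exceptions by setting $b_{j,t}=\gamma_0+\gamma\,\omega_{j,t}$, where $\omega\in\{0,1\}^{s\times T}$ has exactly $r$ ones. To keep the base value the most common one, also restrict each row to have at most $(T-1)/2$ ones.
\begin{itemize}
\item If $r\le s(T-1)/2$, distribute the ones as $q=\lceil r/s\rceil$ or fewer per row, with $q\le (T-1)/2$. The number of such configurations satisfies $\log\gtrsim r\log(esT/r)$, and a sparse Varshamov--Gilbert argument inside this constrained set again yields a subfamily with Hamming separation at least $r/2$.
\item If $r>s(T-1)/2$, we have $r\asymp sT$ and the target is $\sigma^2 sT/n$. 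This follows from the full cube $\{0,1\}^{s\times\lfloor (T-1)/2\rfloor}$ via the plain Varshamov--Gilbert bound; adjusting the base value keeps us inside the class.
\end{itemize}
The Frobenius distance is $\asymp r\gamma^2$ and the KL divergence is $\lesssim nr\gamma^2/\sigma^2$. Choosing $\gamma^2\asymp\sigma^2\log(esT/r)/n$ gives a risk of order $\sigma^2 r\log(esT/r)/n$.

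\textbf{Main obstacle.} The delicate step is the packing count in the second construction under the per-row cap $q_j\le (T-1)/2$, together with verifying that each packed matrix really has $\min_a\|b_j-a\mathbf1\|_0$ equal to its number of ones and no more than $r$ in total. I would handle the count by fixing the row allocation (all rows get $\lfloor r/s\rfloor$ or $\lceil r/s\rceil$ ones) and counting per row. This gives
\[
\log\prod_j\binom{T}{q_j}\gtrsim r\log(eT/q)=r\log(esT/r).
\]
I would then extract a well-separated subfamily by the greedy volume argument: Hamming balls of radius $r/4$ contain at most $\binom{sT}{r/4}2^{r/4}$ points, which is small relative to the total count. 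The edge regime where $r/s$ is close to $T/2$ is absorbed into the dense case above.
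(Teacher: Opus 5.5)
Your architecture matches the paper's: the KL identity, separate Fano packings for the two terms, a second packing built from a base value plus $+\gamma$ exceptions capped at half of each row, and the combination $\max\{a,b\}\ge(a+b)/2$. The gap is in the counting step you flag as the main obstacle, and it appears when $r<s$. For $r\ge s$ your per-row count is adequate up to constants. For $r<s$, however, $\lfloor r/s\rfloor=0$ and $q=\lceil r/s\rceil=1$, so fixing the row allocation fixes which $r$ predictors carry an exception. The whole constrained family then has $\prod_j\binom{T}{q_j}=T^r$ elements, so any packing drawn from it has $\log M\le r\log T$, and Fano applied to it yields at most order $\sigma^2r\log T/n$. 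Your identity $r\log(eT/q)=r\log(esT/r)$ therefore fails here. It drops the term $r\log(s/r)$, which dominates when $s\gg rT$ (e.g.\ $r=1$, $T=3$, $s\to\infty$). The missing idea is that the exceptional predictors are themselves unknown. As in the paper's case $r_0\le s$, take a constant-weight code on $r$-subsets of $[s]$, with log-cardinality $\gtrsim r\log(s/r)$ when $r\le s/2$. Multiply it by a $T$-ary code for the exceptional task in each selected predictor, which gives $\log M\gtrsim r\{\log(s/r)+\log T\}\gtrsim r\log(esT/r)$. Alternatively, for the theorem as stated, you could note that $r\log(s/r)\le s/e\le s\log(ep/s)$ when $r<s$, so the lost piece is absorbed by the first term. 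As written, though, your standalone second-term bound is not proved.

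Two smaller repairs are also needed. In the first term, a family of exact $s$-subsets with pairwise separation $s/2$ requires $s$ bounded away from $p$; for $s=p$ it has a single element. When $s>p/2$, $\log(ep/s)$ is bounded, and a hypercube on $s$ fixed predictors (as in the paper's Assouad step) gives $\sigma^2 s/n$. In the second term, the row cap $(T-1)/2$ is too strict. For $T=2$ it permits no exceptions, so both of your bullets are empty, and for even $T$ your $q=\lceil r/s\rceil$ can exceed it. Use the cap $\lfloor T/2\rfloor$, as the paper does. Ties are harmless: a row with $k$ entries equal to $\gamma_0+\gamma$ and the rest equal to $\gamma_0$ has $\min_a\|b_j-a\mathbf 1_T\|_0=\min\{k,T-k\}$.
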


The proof is given in Appendix~\ref{sec:lower-bound-proof}.

\begin{remark}[Interpretation]
\label{rem:minimax-interpretation}
In the balanced isotropic model, the sparse upper and lower bounds have the
same order.  The term \(s\log(ep/s)\) is the usual price of not knowing the
active predictors
\citep{ye2010rate,raskutti2011minimax,bellec2018slope}, while
\(r\log(esT/r)\) is the additional price of not knowing which task
coefficients differ from their predictor-specific shared values.  Thus partial sharing can be substantially easier than unrestricted multitask
regression when \(r\ll sT\), and this advantage shrinks as \(r\) approaches
order \(sT\).

The upper bound is stated with high probability, while the lower bound is for
expected risk.  Matching here refers to their dependence on
\((n,p,T,s,r)\) at a fixed confidence level.
\end{remark}

\section{POOLING A SHARED SET OF TASKS}
\label{sec:shared-task-taskwise}

The preceding theory allows the exceptional tasks to differ from predictor to
predictor.  It therefore does not require any two full task coefficient
vectors to be identical.  We now consider the stronger situation in which a
large set of tasks truly shares one coefficient vector and ask whether the
same estimator finds that equality automatically.

\subsection{A Shared Set of Tasks}
\label{sec:shared-task-special-case}

Let \(\mathcal C\subseteq\{1,\ldots,T\}\) contain \(T-q\) tasks, where
\(0\le q\le T/2\), and suppose
\[
 \beta^{*(t)}=
 \begin{cases}
 \beta_0^*,&t\in\mathcal C,\\
 \beta_0^*+\delta^{*(t)},&t\in\mathcal C^c.
 \end{cases}
\]
Thus the tasks in \(\mathcal C\) share the same full coefficient vector
\(\beta_0^*\), while the remaining \(q\) tasks may differ.  The shared vector
can be dense.  Let
\[
 s_{\Delta}=\left|\bigcup_{u\in\mathcal C^c}
                    \operatorname{supp}(\delta^{*(u)})\right|
\]
be the number of predictors on which at least one outside task differs.
The deviations may involve different predictors and have different
magnitudes, and we impose no separation from the shared tasks.  For \(q=0\), set \(s_{\Delta}=0\).  This setting
implies \(r\le q s_{\Delta}\) in the preceding predictor-specific formulation.

\subsection{Exact Pooling of the Shared Tasks}
\label{sec:shared-task-pooling}

If \(\mathcal C\) were known, one could pool its observations to estimate
\(\beta_0^*\) while fitting the remaining tasks separately.  The next theorem
shows that pairwise fusion can recover this behavior without being given
\(\mathcal C\).

\begin{theorem}[Exact pooling of shared tasks]
\label{thm:shared-task-pooling}
Under Assumption~\ref{ass:task-specific-design} and the shared-task setting
above, assume
\begin{equation}
\label{eq:shared-task-pooling-conditions}
 T\gtrsim q\sqrt{s_{\Delta}},\qquad
 n_{\min}\gtrsim(1+s_{\Delta}q)\log(epT),\qquad
 n_{\min}T\gtrsim p\bigl(1+q\sqrt{s_{\Delta}}\bigr).
\end{equation}
Choose
\[
 \lambda^2=
 C_\lambda\frac{\sigma^2}{T^4n_{\min}}
 \log(epT)
\]
for a sufficiently large fixed numerical constant \(C_\lambda\).  Then,
with probability at least \(1-(pT)^{-2}\), every pairwise-fusion minimizer has
\(\widehat\beta_{\rm P}^{(t)}=\widehat\beta_{\rm P}^{(u)}\) for all
\(t,u\in \mathcal{C}\) and satisfies
\begin{equation}
\label{eq:shared-task-error}
 \max_{t\in\mathcal C}
 \|\widehat\beta_{\rm P}^{(t)}-\beta^{*(t)}\|_2^2
 \lesssim\frac{\sigma^2}{n_{\min}T}
 \left[p+\log T+\frac{s_{\Delta}q^2}{T}\log(epT)\right].
\end{equation}
Moreover, if \(q\ge1\), the remaining tasks satisfy
\begin{equation}
\label{eq:outside-task-average-error}
 \frac1q\sum_{u\in\mathcal C^c}
 \|\widehat\beta_{\rm P}^{(u)}-\beta^{*(u)}\|_2^2
 \lesssim\frac{\sigma^2}{n_{\min}}
 \left\{s_{\Delta}\log(epT)+\frac{p+\log T}{T}\right\}.
\end{equation}
\end{theorem}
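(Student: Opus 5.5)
We will prove Theorem~\ref{thm:shared-task-pooling} by a primal--dual witness argument. First we solve a restricted (oracle) problem in which the tasks in \(\mathcal C\) are forced to share one vector, and then we certify that this restricted solution satisfies the KKT conditions of the full problem \eqref{eq:pairwise-fusion-obj}.

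\emph{Restricted problem.} Concretely, minimize \(\mathcal L_n(B)+\lambda\mathcal J(B)\) over matrices with \(\beta^{(t)}=\beta_0\) for all \(t\in\mathcal C\). In this problem the pairwise penalty among \(\mathcal C\) vanishes. The remaining penalty is
\[
(T-q)\sum_{u\in\mathcal C^c}\|\beta^{(u)}-\beta_0\|_1+\sum_{u<v\in\mathcal C^c}\|\beta^{(u)}-\beta^{(v)}\|_1,
\]
which is a weighted fused lasso on \(q\) deviations \(\delta^{(u)}=\beta^{(u)}-\beta_0\), with effective weight \(\lambda(T-q)\asymp\lambda T\).

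\emph{Error of the restricted solution.} We then bound the restricted solution's error. The pooled Gram matrix \(\frac1T\sum_{t\in\mathcal C}X^{(t)\top}X^{(t)}/n_t\) has eigenvalues in \([c,C]\) once \(n_{\min}T\gtrsim p\). This is used for \(\beta_0\), whose noise term \(\frac1T\sum_{t\in\mathcal C}X^{(t)\top}e^{(t)}/n_t\) has squared norm \(\lesssim\sigma^2(p+\log T)/(n_{\min}T)\). Each deviation \(\delta^{(u)}\) is supported on \(s_\Delta\) coordinates, so a lasso-type cone argument gives
\[
\|\widehat\delta^{(u)}-\delta^{*(u)}\|_2^2\lesssim s_\Delta\lambda^2T^4 n_{\min}/\sigma^2\cdot\sigma^2/n_{\min}\asymp\sigma^2 s_\Delta\log(epT)/n_{\min}.
\]
This requires restricted eigenvalues of each single-task design on \(s_\Delta\)-sparse cones, which follow from \(n_{\min}\gtrsim s_\Delta q\log(epT)\). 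The \(q\) deviated tasks perturb \(\beta_0\) through the cross terms, each contributing \(O(\sqrt{s_\Delta}\lambda T)\) in sup-norm per task. Summing the \(q/T\) fraction gives the term \(s_\Delta q^2\log(epT)/T^2\) in \eqref{eq:shared-task-error}. The outside bound \eqref{eq:outside-task-average-error} then follows by adding the \(\beta_0\) error to the deviation errors.

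\emph{Dual certificate.} Next we construct subgradients for the full problem. For each coordinate \(j\) and pair \(t,u\in\mathcal C\) we need \(z_{j,tu}\in[-1,1]\) with \(z_{j,ut}=-z_{j,tu}\). These must satisfy, for every \(t\in\mathcal C\),
\[
\lambda\sum_{u\in\mathcal C}z_{j,tu}=g_{j,t},
\]
where \(g_{t}=-\frac1T X^{(t)\top}(y^{(t)}-X^{(t)}\widehat\beta_0)/n_t-\lambda\sum_{v\in\mathcal C^c}\operatorname{sign}\text{-part}_{j,tv}\) collects the residual gradient and the outside-pair subgradients. The restricted KKT guarantees \(\sum_{t\in\mathcal C}g_t=0\). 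We take the standard complete-graph choice \(z_{j,tu}=(g_{j,t}-g_{j,u})/(\lambda(T-q))\). This is feasible provided
\[
\max_{t}|g_{j,t}|\le\lambda(T-q)/2 .
\]

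\emph{Verifying feasibility (main obstacle).} It remains to check this bound uniformly over \(j\) and \(t\), and we expect this to be the main obstacle. The gradient \(g_{j,t}\) splits into three parts. The pure noise part has size \(\frac{\sigma}{T}\sqrt{\log(pT)/n_{\min}}\), and it is dominated by \(\lambda T\asymp\frac{\sigma}{T}\sqrt{C_\lambda\log(epT)/n_{\min}}\) for \(C_\lambda\) large. The estimation-error part \(\frac1T\hat\Sigma_t(\widehat\beta_0-\beta_0^*)\) has sup-norm controlled via the \(\beta_0\) bound together with \(n_{\min}\gtrsim\log(pT)\). The outside-pair subgradient sum is at most \(\lambda q\), which is \(\le\lambda T/8\) because \(T\gtrsim q\sqrt{s_\Delta}\ge q\). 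The delicate point is the second part. The naive bound \(\|\hat\Sigma_t h\|_\infty\) with \(h=\widehat\beta_0-\beta_0^*\) dense is too large, so we will instead split \(\hat\Sigma_t h=\Sigma_t h+(\hat\Sigma_t-\Sigma_t)h\), noting that the \(\Sigma_t h\) component is where the condition \(n_{\min}T\gtrsim p(1+q\sqrt{s_\Delta})\) enters. We then bound the fluctuation by Gaussian concentration conditional on \(h\), using a leave-one-task-out version of \(\widehat\beta_0\) to decouple \(h\) from \(X^{(t)}\).

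Once the certificate is feasible, strict dual feasibility together with strict convexity of the loss restricted to the shared subspace gives that every minimizer equals the restricted solution, yielding exact pooling and the stated bounds.
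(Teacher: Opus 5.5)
Your architecture matches the paper's. You solve the problem constrained to be constant on $\mathcal C$, then certify it with the complete-graph choice $z_{j,tu}=(g_{j,t}-g_{j,u})/\{\lambda(T-q)\}$. This is exactly Lemma~\ref{lem:complete-cluster-fusion}, whose condition is $\max_{t\in\mathcal C}\|r_t-\bar r\|_\infty<T\lambda m/2$.

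The gap is in verifying that condition for the common-vector error $h=\widehat\beta_0-\beta_0^*$. You say the $\Sigma_t h$ component is controlled by $n_{\min}T\gtrsim p(1+q\sqrt{s_\Delta})$. The only way that condition can enter is through $\|\Sigma_t h\|_\infty\le C_0\|h\|_2$, and this is too weak. The dense noise part of $h$ has $\|h\|_2^2\asymp\sigma^2p/(n_{\min}T)$, while the threshold is $T\lambda m\asymp\sqrt{C_\lambda}\,\sigma\sqrt{\log(epT)/n_{\min}}$. You would therefore need $p\lesssim C_\lambda T\log(epT)$. No hypothesis implies this, and it fails in the regime the theorem targets, $n_{\min}\ll p\lesssim n_{\min}T$. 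For example, $q=0$, $T$ a large fixed constant and $n_{\min}=p\to\infty$ satisfies every assumption. Your leave-one-task-out step is also not carried out. For the part of $h$ driven by the fitted outside deviations, it would require a stability analysis of a nonsmooth fused problem, which is the whole difficulty.

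The missing idea is to profile out the common vector. Write $h=h_N+h_G$, where $h_N=\widehat\Sigma_\bullet^{-1}\bar w$ is linear in the noise and $h_G=-\widehat\Sigma_\bullet^{-1}T^{-1}\sum_{u\in\mathcal C^c}\widehat\Sigma_u g^{(u)}$.
\begin{itemize}
\item The vector $w_t-\widehat\Sigma_t h_N=T\zeta_t$ is, conditionally on the designs, Gaussian and obtained through the projection $I-\Pi$. Each coordinate therefore has sub-Gaussian norm $\lesssim\sigma/\sqrt{n_{\min}}$. A union bound over the $pT$ coordinates gives order $\sigma\sqrt{L/n_{\min}}$, which a large $C_\lambda$ absorbs. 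No split into $\Sigma_t$ and $\widehat\Sigma_t-\Sigma_t$, and no decoupling, is needed.
\item The $h_G$ part is bounded deterministically, so the bound remains valid for the data-dependent $h_G$: $\|(\widehat\Sigma_t-\widehat\Sigma_{\mathcal C})h_G\|_\infty\lesssim\sqrt{1+p/n_{\min}}\,\|h_G\|_2$.
\item This is combined with the refined bound $\|h_G\|_2^2\lesssim\lambda^2T^2s_\Delta q^2(1+p/n_{\min})$. That bound follows from $A^2\preceq\|A\|_{\rm op}A$ and $\max_t\|\widehat\Sigma_t\|_{\rm op}\lesssim1+p/n_{\min}$.
\end{itemize}
Together these give the ratio $q\sqrt{s_\Delta}/T+pq\sqrt{s_\Delta}/(n_{\min}T)$. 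This term, not $\Sigma_t h$, is where $T\gtrsim q\sqrt{s_\Delta}$ and $n_{\min}T\gtrsim pq\sqrt{s_\Delta}$ enter. The same $(1+p/n_{\min})$ factor is missing from your heuristic for the $s_\Delta q^2/T$ term.

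Profiling also supplies what your restricted-error sketch omits. You need curvature of the profiled quadratic $\mathcal Q_X(G)$ after the dense, unpenalized common vector has been fitted. Lemma~\ref{lem:profiled-pairwise-curvature} provides this, using $m\ge T/2$. The joint cone argument bounds only $\|G\|_F^2\lesssim\lambda^2T^4s_\Delta q$, not each deviation separately; this is enough for the average bound.

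Two smaller points:
\begin{itemize}
\item You need not charge the outside-pair sum as $\lambda q$. With copied subgradients it equals $\bar r_j/T$, so $g_{j,t}=-(r_{j,t}-\bar r_j)/T$, and you should bound this centered residual directly.
\item Do not claim that every minimizer equals the restricted one, since the objective need not be strictly convex when $n_u<p$. The strict gap on internal pairs shows that every minimizer is constant on $\mathcal C$, hence a constrained minimizer. The constrained error bounds are uniform over all constrained minimizers, so they apply.
\end{itemize}
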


The proof is given in Appendix~\ref{sec:taskwise-proof}.

\begin{remark}[What pooling buys]
\label{rem:taskwise-interpretation}
Every minimizer gives the same coefficient vector to all tasks in
\(\mathcal C\).  If \(\mathcal C\) were known, pooling its observations would
give an error of order
\[
        \frac{\sigma^2(p+\log T)}{N_{\mathcal C}},
        \qquad N_{\mathcal C}=\sum_{u\in\mathcal C}n_u.
\]
When task sample sizes are comparable, this matches the first term in
\eqref{eq:shared-task-error}.  The extra term comes from estimating the
remaining tasks at the same time, and it is no larger than the pooled term
whenever
\begin{equation}
\label{eq:taskwise-oracle-comparison}
 \frac{s_\Delta q^2}{T}\log(epT)\lesssim p+\log T.
\end{equation}

This result is useful when a single task cannot estimate a dense
\(p\)-dimensional vector well.  If \(q\) and \(s_{\Delta}\) are bounded, the
sufficient conditions reduce, up to logarithms, to
\[
        n_{\min}\gtrsim\log(epT),\qquad n_{\min}T\gtrsim p.
\]
Thus the shared vector can be estimated from pooled information even when
\(n_{\min}\ll p\).

For a task outside \(\mathcal C\), the error combines the shared-vector error
with the error in its sparse departure, which gives
\eqref{eq:outside-task-average-error}.  The same proof also gives
\[
 \max_{u\in\mathcal C^c}
 \|\widehat\beta_{\rm P}^{(u)}-\beta^{*(u)}\|_2^2
 \lesssim\frac{\sigma^2}{n_{\min}}
 \left\{s_{\Delta}q\log(epT)+\frac{p+\log T}{T}\right\}.
\]
The theorem guarantees equality within \(\mathcal C\); it does not require
the outside tasks to remain separated from \(\mathcal C\), which would need an
additional separation condition.
\end{remark}

\section{COMPUTATION}
\label{sec:computation}

The penalty contains \(T(T-1)/2\) pairwise differences per predictor, but
these differences do not need to be formed explicitly.  We use the
sorting representation and isotonic-regression proximal map of
\citet{lin2019clustered}.  Their clustered-Lasso problem uses the same pairwise
penalty for one coefficient vector.  We apply its proximal step separately to
each predictor, followed by group shrinkage when \(\nu>0\).

To compute our estimator, write \eqref{eq:mtl_ls_obj} as \(f(B)+g(B)\).  We
use FISTA with adaptive restart and step size \(\eta=0.98/L\), where \(L\) is
the largest eigenvalue of the Hessian of \(f\):
\[
 Z^k=Y^k-\eta\nabla f(Y^k),\qquad
 B^{k+1}=\operatorname{prox}_{\eta g}(Z^k).
\]
For the coefficient vector \(z_j\) of predictor \(j\), the fusion-only proximal map of
\citet{lin2019clustered} sorts
\(z_{j,(1)}\leq\cdots\leq z_{j,(T)}\), applies isotonic regression to
\(z_{j,(i)}-\eta\lambda(2i-T-1)\), and restores the original order; denote the
result by \(\widetilde b_j\).  The proximal map needed here is
\[
 \operatorname{prox}_{\eta g_j}(z_j)
 =\left(1-\frac{\eta\nu}{\|\widetilde b_j\|_2}\right)_+
   \widetilde b_j.
\]
The second step is ordinary group shrinkage.  The composition is
exact because the fusion penalty is convex and positively homogeneous, so its
subdifferential is unchanged when a nonzero vector is rescaled along a
positive ray.  This identity is used only for computation; the algorithmic
contribution is not a focus of the paper.  Each predictor update costs
\(O(T\log T)\), dominated by sorting, rather than explicitly manipulating all
\(T(T-1)/2\) pairwise differences.  Group Lasso uses the same iteration with
\(\lambda=0\), while the Lasso baselines use coordinate descent.  We declare a fused fit converged only when both the relative iterate change
and a normalized proximal fixed-point residual fall below the prescribed
tolerance.

\section{SIMULATIONS}
\label{sec:simulation}
\label{subsec:simulation-structure}

We vary the fraction of task coefficients that depart from a shared value
and measure the resulting estimation error.  We select \(s\) active predictors.
For each active predictor, start with one random-sign coefficient
shared by all \(T\) tasks, then select a fraction \(\alpha\) of its task
positions uniformly at random and add independent \(N(0,4)\) deviations.
The selected positions are independent across predictors; inactive predictors stay
zero.  At \(\alpha=0\), all tasks share one coefficient vector.  At
\(\alpha=1\), each active predictor has distinct coefficients almost surely.
Thus the experiment moves from full sharing to no exact coefficient sharing,
without imposing one shared set of tasks.

Table~\ref{tab:sharing-results} gives the two dimension regimes and the seven
departure fractions.  Designs are Gaussian with
\(\Sigma_{jk}=0.5^{|j-k|}\), and the noise standard deviation is \(1.2\).
We normalize the coefficient vectors of the active predictors to equal energy
and then rescale the full matrix
to keep the average population signal-to-noise ratio at \(4\).  This preserves
coefficient equalities, with exception count
\(r=s\min\{\alpha T,T-1\}\).  Each repetition uses nested departure
locations and the same designs and noise across fractions.

The comparison methods use separate fits, complete pooling, or a shared active
support.  The fused methods learn coefficient equalities from the data.
Within each setting, all penalized methods are tuned by matched five-fold
prediction cross-validation on an independent pilot dataset, then evaluated
on 30 new datasets with those penalties fixed.  The table reports only
\(\|\widehat B-B^*\|_F^2\), summed over all predictors and tasks, as a mean
and Monte Carlo standard error.  Appendix~\ref{subsec:random-departure-setup}
gives the generation and tuning details.

\begin{table}[!t]
\centering
\small
\setlength{\tabcolsep}{8pt}
\caption{Squared Frobenius error \(\|\widehat B-B^*\|\sb F^2\) as the fraction of randomly perturbed coefficients on the active support increases. Entries are Monte Carlo means (standard errors) over 30 repetitions. The error is summed over all predictors and tasks.}
\label{tab:sharing-results}
\textbf{(a) Low dimension}: \((p,s,n,T)=(40,40,140,60)\)\par\smallskip
\begin{tabular}{@{}rcccc@{}}
\toprule
Departures & \shortstack{Separate\\OLS} & \shortstack{Separate\\Ridge} & \shortstack{Pooled\\Ridge} & \shortstack{Pairwise\\Fusion} \\
\midrule
0\% & \(57.66\;(0.46)\) & \(46.43\;(0.45)\) & \(0.68\;(0.04)\) & \(0.68\;(0.04)\) \\
10\% & \(57.66\;(0.46)\) & \(46.35\;(0.40)\) & \(95.02\;(2.27)\) & \(12.33\;(0.14)\) \\
20\% & \(57.66\;(0.46)\) & \(46.08\;(0.58)\) & \(150.40\;(2.74)\) & \(24.33\;(0.30)\) \\
40\% & \(57.66\;(0.46)\) & \(46.32\;(0.49)\) & \(211.38\;(2.52)\) & \(31.87\;(0.26)\) \\
60\% & \(57.66\;(0.46)\) & \(46.28\;(0.43)\) & \(243.45\;(2.38)\) & \(38.55\;(0.29)\) \\
80\% & \(57.66\;(0.46)\) & \(46.35\;(0.42)\) & \(262.86\;(2.12)\) & \(42.91\;(0.33)\) \\
100\% & \(57.66\;(0.46)\) & \(46.54\;(0.42)\) & \(277.27\;(1.80)\) & \(45.45\;(0.37)\) \\
\bottomrule
\end{tabular}
\par\medskip
\textbf{(b) High dimension}: \((p,s,n,T)=(180,12,80,60)\)\par\smallskip
\begin{tabular}{@{}rccccc@{}}
\toprule
Departures & \shortstack{Separate\\Lasso} & \shortstack{Pooled\\Ridge} & \shortstack{Pooled\\Lasso} & \shortstack{Group\\Lasso} & \shortstack{Sparse Pairwise\\Fusion} \\
\midrule
0\% & \(128.71\;(2.04)\) & \(5.71\;(0.10)\) & \(1.50\;(0.07)\) & \(33.25\;(0.57)\) & \(0.93\;(0.05)\) \\
10\% & \(116.94\;(1.50)\) & \(104.48\;(3.23)\) & \(95.66\;(2.98)\) & \(33.34\;(0.45)\) & \(8.34\;(0.15)\) \\
20\% & \(107.92\;(1.04)\) & \(160.60\;(3.26)\) & \(149.52\;(3.05)\) & \(33.23\;(0.41)\) & \(14.28\;(0.19)\) \\
40\% & \(94.09\;(0.76)\) & \(223.88\;(2.27)\) & \(211.24\;(2.29)\) & \(32.78\;(0.34)\) & \(23.72\;(0.26)\) \\
60\% & \(86.97\;(0.67)\) & \(256.13\;(1.92)\) & \(243.19\;(1.92)\) & \(33.04\;(0.39)\) & \(26.22\;(0.38)\) \\
80\% & \(83.39\;(0.67)\) & \(276.97\;(1.59)\) & \(264.01\;(1.61)\) & \(33.02\;(0.39)\) & \(28.26\;(0.42)\) \\
100\% & \(82.60\;(0.73)\) & \(289.41\;(1.44)\) & \(277.25\;(1.46)\) & \(32.85\;(0.37)\) & \(29.08\;(0.39)\) \\
\bottomrule
\end{tabular}
\end{table}

In both regimes, the fused error increases as departures become more common.
In low dimension, Pairwise Fusion rises from \(0.68\) at \(0\%\) departures
to \(12.33\) at \(10\%\) and \(45.45\) at \(100\%\), while Separate
Ridge stays near \(46\).  In high dimension, Sparse Pairwise Fusion rises from
\(0.93\) to \(8.34\) and \(29.08\) at the same fractions, while Group
Lasso stays near \(33\).  The gain from fusion is therefore largest when most coefficients retain their
shared value and shrinks as exact sharing becomes less common.

Complete pooling is competitive at \(0\%\) departures but deteriorates
rapidly once task coefficients differ.  At \(100\%\), Pooled Ridge has
error \(277.27\) in low dimension and Pooled Lasso has error \(277.25\)
in high dimension.  Pairwise fusion avoids forcing these differing coefficients to be equal.  At
\(100\%\), where there are no true equalities to recover, any remaining gain
comes from finite-sample shrinkage rather than from recovering shared
coefficients.

\section{REAL DATA: HOUSEHOLD ELECTRICITY USE}
\label{sec:real-data-recs}

We analyze the 2020 Residential Energy Consumption Survey (RECS)
\citep{eia2020recs}, retaining all 18,496 households in the public-use file.
The 50 states and the District of Columbia form \(T=51\) tasks, with
143--1,152 households per task.  The response is log annual electricity use
in kilowatt-hours.  The 16 predictors describe electric heating, electric
water heating, air conditioning, floor area, household size, climate,
housing type, income, and construction period.  Categorical variables use
fixed dummy encodings; Appendix~\ref{sec:recs-details} gives their definitions.

We fit the survey-weighted version of Pairwise Fusion in
\eqref{eq:pairwise-fusion-obj}, with \(\nu=0\) and an unpenalized intercept for
each state.  Survey weights are normalized within each state, and states
receive equal weight in the loss.  Ordinary five-fold cross-validation splits
households within each state: four folds train the model and the fifth
validates it.  We select the fusion parameter with the smallest mean
validation error, then refit on all households to obtain one final
\(\widehat B\).  No group penalty or predictor selection is used.

The selected fit contains many exact coefficient ties.  All 16 predictors vary
across states, but 14 have a majority equality group.  The median largest group contains 40
of the 51 tasks, and seven predictors share a coefficient in at least 80\% of
tasks.  For each predictor, let \(\widehat a_j\) be the mean coefficient in its
largest numerical equality group, on the original response scale.  We display
\(\widehat\delta_{jt}=\widehat\beta^{\rm raw}_{jt}-\widehat a_j\),
scaled by the common predictor standard deviation \(s_j\).
This is only a summary of the final fit; \(\widehat a_j\) is not a separately
estimated common component.

\begin{figure}[!t]
\centering
\includegraphics[width=\textwidth]{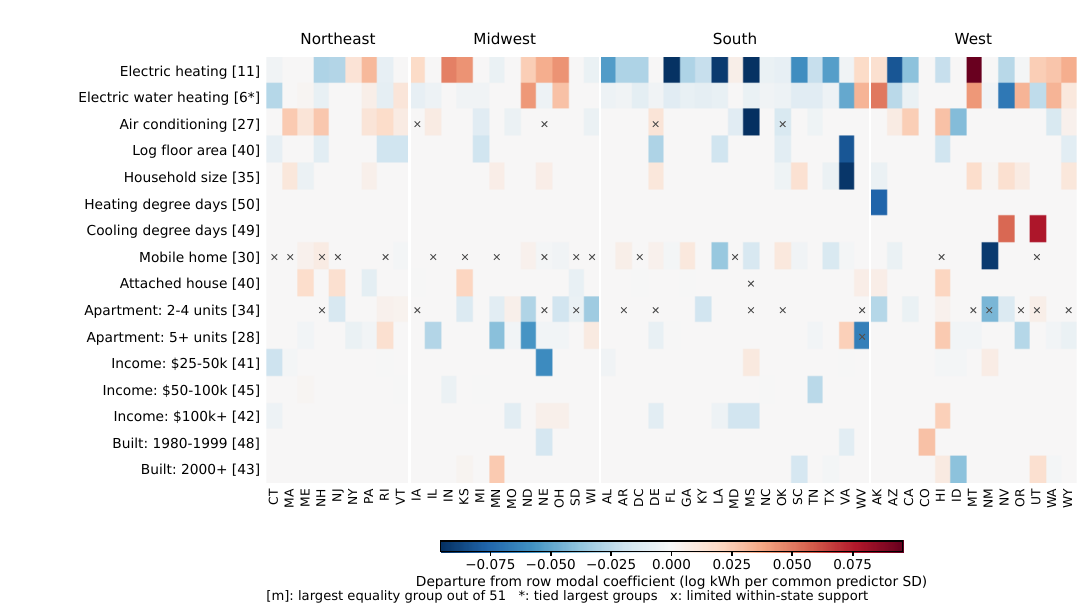}
\caption{RECS fitted coefficient departures, retaining all 16 predictors and
all 50 states plus DC.  Columns are alphabetical within Census regions;
regions organize the display only.  Color is \(s_j\widehat\delta_{jt}\),
in log-kWh units per common predictor standard deviation, with no masking
or clipping.  Brackets give the largest equality-group size out of 51;
an asterisk denotes tied largest groups.  Crosses mark fewer than ten
households in one binary category or no within-state predictor variation.
These annotations do not remove cells.  Appendix~\ref{sec:recs-details}
defines the baseline and numerical equality rule.}
\label{fig:recs-results}
\end{figure}

Figure~\ref{fig:recs-results} shows that the sharing sets differ by predictor.
Heating degree days have one shared coefficient in 50 tasks, with Alaska
departing; cooling degree days share a coefficient in 49 tasks, with Nevada
and Utah departing.  Floor area has a 40-task equality group.  These predictors
give concrete examples of broad sharing with different local exceptions.
The exceptions remain conditional on the other predictors, including the
equipment indicators.

Equipment coefficients are more dispersed.  The electric-heating coefficient
ranges from 0.139 in Florida and Mississippi to 0.547 in Montana, while its
largest equality group has only 11 states.  Electric water heating has 25
coefficient groups, with two largest groups of six states each.  Air
conditioning instead has a 27-task majority group.  Thus the same fit shows both broad sharing and substantial state-to-state
variation; we did not select predictors to highlight one pattern.

These are conditional associations, not causal equipment effects or evidence
of exact population equality.  Some states have limited within-state
contrasts, and the survey's weighting and imputation differ from the
independent Gaussian design in our theory.  The analysis illustrates the
structure of a fitted coefficient matrix without claiming that every
observed departure is a stable population feature.

\section{DISCUSSION}
\label{sec:discussion}

This paper studies multitask regression when coefficient sharing can differ by
predictor.  Two quantities determine the main rate: \(s\), the number of
active predictors, and \(r\), the number of task coefficients that depart from
their predictor-specific shared values.  The sparse upper and lower bounds
have the common scale
\[
        \frac{\sigma^2}{n}
        \{s\log(ep/s)+r\log(esT/r)\}.
\]
Thus the benefit of sharing is largest when only a small fraction of
coefficients differ across tasks, and it fades continuously as the
coefficients become more task specific.

A second result considers a stronger setting in which a large set of tasks
shares the entire coefficient vector.  Pairwise fusion can pool those tasks
exactly without being told which tasks they are.  Its error for the shared
vector is a pooled estimation term plus a controlled contribution from the
remaining tasks.

Our theory has several limitations.  The theoretical tuning levels use \(s\) and
\(r\), while the numerical work uses cross-validation.  Exact pooling assumes exact equality among the shared tasks and does not yet cover
approximate sharing.  The Gaussian independent-task design also excludes
heavy tails, dependence across tasks, and settings with useful task ordering
or external similarity information.

The RECS analysis illustrates the predictor-specific view directly.  Climate
and several housing variables have large equality groups across states, while
some equipment coefficients are more dispersed.  We retain all predictors
and states so that both patterns remain visible.  Because the data use survey
weights and some states have limited local contrasts, the fitted equality
groups are descriptive rather than claims of exact population equality.

\section*{AI Use Statement}

During the preparation of this work, the authors used OpenAI Codex to assist
with improving mathematical proofs, implementing numerical experiments, and
improving the exposition and organization of the manuscript.  The authors
reviewed, edited and verified all AI-assisted material and take full
responsibility for the content of the publication.

\appendix

\section{PROOFS OF THE UPPER BOUNDS}
\label{sec:main-proofs}

Both upper bounds use the same basic argument.  Optimality controls the loss,
the penalties restrict the estimation error, and the design converts this
control into a bound on the coefficients.  Pairwise Fusion requires one
additional step because the common coefficient across tasks is not penalized.
We first state the two probabilistic ingredients and prove the main results;
the supporting bounds appear in Section~\ref{subsec:upper-bound-details}.
The minimax lower bound and the shared-task result are proved separately in
Appendices~\ref{sec:lower-bound-proof} and~\ref{sec:taskwise-proof}.

Constants may change from line to line.  The confidence level \(\delta\) is
fixed throughout.  When several events are combined, we assign each a fixed
fraction of \(\delta\) and enlarge constants as needed; the resulting event
has probability at least \(1-\delta\).  Because the bounds are uniform over
the error direction, they apply simultaneously to every minimizer.

\subsection{Bounds Used in the Upper-Bound Proofs}
\label{sec:proof-notation}
\label{sec:common-proof-tools}

For each task, write
\[
 \widehat\Sigma_t=\frac{X^{(t)\top}X^{(t)}}{n_t},\qquad
 w_t=\frac{X^{(t)\top}e^{(t)}}{n_t},\qquad
 Z=\frac1T[w_1,\ldots,w_T].
\]
Then \(\langle Z,H\rangle=T^{-1}\sum_t w_t^\top h^{(t)}\) is the noise
term in the loss expansion.  Let \(D\) denote the pairwise-difference matrix,
with one row \(e_t^\top-e_u^\top\) for each \(t<u\).
For a true coefficient matrix, put
\[
 S=\{j:b_j^*\ne0\},\qquad
 \mathcal A=\operatorname{supp}(B^*D^\top).
\]
Thus \(S\) indexes the active predictors, while \(\mathcal A\) indexes the
unequal task pairs for those predictors.

\paragraph{Two facts about pairwise differences.}
The incidence matrix satisfies
\begin{equation}
\label{eq:complete-incidence-identity}
 D^\top D=TI_T-\mathbf1_T\mathbf1_T^\top,\qquad
 \|HD^\top\|_F^2=T\|H\|_F^2-\|H\mathbf1_T\|_2^2.
\end{equation}
For predictor \(j\), choose one of its most frequent coefficient values.
Every unequal pair contains at least one of the \(q_j\) exceptions, so there
are at most \(Tq_j\) such pairs.  Consequently,
\begin{equation}
\label{eq:full-cut-compatibility-bound}
 |\mathcal A|\le Tr,\qquad
 \|(HD^\top)_{\mathcal A}\|_1
 \le\sqrt{|\mathcal A|}\,\|HD^\top\|_F
 \le T\sqrt r\,\|H\|_F.
\end{equation}
For completeness, if \(n_{j1},\ldots,n_{jK_j}\) are the multiplicities
of the distinct coefficients for predictor \(j\), then
\(\max_k n_{jk}=T-q_j\) and
\[
 |A_j|=\frac12\left(T^2-\sum_k n_{jk}^2\right)
 \ge\frac12\{T^2-T(T-q_j)\}=\frac{Tq_j}{2}.
\]
This gives the two pair-count bounds used in the main text.

The second fact controls a centered coefficient vector by its pairwise
differences.  If \(\sum_t g_t=0\), then \(Tg_t=\sum_u(g_t-g_u)\).
Taking absolute values and summing first over tasks and then over predictors
gives
\begin{equation}
\label{eq:pairwise-centering-l1}
 G\mathbf1_T=0
 \quad\Longrightarrow\quad
 T\|G\|_1\le2\|GD^\top\|_1.
\end{equation}
Both statements are deterministic.

\paragraph{The inequality supplied by optimality.}
Let \(H=\widehat B-B^*\).  Comparing the objective at \(\widehat B\)
and \(B^*\), then expanding the squared loss, gives
\begin{equation}
\label{eq:global-basic-ineq}
\begin{aligned}
 \predictionerror
 &+\nu\{\|B^*+H\|_{2,1}-\|B^*\|_{2,1}\}\\
 &+\lambda\{\|(B^*+H)D^\top\|_1-\|B^*D^\top\|_1\}
 \le\langle Z,H\rangle.
\end{aligned}
\end{equation}
For Pairwise Fusion, set \(\nu=0\).  The next two lemmas provide the
design and noise bounds used with this inequality.

\begin{lemma}[Two design bounds]
\label{lem:upper-design-bounds}
\label{lem:row-cone-curvature}
\label{lem:fusion-pooled-curvature}
Fix \(K\ge1\), \(k\ge1\), and let \(\bar k=\min\{p,k\}\).  Under
Assumption~\ref{ass:task-specific-design}, suppose
\begin{equation}
\label{eq:row-cone-curvature-sample}
 n_{\min}\ge C_K\{\bar k\log(ep/\bar k)+\log(T/\delta)\}.
\end{equation}
Then, with probability at least \(1-\delta\), the following bounds hold.

\emph{(i) Errors concentrated on a few predictors.}
Every \(H\) satisfying
\begin{equation}
\label{eq:row-cone-curvature-cone}
 \|H\|_{2,1}\le K\sqrt{\bar k}\,\|H\|_F
\end{equation}
obeys
\begin{equation}
\label{eq:row-cone-curvature-lower}
 \frac cT\|H\|_F^2\le\predictionerror\le\frac CT\|H\|_F^2.
\end{equation}

\emph{(ii) A common vector and a few varying predictors.}
If, in addition,
\begin{equation}
\label{eq:fusion-pooled-curvature-sample}
 n_{\min}T\ge C_K\{p+\log(1/\delta)\},
\end{equation}
then every \(a\in\mathbb R^p\) and centered \(G\) satisfying
\(\|G\|_{2,1}\le K\sqrt{\bar k}\,\|G\|_F\) obey
\begin{equation}
\label{eq:fusion-pooled-curvature}
 c\{T\|a\|_2^2+\|G\|_F^2\}
 \le\sum_t\frac{\|X^{(t)}(a+g^{(t)})\|_2^2}{n_t}
 \le C\{T\|a\|_2^2+\|G\|_F^2\}.
\end{equation}
Here centered means \(G\mathbf1_T=0\).  Constants may depend on \(K\).
\end{lemma}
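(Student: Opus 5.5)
Both parts reduce to uniform two-sided control of the Gaussian quadratic forms \(v^\top\widehat\Sigma_t v\) over structured sets. The tool is a union bound over supports combined with \(\varepsilon\)-nets. The per-task bound handles the row-sparse cone in (i). In (ii) the common vector is handled by pooling all \(n_{\min}T\) rows.

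\emph{Part (i).} Fix a support \(J\subseteq\{1,\dots,p\}\) with \(|J|\le m\), where \(m=C'\bar k\) for a suitable constant \(C'\). For a single task, the standard net argument on the unit sphere of \(\mathbb R^J\) gives \(\tfrac12 c_0\|v\|_2^2\le v^\top\widehat\Sigma_t v\le 2C_0\|v\|_2^2\) for all \(v\) supported on \(J\). This fails with probability at most \(\exp(Cm-cn_t)\). A union bound over the \(\binom{p}{m}\le (ep/m)^m\) supports and the \(T\) tasks is affordable under \eqref{eq:row-cone-curvature-sample}, giving a restricted-eigenvalue bound for each task on all \(m\)-sparse vectors.

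To pass from sparse vectors to the cone \eqref{eq:row-cone-curvature-cone}, I would not work task by task. Instead, view \(H\) as an element of \(\mathbb R^{p\times T}\) with row-group structure, and write the whole loss as a block-diagonal design acting on \(\operatorname{vec}(H)\). The relevant sparsity is then row sparsity of \(H\).

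The plan is a Maurey/convex-hull argument. The cone condition \(\|H\|_{2,1}\le K\sqrt{\bar k}\|H\|_F\) places \(H/\|H\|_F\) in \(K\) times the convex hull of unit-Frobenius matrices supported on \(\bar k\) rows, up to a constant; this is the standard lemma for \(\ell_1\)/\(\ell_2\) cones, applied to row norms. The quadratic form \(\sum_t\|X^{(t)}h^{(t)}\|_2^2/n_t\) has two-sided bounds on row-\(m\)-sparse matrices, since those are exactly vectors supported on \(J\) in each task. The usual "sparse eigenvalues on \(m\gg K^2\bar k\) sparse sets imply cone eigenvalues" lemma (as in Bickel–Ritov–Tsybakov / Rudelson–Zhou) then transfers the bounds to the cone, with constants depending on \(K\). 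Dividing by \(2T\) gives \eqref{eq:row-cone-curvature-lower}.

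\emph{Part (ii).} Expand the quadratic form as
\[
\sum_t a^\top\widehat\Sigma_t a+2\sum_t a^\top\widehat\Sigma_t g^{(t)}+\sum_t g^{(t)\top}\widehat\Sigma_t g^{(t)}.
\]
\begin{itemize}
\item First term: it is a pooled Gaussian quadratic form with \(\sum_t n_t\ge n_{\min}T\) rows, weighted by \(1/n_t\). Rescaling the rows to \(X^{(t)}/\sqrt{n_t}\) gives a sum of independent rank-one terms. A net over \(S^{p-1}\) together with \eqref{eq:fusion-pooled-curvature-sample} yields \(\sum_t a^\top\widehat\Sigma_t a\asymp T\|a\|_2^2\). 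The concentration is in the weighted form, with sub-exponential tail parameter \(\max_t 1/n_t\le 1/n_{\min}\).
\item Third term: part (i) controls it, giving \(\asymp\|G\|_F^2\).
\item Cross term: this is the main obstacle. Centering gives \(\sum_t a^\top\Sigma_t g^{(t)}\), which need not vanish because \(\Sigma_t\) varies with the task. So centering alone does not kill the population cross term, and the equivalence must come from positivity rather than orthogonality.
\end{itemize}

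The cleanest route for the cross term is to view \(M=[a+g^{(1)},\dots,a+g^{(T)}]\) directly. We have \(\sum_t (a+g^{(t)})^\top\Sigma_t(a+g^{(t)})\ge c_0\|M\|_F^2=c_0(T\|a\|^2+\|G\|_F^2)\), using that \(G\) is centered so the cross term vanishes in the Frobenius norm. The population form is therefore already two-sided equivalent.

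It then suffices to show that the empirical form deviates from the population form by at most \(\epsilon(T\|a\|^2+\|G\|_F^2)\) uniformly. I would bound the deviation in three pieces: the \(a\)-part from the pooled concentration above, the \(G\)-part from (i) applied in deviation form, and the cross deviation
\[
\sum_t a^\top(\widehat\Sigma_t-\Sigma_t)g^{(t)}.
\]
For the cross deviation I would use Cauchy–Schwarz to reduce it to controlling \(\sup_{a,G}\) of a bilinear Gaussian chaos. This can be bounded by \(\sqrt{(p+\bar k\log(ep/\bar k)+\log T)/n_{\min}}\cdot\sqrt T\|a\|\,\|G\|_F\) via a net over \(a\in S^{p-1}\) and sparse unit directions for the rows of \(G\), and then absorbed by AM–GM under the two sample-size conditions. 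Getting the union bound for this cross term to cost only \(p/(n_{\min}T)\) plus \(\bar k\log(ep/\bar k)/n_{\min}\), rather than their product, is where I expect the work to concentrate.
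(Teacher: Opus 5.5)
\textbf{There is a gap in part (ii).} Your part (i) is fine and matches the paper: per-task sparse eigenvalue bounds on supports of size $\asymp K^2\bar k$, a union bound over supports and tasks, then a shelling or convex-hull transfer to the row cone (the paper uses the sorted-block version).

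The gap is at the step you flag yourself, the cross deviation. The lemma is designed for the regime $n_{\min}\ll p$ with only $n_{\min}T\gtrsim p$. Your proposed bound, of order $\sqrt{(p+\bar k\log(ep/\bar k)+\log T)/n_{\min}}\,\sqrt T\|a\|_2\|G\|_F$, can be absorbed by AM--GM only when $n_{\min}\gtrsim p$. A more careful union bound does not fix this. Take fixed $(a,G)$ with all of $G$ on one task. Then $a^\top(\widehat\Sigma_t-\Sigma_t)g^{(t)}$ is an average of $n_t$ products of Gaussians, so its unconditional tail at level $\epsilon\sqrt T$ is only about $\exp(-c\epsilon\sqrt T\,n_{\min})$. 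That cannot pay for an $e^{Cp}$-point net over dense $a$ when $p\asymp n_{\min}T$, and sub-exponential chaining runs into the same obstruction. As written, (ii) is not proved.

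\textbf{The missing ingredient is Gaussian conditioning on the varying block.} Fix $J$ with $|J|\le b\asymp\bar k$ and write $X^{(t)}_{J^c}=X^{(t)}_J\Gamma_t+Z_t$, with $Z_t$ independent of $X^{(t)}_J$. The paper uses this to avoid the cross term altogether.
\begin{itemize}
\item Take matrices equal to a common $v$ off $J$. Projecting onto the orthogonal complement of the columns of $X^{(t)}_J$ gives $\sum_t\|X^{(t)}h^{(t)}\|_2^2/n_t\ge\sum_t\|P_tX^{(t)}_{J^c}v\|_2^2/n_t$.
\item Conditionally, $\frac1T\sum_tX^{(t)\top}_{J^c}P_tX^{(t)}_{J^c}/n_t$ is a pooled sum of $\sum_t(n_t-b)$ independent rank-one Gaussian terms. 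Its lower bound fails with probability at most $e^{C(p-b)-cn_{\min}T}$ for each $J$, and the union over $J$ costs only $b\log(ep/b)$. This is why the two sample-size conditions enter as a sum rather than a product.
\item The varying rows on $J$ then follow from the triangle inequality and per-task sparse eigenvalues.
\item Shelling applied to $a\mathbf 1_T^\top+G_{J_0}$, whose Frobenius norm splits because $G$ is centered, extends the bound to the cone.
\end{itemize}

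Your population-plus-deviation route can be rescued by the same device. After the regression, the cross term splits into two pieces.
\begin{itemize}
\item The piece involving $a_{J^c}$ and $Z_t$ is, given the $X^{(t)}_J$, Gaussian with variance at most $C\|a\|_2^2\|G\|_F^2/n_{\min}$. Its tails $\exp(-c\epsilon^2 n_{\min}T)$ do pay for nets over $a$, over $G$ supported on $J$, and over $J$.
\item The pieces $(\Gamma_ta_{J^c})^\top(\widehat\Sigma_t-\Sigma_t)_{JJ}g^{(t)}_J$ and $a_J^\top(\widehat\Sigma_t-\Sigma_t)_{JJ}g^{(t)}_J$ are handled by sparse concentration.
\end{itemize}
Either way, this conditioning step is the heart of the proof, and your proposal leaves it open.
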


Part (i) is the standard predictor-sparse design bound used for Group Lasso
\citep{lounici2009taking,lounici2011oracle}.  Part (ii) also allows the common
vector to be dense: task-specific samples control the varying predictors,
while the pooled sample controls the shared component.

For the next lemma, write
\[
 \widehat\Sigma_\bullet=\frac1T\sum_t\widehat\Sigma_t,
 \qquad \bar w=\frac1T\sum_t w_t.
\]
When \(\widehat\Sigma_\bullet\) is invertible, define
\begin{equation}
\label{eq:common-noise-and-residual}
 h_N=\widehat\Sigma_\bullet^{-1}\bar w,\qquad
 \zeta_t=\frac1T(w_t-\widehat\Sigma_t h_N),\qquad
 \zeta=[\zeta_1,\ldots,\zeta_T].
\end{equation}
The vector \(h_N\) is the error obtained by fitting one common coefficient
to noise alone.  The matrix \(\zeta\) is the residual noise after this fit,
and \(\zeta\mathbf1_T=0\).

\begin{lemma}[Noise bounds for the two estimators]
\label{lem:upper-noise-bounds}
\label{lem:pairwise-score}
\label{lem:localized-group-fusion-score}
Under Assumption~\ref{ass:task-specific-design}, each of the following
statements holds with probability at least \(1-\delta\).

\nopagebreak[4]
\emph{(i) Pairwise Fusion.}
Under the sample-size conditions and tuning of
Theorem~\ref{thm:pairwise-fusion}, the pooled covariance is invertible,
\begin{equation}
\label{eq:fusion-pooled-noise-rate}
 T\|h_N\|_2^2\lesssim\frac{\sigma^2p}{n_{\min}},
\end{equation}
and, simultaneously for all centered \(G\),
\begin{equation}
\label{eq:fusion-profiled-score}
 |\langle\zeta,G\rangle|
 \le\frac\lambda8\|GD^\top\|_1
       +C\lambda T\sqrt r\,\|G\|_F.
\end{equation}

\emph{(ii) Sparse Pairwise Fusion.}
Under the sample-size condition and tuning of
Theorem~\ref{thm:sparse-pairwise-fusion}, simultaneously for every \(H\),
\begin{equation}
\label{eq:localized-group-fusion-score}
 |\langle Z,H\rangle|
 \le\frac\nu8\|H\|_{2,1}
       +\frac\lambda8\|HD^\top\|_1
       +C\nu\sqrt s\,\|H\|_F.
\end{equation}
This part does not require an invertible pooled covariance.
\end{lemma}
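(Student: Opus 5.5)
For part (i) I would work conditionally on the designs and use only three deterministic consequences of the design, each holding with probability \(1-\delta/3\) under \eqref{eq:pairwise-sample-size}: the pooled matrix satisfies \(\widehat\Sigma_\bullet\succeq c\,I_p\) (from \(n_{\min}T\gtrsim p+\log(pT)\), by matrix concentration for the \(\sum_t n_t\) rows weighted by \(1/(Tn_t)\)); \(\max_{t,j}(\widehat\Sigma_t)_{jj}\lesssim1\) (from \(n_{\min}\gtrsim\log(pT)\) and a \(\chi^2\) union bound); and \(\max_{t,j}\|\widehat\Sigma_te_j\|_2^2\lesssim 1+p/n_{\min}\).

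\emph{Pooled noise \(h_N\).} Given the designs, \(\bar w\) is Gaussian with covariance
\(T^{-2}\sum_t\sigma_t^2\widehat\Sigma_t/n_t\preceq\sigma^2\widehat\Sigma_\bullet/(Tn_{\min})\).
Hence \(h_N\) is Gaussian with covariance \(\preceq\sigma^2\widehat\Sigma_\bullet^{-1}/(Tn_{\min})\), whose trace is \(\lesssim\sigma^2p/(Tn_{\min})\). A Gaussian-chaos (or \(\chi^2\)) tail bound then gives \eqref{eq:fusion-pooled-noise-rate}.

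\emph{Profiled score.} For centered \(G\), \eqref{eq:complete-incidence-identity} gives \(GD^\top D=TG\), so
\[
\langle\zeta,G\rangle=\tfrac1T\langle\zeta D^\top,GD^\top\rangle .
\]
Let \(V=\zeta D^\top\), a Gaussian vector (conditionally on the designs) with \(m=pT(T-1)/2\) entries.

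Each entry \((\zeta_t-\zeta_u)_j\) has conditional variance at most \(\tau^2:=C\sigma^2/(T^2n_{\min})\). The \(w_t-w_u\) part is controlled by the diagonal bound. The correction \(T^{-1}(\widehat\Sigma_t-\widehat\Sigma_u)h_N\) has variance at most \(T^{-2}\|(\widehat\Sigma_t-\widehat\Sigma_u)e_j\|_2^2\cdot\sigma^2/(cTn_{\min})\). This is \(\lesssim\sigma^2(1+p/n_{\min})/(T^3n_{\min})\le\tau^2\) because \(n_{\min}T\gtrsim p\).

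I would then invoke the dependence-free sorted-Gaussian bound (as in the SLOPE/graph-SLOPE literature): with probability \(1-\delta/3\),
\[
\sum_{i\le k}|V|_{(i)}^2\lesssim\tau^2k\log(em/k)\quad\text{for } k=Tr,
\qquad\text{and}\qquad
|V|_{(k)}\lesssim\tau\sqrt{\log(em/k)} .
\]
Split \(\langle V,W\rangle\le\sum_{i\le k}|V|_{(i)}|W|_{(i)}+|V|_{(k)}\|W\|_1\) with \(W=GD^\top\). Since \(\log(em/k)\asymp\log(epT/r)\) and \(\|W\|_F\le\sqrt T\|G\|_F\):
\begin{itemize}
\item the head is at most \(T^{-1}\tau\sqrt{Tr\log(epT/r)}\sqrt T\|G\|_F\asymp\lambda T\sqrt r\|G\|_F\);
\item the tail is at most \(T^{-1}\tau\sqrt{\log(epT/r)}\|GD^\top\|_1\le(\lambda/8)\|GD^\top\|_1\) once \(C\) in \eqref{eq:minimax-fusion-tuning} is large.
\end{itemize}
This gives \eqref{eq:fusion-profiled-score}, uniformly in \(G\).

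For part (ii), decompose each row as \(h_j=a_j\mathbf1_T+g_j\) with \(a_j\) the row mean. Then
\[
\langle Z,H\rangle=\bar w^\top a+\tfrac1T\langle ZD^\top,GD^\top\rangle,
\qquad GD^\top=HD^\top .
\]

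\emph{Common part.} Use \(\sqrt T|a_j|\le\|h_j\|_2\) and the sorted bound on the \(p\) coordinates of \(\bar w\), each with standard deviation \(\lesssim\sigma/\sqrt{Tn_{\min}}\), split at \(k=s\). The head is \(\lesssim\sigma\sqrt{s\log(ep/s)}\|H\|_F/(T\sqrt{n_{\min}})\le C\nu\sqrt s\|H\|_F\). The tail is \(\le(\nu/8)\|H\|_{2,1}\).

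\emph{Difference part.} Split at \(k=Tr\) as in part (i). The head gives \(\lambda T\sqrt r\|H\|_F\), and this is \(\le\nu\sqrt s\|H\|_F\) because \(\nu^2s\ge C\sigma^2r\log(esT/r)/(T^2n_{\min})\).

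\emph{Main obstacle.} The tail threshold is \(\tau\sqrt{\log(epT/r)}\), which is larger than the tuned \(\lambda\asymp\tau\sqrt{\log(esT/r)}\). The excess \(\tau\sqrt{\log(ep/s)}\) must be charged to \(\nu\|H\|_{2,1}\), and a crude row bound \(\|g_jD^\top\|_1\le T^{3/2}\|g_j\|_2\) loses a factor \(\sqrt T\).

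My first attempt would be a two-level peeling. Sort the rows by \(\|h_j\|_2\) and handle the top \(s\) rows with the within-row threshold \(\tau\sqrt{\log(esT/r)}\), taking a union bound only over \(\binom ps\) row sets, which costs \(s\log(ep/s)\) and feeds into the \(\nu\sqrt s\|H\|_F\) term. For the remaining rows, bound each row's contribution by \(\sup_{\|x\|_2\le1}\langle(ZD^\top)_j,xD\rangle\|g_j\|_2/T\), the Gaussian width of a \(T\)-dimensional centered vector. This is \(\lesssim\tau\sqrt T\cdot\) (per-row) \(\asymp\sigma/(\sqrt T\sqrt{n_{\min}})\), and summing with the row-sorted bound over \(p\) rows yields \((\nu/8)\|H\|_{2,1}\) when \(\nu\) carries \(\sqrt{\log(ep/s)}\). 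If this per-row width estimate is too large by \(\sqrt T\), I would instead use that \(\nu^2\) also contains \((r/s)\log(esT/r)\) and refine the split point per row.
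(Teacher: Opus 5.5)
Part (i) is correct, but part (ii) has a genuine gap, and it affects the head of your difference split as well as the tail you flagged.

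\textbf{Part (i).} Your argument works, by a different route from the paper. You write \(\langle\zeta,G\rangle=T^{-1}\langle\zeta D^\top,GD^\top\rangle\) and apply a dependence-free sorted-Gaussian bound to all \(pT(T-1)/2\) differences at \(k=Tr\). Obtaining it from the expectation bound plus Markov is fine at fixed \(\delta\). The paper instead soft-thresholds \(\zeta\) entrywise in coefficient coordinates. It uses a support-wise net bound to show that fewer than \(r\) entries exceed \(\theta\), then converts \(\|G\|_1\) to \(\|GD^\top\|_1\) through \(T\|G\|_1\le2\|GD^\top\|_1\). Both routes give the factor \(\log(epT/r)\), which is what the dense tuning allows.

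\textbf{Part (ii): the head also fails.} Splitting all of \(ZD^\top\) at \(k=Tr\), exactly as in part (i), gives a head of order \(\tau\sqrt{r\log(epT/r)}\,\|H\|_F\). It does not give \(\lambda T\sqrt r\,\|H\|_F\) with the sparse tuning \(\lambda T\asymp\tau\sqrt{\log(esT/r)}\). Since \(r\log(epT/r)=r\log(p/s)+r\log(esT/r)\), you would need
\[
r\log(p/s)\lesssim s\log(ep/s)+r\log(esT/r).
\]
This fails when \(r\gg s\) and \(p\gg s\), for example \(r\asymp sT\). Any sorted bound taken globally over all \(p\) predictors pays \(\log(p/s)\) per exception, in both head and tail.

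\textbf{Your peeling repair does not close the gap.} As you suspected, the raw centered row \((Z(I-P))_j\) has norm of order \(\tau\sqrt T\). By contrast, \(\nu\asymp\tau\{\log(ep/s)+(r/s)\log(esT/r)\}^{1/2}\) is much smaller unless \(r\asymp sT\). So rows outside the top \(s\) cannot be charged to \(\nu\|H\|_{2,1}\) this way. Splitting by the row norms of \(H\) is also the wrong split.

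\textbf{The missing idea: threshold before grouping.} Set \(\ell=\log(esT/r)\) and \(d=s\log(ep/s)+r\ell\).
\begin{enumerate}
\item Soft-threshold \(W=Z(I-P)\) entrywise at \(\theta\asymp\tau\sqrt\ell\), and put \(R=ZP+S_\theta(W)(I-P)\). Then \(|\langle Z-R,H\rangle|\le\theta\|H(I-P)\|_1\le(2\theta/T)\|HD^\top\|_1\), which the \(\lambda/8\) term absorbs.
\item Take a union bound over predictor sets \(|J|\le s\) and entry sets of size \(M\asymp d/\ell\) inside \(J\times[T]\). This costs \(s\log(ep/s)+M\log(esT/M)\lesssim d\). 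It shows that fewer than \(M\) entries of any \(W_J\) exceed \(\theta\), so \(\sup_{|J|\le s}\|R_J\|_F\lesssim\tau\sqrt d\).
\item Hence fewer than \(s\) rows of \(R\) have norm above \(\gamma\asymp\tau\sqrt{d/s}\le\nu/8\). This gives \(|\langle R,H\rangle|\le\gamma\|H\|_{2,1}+C\tau\sqrt d\,\|H\|_F\le(\nu/8)\|H\|_{2,1}+C\nu\sqrt s\,\|H\|_F\).
\end{enumerate}
The rows compared with \(\nu\) are those of the thresholded remainder, whose norms are \(O(\nu)\) outside at most \(s\) rows. The raw noise rows are not controlled at that level. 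The localization to \(s\) predictors is what removes the \(r\log(p/s)\) excess from both head and tail.
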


These bounds split the noise into a part absorbed by the penalties and a
remainder controlled in Frobenius norm.  The split is made in coefficient
coordinates rather than over all pairwise differences.  Section~\ref{subsec:upper-bound-details}
controls the remainder at the stated tuning levels; a simple maximum-coordinate
bound would introduce larger logarithmic factors.

\subsection{Proof of Theorem~\ref{thm:pairwise-fusion}}

\begin{proof}
Work on the joint event from the two preceding lemmas.  We first control the
differences across tasks and then recover the error in the shared component.

\paragraph{Step 1: Fit the common component.}
Write
\[
 H=h\mathbf1_T^\top+G,\qquad G\mathbf1_T=0.
\]
The pairwise penalty depends only on \(G\).  For fixed \(G\), minimizing
the loss over \(h\) gives
\[
 h=h_N+h_G,\qquad
 h_G=-\widehat\Sigma_\bullet^{-1}
          \frac1T\sum_t\widehat\Sigma_tg^{(t)}.
\]
Define the remaining quadratic loss by
\begin{equation}
\label{eq:dense-profiled-quadratic-definition}
 Q_X(G)=\min_{a\in\mathbb R^p}\frac1{2T}
          \sum_t(a+g^{(t)})^\top\widehat\Sigma_t(a+g^{(t)}).
\end{equation}
The minimum is attained at \(a=h_G\).  After completing the square, the
part of the profiled loss that depends on \(G\) is
\(Q_X(G)-\langle\zeta,G\rangle\), up to an additive constant.  Compare
the fitted \(G\) with \(G=0\), allowing the common component to be refitted
in each case.  Since
\[
 \|(B^*+G)D^\top\|_1-\|B^*D^\top\|_1
 \ge\|(GD^\top)_{\mathcal A^c}\|_1
       -\|(GD^\top)_{\mathcal A}\|_1,
\]
the noise bound \eqref{eq:fusion-profiled-score} gives
\begin{equation}
\label{eq:minimax-fusion-profiled-basic}
 Q_X(G)+c\lambda\|(GD^\top)_{\mathcal A^c}\|_1
 \le C\lambda\|(GD^\top)_{\mathcal A}\|_1
       +C\lambda T\sqrt r\,\|G\|_F.
\end{equation}

\paragraph{Step 2: Bound the centered differences.}
By \eqref{eq:full-cut-compatibility-bound}, the contribution from unequal
true pairs is at most \(T\sqrt r\,\|G\|_F\).  Dropping
\(Q_X(G)\ge0\) from \eqref{eq:minimax-fusion-profiled-basic} and then adding
the unequal-pair contribution gives
\[
 \|GD^\top\|_1\lesssim T\sqrt r\,\|G\|_F.
\]
The centering inequality \eqref{eq:pairwise-centering-l1} therefore
implies \(\|G\|_{2,1}\le\|G\|_1\lesssim\sqrt r\,\|G\|_F\).
Also \(\|G\|_{2,1}\le\sqrt p\,\|G\|_F\), so
\[
 \|G\|_{2,1}\lesssim\sqrt{\min\{p,r\}}\,\|G\|_F.
\]
Part (ii) of Lemma~\ref{lem:upper-design-bounds}, with
\(k=\min\{p,r\}\), now gives
\(Q_X(G)\gtrsim T^{-1}\|G\|_F^2\).  Substitution into
\eqref{eq:minimax-fusion-profiled-basic} yields
\[
 \frac1T\|G\|_F^2\lesssim\lambda T\sqrt r\,\|G\|_F.
\]
Consequently,
\begin{equation}
\label{eq:minimax-centered-rate}
 \|G\|_F^2\lesssim\lambda^2T^4r
 \asymp\frac{\sigma^2}{n_{\min}}r\log(epT/r).
\end{equation}
If \(G=0\), the bound is trivial; otherwise divide by \(\|G\|_F\).

\paragraph{Step 3: Add back the common-component error.}
The same design bound, together with the minimizing property of \(h_G\),
gives
\[
 T\|h_G\|_2^2+\|G\|_F^2
 \lesssim 2TQ_X(G)
 \le\sum_t g^{(t)\top}\widehat\Sigma_tg^{(t)}
 \lesssim\|G\|_F^2.
\]
The common and centered components are orthogonal.  Thus
\[
 \|H\|_F^2=T\|h_N+h_G\|_2^2+\|G\|_F^2
 \lesssim T\|h_N\|_2^2+\|G\|_F^2.
\]
Combining \eqref{eq:fusion-pooled-noise-rate} and
\eqref{eq:minimax-centered-rate} yields \eqref{eq:pairwise-dense-rate}.
Because the argument is uniform on the joint event, the bound holds for every
minimizer.
\end{proof}

\subsection{Proof of Theorem~\ref{thm:sparse-pairwise-fusion}}

\begin{proof}
Let \(H=\widehat B_{\rm SP}-B^*\).  Here the group penalty already
controls the shared task direction, so the separate profiling step used above
is unnecessary.

\paragraph{Step 1: Use the two penalties.}
The triangle inequality gives
\[
\begin{aligned}
 \|B^*+H\|_{2,1}-\|B^*\|_{2,1}
 &\ge\|H_{S^c}\|_{2,1}-\|H_S\|_{2,1},\\
 \|(B^*+H)D^\top\|_1-\|B^*D^\top\|_1
 &\ge\|(HD^\top)_{\mathcal A^c}\|_1
          -\|(HD^\top)_{\mathcal A}\|_1.
\end{aligned}
\]
Substitute these inequalities and \eqref{eq:localized-group-fusion-score}
into \eqref{eq:global-basic-ineq}, and use
\(\|H_S\|_{2,1}\le\sqrt s\,\|H\|_F\),
\eqref{eq:full-cut-compatibility-bound}, and
\(\lambda T\sqrt r\lesssim\nu\sqrt s\), which follows from the stated
tuning.  Absorbing the smaller penalty terms gives
\begin{equation}
\label{eq:minimax-group-basic-simplified}
 \predictionerror+c\nu\|H_{S^c}\|_{2,1}
 +c\lambda\|(HD^\top)_{\mathcal A^c}\|_1
 \le C\nu\sqrt s\,\|H\|_F.
\end{equation}

\paragraph{Step 2: Convert the loss bound into a coefficient bound.}
Dropping the prediction and pairwise-difference terms gives
\(\|H_{S^c}\|_{2,1}\lesssim\sqrt s\,\|H\|_F\).  Including the active
predictors then yields
\[
 \|H\|_{2,1}\lesssim\sqrt s\,\|H\|_F.
\]
Part (i) of Lemma~\ref{lem:upper-design-bounds}, with \(k=s\), therefore
bounds the prediction term below by \(c\|H\|_F^2/T\).  Returning to
\eqref{eq:minimax-group-basic-simplified} gives
\[
 \frac1T\|H\|_F^2\lesssim\nu\sqrt s\,\|H\|_F,
 \qquad
 \|H\|_F^2\lesssim T^2\nu^2s.
\]
Substituting the tuning level in \eqref{eq:minimax-group-fusion-tuning} gives
\[
 \|H\|_F^2\lesssim\frac{\sigma^2}{n_{\min}}
 \{s\log(ep/s)+r\log(esT/r)\}.
\]
The design and noise events hold jointly with probability at least
\(1-\delta\), and the argument is uniform over all minimizers.
\end{proof}

\subsection{Proofs of the Design and Noise Bounds}
\label{subsec:upper-bound-details}

This subsection proves the two lemmas used above.  All probability bounds are
uniform over the relevant supports, so supports selected from the observed
error or noise do not require an additional union bound.

\paragraph{Probability bounds used below.}
For independent centered sub-exponential variables with
\(\|U_i\|_{\psi_1}\le K\), weighted Bernstein's inequality gives
\begin{equation}
\label{eq:weighted-bernstein}
 \Pr\!\left\{\left|\sum_i a_iU_i\right|>x\right\}
 \le2\exp\!\left[-c\min\!\left\{
 \frac{x^2}{K^2\|a\|_2^2},\frac{x}{K\|a\|_\infty}\right\}\right].
\end{equation}
A centered Gaussian variable has sub-Gaussian norm bounded by a constant
times its standard deviation.  We use these inequalities conditionally on the
design when needed.  An \(\eta\)-net of the unit sphere in
\(\mathbb R^d\) has at most \((1+2/\eta)^d\) points; for such a net
\(\mathcal N\),
\begin{equation}
\label{eq:finite-net-comparisons}
 \|z\|_2\le\frac{\max_{v\in\mathcal N}|v^\top z|}{1-\eta},\qquad
 \|A\|_{\rm op}\le
 \frac{\max_{v\in\mathcal N}|v^\top Av|}{1-2\eta}
 \quad(\eta<1/2)
\end{equation}
for symmetric \(A\).

\begin{proof}[Proof of Lemma~\ref{lem:upper-design-bounds}]
Write
\[
 F_X(H)=\left\{\sum_t\frac{\|X^{(t)}h^{(t)}\|_2^2}{n_t}\right\}^{1/2},
 \qquad b=\min\{p,\lceil M\bar k\rceil\},
\]
where \(M\) is a sufficiently large constant depending on \(K\).
Bernstein's inequality for Gaussian squares, a \(1/4\)-net, and a union
bound over supports and tasks give
\begin{equation}
\label{eq:row-cone-sparse-covariance}
 c\|v\|_2^2\le v^\top\widehat\Sigma_t v\le C\|v\|_2^2,
 \qquad t\le T,\quad\|v\|_0\le b.
\end{equation}
The logarithm of the number of support-net points is at most
\(C\{b\log(ep/b)+\log T\}\), which is covered by the sample-size condition
\eqref{eq:row-cone-curvature-sample}, up to a constant depending on \(K\).

\emph{Part (i).}
If \(b=p\), summing \eqref{eq:row-cone-sparse-covariance} over tasks
proves the claim.  Otherwise, sort the predictor-vector norms of \(H\) and divide them
into consecutive blocks \(J_0,J_1,\ldots\), each of size \(b\), except
possibly the last.  The sorted-block inequality gives
\[
 \sum_{\ell\ge1}\|H_{J_\ell}\|_F
 \le\frac{\|H\|_{2,1}}{\sqrt b}
 \le\frac K{\sqrt M}\|H\|_F.
\]
For each block, \eqref{eq:row-cone-sparse-covariance} implies
\(c\|H_{J_\ell}\|_F\le F_X(H_{J_\ell})\le C\|H_{J_\ell}\|_F\).
Using the triangle inequality and
\(\|H_{J_0}\|_F\ge\|H\|_F-\sum_{\ell\ge1}\|H_{J_\ell}\|_F\), we obtain
\[
\begin{aligned}
 F_X(H)
 &\ge c\|H_{J_0}\|_F-C\sum_{\ell\ge1}\|H_{J_\ell}\|_F
 \ge(c-C'K/\sqrt M)\|H\|_F,\\
 F_X(H)
 &\le C\sum_{\ell\ge0}\|H_{J_\ell}\|_F
 \le C(1+K/\sqrt M)\|H\|_F.
\end{aligned}
\]
Choosing \(M\) sufficiently large, then squaring and dividing by \(2T\),
proves part (i).

\emph{Part (ii): first allow only \(b\) predictors to vary.}
We begin with
\begin{equation}
\label{eq:fusion-mixed-model-curvature}
 c\|H\|_F^2\le F_X(H)^2\le C\|H\|_F^2
\end{equation}
for all matrices whose predictors outside a set \(J\), \(|J|\le b\), are
constant across tasks.  If \(b=p\), this again follows from
\eqref{eq:row-cone-sparse-covariance}.  Otherwise enlarge \(J\) to size
\(b\).  Gaussian regression gives
\[
 X_{J^c}^{(t)}=X_J^{(t)}\Gamma_t+Z_t,\qquad
 \Gamma_t=\Sigma_{t,JJ}^{-1}\Sigma_{t,JJ^c},
\]
where \(Z_t\) is independent of \(X_J^{(t)}\) and has independent
Gaussian rows with covariance
\[
 \Omega_t=\Sigma_{t,J^cJ^c}
 -\Sigma_{t,J^cJ}\Sigma_{t,JJ}^{-1}\Sigma_{t,JJ^c},
 \qquad c_0I\preceq\Omega_t\preceq C_0I.
\]
Let \(P_t\) project onto the orthogonal complement of the columns of
\(X_J^{(t)}\).  The per-task sample condition ensures \(n_t\ge2b\)
and full column rank, uniformly over \(J,t\).  Conditional on the
\(X_J^{(t)}\), an orthogonal change of basis gives
\[
 X_{J^c}^{(t)\top}P_tX_{J^c}^{(t)}
 \ \overset{d}{=}\ \sum_{i=1}^{n_t-b}z_{it}z_{it}^\top,
 \qquad z_{it}\sim N(0,\Omega_t),
\]
independently across \(i,t\).  It follows that
\begin{equation}
\label{eq:fusion-residual-pooled-design}
 \frac1T\sum_t\frac{X_{J^c}^{(t)\top}P_tX_{J^c}^{(t)}}{n_t}
 \succeq cI_{p-b}
 \qquad\text{for every }J.
\end{equation}
For the probability bound, a fixed unit direction has conditional mean at
least \(c_0/2\).  The weights of its centered Gaussian squares are
\(1/(Tn_t)\); their squared sum and maximum are at most
\(1/(Tn_{\min})\).  Thus a fixed constant deviation has probability at
most \(2e^{-cn_{\min}T}\).  The net contributes \(C(p-b)\) to the logarithm, and the predictor sets
contribute at most \(b\log(ep/b)\).  The sample-size assumptions cover both
terms.  Integrating the conditional bound for each fixed \(J\) and then taking
the union bound proves \eqref{eq:fusion-residual-pooled-design}; independence
across different choices of \(J\) is not required.

On the same event, sparse and pooled covariance concentration give
\[
 cI\preceq X_J^{(t)\top}X_J^{(t)}/n_t\preceq CI
 \quad\text{for every }J,t,\qquad
 cI\preceq\widehat\Sigma_\bullet\preceq CI.
\]
For a matrix in the specified model, write
\(h_{J^c}^{(t)}=v\) and \(h_J^{(t)}=u_t\).  Projecting out the
columns of \(X_J^{(t)}\) gives
\begin{equation}
\label{eq:fusion-mixed-control-v}
 T\|v\|_2^2
 \lesssim\sum_t\frac{\|P_tX_{J^c}^{(t)}v\|_2^2}{n_t}
 \le F_X(H)^2.
\end{equation}
Also \(X_J^{(t)}u_t=X^{(t)}h^{(t)}-X_{J^c}^{(t)}v\).  The sparse
lower covariance bound and the pooled upper bound imply
\[
 \left(\sum_t\|u_t\|_2^2\right)^{1/2}
 \lesssim F_X(H)
 +\left(\sum_t\frac{\|X_{J^c}^{(t)}v\|_2^2}{n_t}\right)^{1/2}
 \lesssim F_X(H)+\sqrt T\|v\|_2
 \lesssim F_X(H).
\]
Since \(\|H\|_F^2=T\|v\|_2^2+\sum_t\|u_t\|_2^2\), this proves the
lower bound in \eqref{eq:fusion-mixed-model-curvature}.  The upper bound
follows from
\[
 F_X(H)\le
 \left(\sum_t\frac{\|X_J^{(t)}u_t\|_2^2}{n_t}\right)^{1/2}
 +\left(\sum_t\frac{\|X_{J^c}^{(t)}v\|_2^2}{n_t}\right)^{1/2}
 \lesssim\left(\sum_t\|u_t\|_2^2\right)^{1/2}+\sqrt T\|v\|_2.
\]

\emph{Part (ii): extend to the stated predictor bound.}
Sort the predictor-vector norms of \(G\) into blocks \(J_0,J_1,\ldots\) of size
\(b\).  As in part (i),
\[
 \sum_{\ell\ge1}\|G_{J_\ell}\|_F\le\frac K{\sqrt M}\|G\|_F.
\]
Every block is centered, so
\(\|a\mathbf1_T^\top+G_{J_0}\|_F^2
=T\|a\|_2^2+\|G_{J_0}\|_F^2\).
Apply \eqref{eq:fusion-mixed-model-curvature} to this matrix and to each
tail block.  The triangle inequality gives
\[
\begin{aligned}
 F_X(a\mathbf1_T^\top+G)
 &\ge c\{T\|a\|_2^2+\|G_{J_0}\|_F^2\}^{1/2}
       -C\sum_{\ell\ge1}\|G_{J_\ell}\|_F\\
 &\ge(c-C'K/\sqrt M)\{T\|a\|_2^2+\|G\|_F^2\}^{1/2}.
\end{aligned}
\]
The triangle inequality gives the matching upper bound.  Choose \(M\)
sufficiently large.  Since the covariance events are uniform over supports,
the predictor ordering may depend on the observed matrix.  Allocating fixed
fractions of the failure probability completes the proof.
\end{proof}

\begin{proof}[Proof of Lemma~\ref{lem:upper-noise-bounds}]
We first derive a conditional Gaussian bound common to both estimators.  We
then use it once for Pairwise Fusion and twice when the group penalty is also
present.

\paragraph{Conditional noise bounds.}
On the sparse covariance event, for every matrix \(V\) supported on an
eligible set of at most \(b\) predictors,
\[
 \operatorname{Var}\{\langle Z,V\rangle\mid X\}
 =\frac1{T^2}\sum_t\frac{\sigma_t^2}{n_t}
             v^{(t)\top}\widehat\Sigma_t v^{(t)}
 \le\frac{C\sigma^2}{T^2n_{\min}}\|V\|_F^2.
\]
No covariance matrix needs to be invertible for this calculation.
The sample conditions give this event with
\(b=\min\{p,r\}\) for part (i) and \(b=s\) for part (ii).

For part (i), the pooled covariance is also bounded above and below.
To obtain the same variance bound after fitting the common vector, stack
\[
 U=\begin{bmatrix}X^{(1)}/\sqrt{Tn_1}\\ \vdots\\
                  X^{(T)}/\sqrt{Tn_T}\end{bmatrix},\qquad
 \varepsilon=\begin{bmatrix}e^{(1)}/\sqrt{Tn_1}\\ \vdots\\
                  e^{(T)}/\sqrt{Tn_T}\end{bmatrix},
 \qquad \Pi=U(U^\top U)^{-1}U^\top.
\]
Then \(U^\top U=\widehat\Sigma_\bullet\),
\(U^\top\varepsilon=\bar w\), and
\(\operatorname{Cov}(\varepsilon\mid X)
\preceq\sigma^2(Tn_{\min})^{-1}I\).
For a matrix \(V\), let \(\mathcal V(V)\) have task block
\(X^{(t)}v^{(t)}/\sqrt{Tn_t}\).  Direct calculation gives
\begin{equation}
\label{eq:projection-score-identities}
 \langle Z,V\rangle=\varepsilon^\top\mathcal V(V),\qquad
 \langle\zeta,V\rangle=\varepsilon^\top(I-\Pi)\mathcal V(V).
\end{equation}
Since \(I-\Pi\) is an orthogonal projection,
\[
 \operatorname{Var}\{\langle\zeta,V\rangle\mid X\}
 \le\frac{\sigma^2}{Tn_{\min}}\|\mathcal V(V)\|_2^2
 =\frac{\sigma^2}{T^2n_{\min}}
       \sum_t v^{(t)\top}\widehat\Sigma_t v^{(t)}.
\]
Thus the same variance bound holds for both noise matrices on the stated
predictor supports.  We use only the upper bound on the noise covariance; when
task sample sizes or noise variances differ, the projection need not reduce the
actual covariance.

We will also use the following two consequences in the shared-task proof:
\begin{equation}
\label{eq:pooled-projection-hn-tail}
 \Pr\!\left\{\|h_N\|_2^2>
 C\frac{\sigma^2(p+u)}{n_{\min}T}\,\middle|\,X\right\}
 \le2e^{-u},\qquad u\ge0,
\end{equation}
on \(cI\preceq\widehat\Sigma_\bullet\preceq CI\), and
\begin{equation}
\label{eq:pooled-projection-coordinate-residual}
 \max_{j,t}\|(w_t-\widehat\Sigma_t h_N)_j\|_{\psi_2\mid X}
 \le C\frac\sigma{\sqrt{n_{\min}}}
\end{equation}
if \(\max_{j,t}(\widehat\Sigma_t)_{jj}\le C\).
The first follows from
\(\operatorname{Cov}(h_N\mid X)
\preceq\sigma^2(Tn_{\min})^{-1}\widehat\Sigma_\bullet^{-1}\)
and a Gaussian norm bound.  For the second, take \(V\) with one nonzero entry in
\eqref{eq:projection-score-identities} and use
\(\|\mathcal V(V)\|_2^2=(\widehat\Sigma_t)_{jj}/T\), and multiply
by \(T\).  At fixed \(u\) depending on \(\delta\),
\eqref{eq:pooled-projection-hn-tail} proves
\eqref{eq:fusion-pooled-noise-rate}.

\paragraph{Uniform bounds on a fixed number of coordinates.}
For this proof only, write
\(\tau=C\sigma/(T\sqrt{n_{\min}})\).
The preceding conditional Gaussian bounds imply
\(\|\langle W,V\rangle\|_{\psi_2\mid X}\le\tau\|V\|_F\)
when \(W\) is the relevant noise matrix and \(V\) is supported on at
most \(b\) predictors.  For fixed integers \(k\le b\), \(m\le kT\), a
\(1/2\)-net and a union bound give, with conditional probability at least
\(1-2e^{-u}\),
\begin{equation}
\label{eq:supportwise-gaussian-net-bound}
 \sup_{\substack{|J|\le k,\ A\subseteq J\times[T]\\ |A|\le m}}
 \|W_A\|_F
 \le C\tau\sqrt{k\log(ep/k)+m\log(ekT/m)+u}.
\end{equation}
After enlarging the predictor and coordinate sets if necessary, there are at
most \(\binom pk\binom{kT}m\) choices, each requiring at most \(5^m\) net
points.

Let \(P=\mathbf1_T\mathbf1_T^\top/T\).  Right multiplication by \(P\)
or \(I-P\) preserves predictor support and does not increase Frobenius norm,
so the same argument applies to \(WP\) and \(W(I-P)\).  The constant
predictors have dimension \(k\), giving the sharper bound
\begin{equation}
\label{eq:common-score-support-bound}
 \sup_{|J|\le k}\|(WP)_J\|_F
 \le C\tau\sqrt{k\log(ep/k)+u}.
\end{equation}
We use this bound only at the fixed support sizes specified below.  Taking
\(u=\log(C/\delta)\) for a sufficiently large constant \(C\), its contribution
can be absorbed into the constants because \(\delta\) is fixed.

\paragraph{Part (i): absorb small coordinates by fusion.}
Apply \eqref{eq:supportwise-gaussian-net-bound} to \(W=\zeta\),
\(k=\min\{p,r\}\), and \(m=r\).  The expression under the square
root is at most a constant times \(r\log(epT/r)\).  Hence
\[
 \sup_{|A|\le r}\|\zeta_A\|_F
 \le C\tau\sqrt{r\log(epT/r)}.
\]
Every set of \(r\) entries involves at most \(\min\{p,r\}\) predictors,
which justifies the unrestricted coordinate supremum in this display.
Choose \(\theta=A_\delta\tau\sqrt{\log(epT/r)}\), with
\(A_\delta\) sufficiently large, and define the entrywise operation
\[
 S_\theta(x)=\operatorname{sign}(x)(|x|-\theta)_+.
\]
Fewer than \(r\) entries of \(\zeta\) can exceed \(\theta\) in
magnitude; otherwise, selecting any \(r\) of them would contradict the
preceding bound.  Consequently, on that same event,
\[
 \|S_\theta(\zeta)\|_F
 \le C\tau\sqrt{r\log(epT/r)},\qquad
 \|\zeta-S_\theta(\zeta)\|_\infty\le\theta.
\]
For a centered \(G\), \eqref{eq:pairwise-centering-l1} therefore
gives
\[
\begin{aligned}
 |\langle\zeta,G\rangle|
 &\le\theta\|G\|_1+
      C\tau\sqrt{r\log(epT/r)}\,\|G\|_F\\
 &\le\frac{2\theta}{T}\|GD^\top\|_1+
      C\tau\sqrt{r\log(epT/r)}\,\|G\|_F.
\end{aligned}
\]
The stated tuning, with a sufficiently large constant, has
\(\lambda\ge16\theta/T\) and
\(\tau\sqrt{r\log(epT/r)}\lesssim\lambda T\sqrt r\).
This proves \eqref{eq:fusion-profiled-score}.

\paragraph{Part (ii): also absorb small predictor scores by the group penalty.}
Use \(W=Z(I-P)\), and write, only within this argument,
\[
 \ell=\log(esT/r),\qquad
 d=s\log(ep/s)+r\ell,\qquad
 \theta=A_\delta\tau\sqrt\ell.
\]
After thresholding the small centered coordinates, the remainder is
\[
 R=ZP+S_\theta(W)(I-P).
\]
Since \(Z-R=\{W-S_\theta(W)\}(I-P)\), the same centering inequality
yields, for every \(H\),
\begin{equation}
\label{eq:small-score-fusion-bound}
 |\langle Z-R,H\rangle|
 \le\theta\|H(I-P)\|_1
 \le\frac{2\theta}{T}\|HD^\top\|_1.
\end{equation}
We next bound \(R\) on any \(s\) predictors.  Set
\[
 M=\min\{sT,\lceil d/\ell\rceil\}.
\]
This fixed integer satisfies \(r\le M\le sT\),
\(M\le2d/\ell\), and
\[
 s\log(ep/s)+M\log(esT/M)\le3d.
\]
Apply \eqref{eq:supportwise-gaussian-net-bound} to \(W\) at
\((k,m)=(s,M)\), and \eqref{eq:common-score-support-bound} to \(Z\)
at \(k=s\).  Uniformly over \(|J|\le s\), these two events give
\[
 \|(ZP)_J\|_F\le C\tau\sqrt{s\log(ep/s)},\qquad
 \sup_{\substack{A\subseteq J\times[T]\\|A|\le M}}
       \|W_A\|_F\le C\tau\sqrt d.
\]
If \(M<sT\), then \(M\ell\ge d\).  For sufficiently large
\(A_\delta\), the preceding bound rules out \(M\) entries of \(W_J\) above
\(\theta\).
Thus the nonzero entries of \(S_\theta(W)_J\) are covered by the same
coordinate bound.  If \(M=sT\), all entries are already covered.
In either case, projection does not increase the predictor-restricted norm, so
\begin{equation}
\label{eq:two-level-row-subset-remainder}
 \sup_{|J|\le s}\|R_J\|_F\le C\tau\sqrt d.
\end{equation}

Choose \(\gamma=A'_\delta\tau\sqrt{d/s}\).  If \(A'_\delta\) is large
enough, \eqref{eq:two-level-row-subset-remainder} implies that fewer than
\(s\) predictor blocks of \(R\) have norm above \(\gamma\).  Control those
blocks jointly by Cauchy--Schwarz and the displayed bound, and control every
remaining block by \(\gamma\).  This gives
\[
 |\langle R,H\rangle|
 \le\gamma\|H\|_{2,1}+C\tau\sqrt d\,\|H\|_F.
\]
Combine this with \eqref{eq:small-score-fusion-bound}.  The stated tuning
has \(\lambda\ge16\theta/T\), \(\nu\ge8\gamma\), and
\(\tau\sqrt d\lesssim\nu\sqrt s\), proving
\eqref{eq:localized-group-fusion-score}.  Allocating fixed fractions of \(\delta\) to the design and conditional
noise events gives the stated probability.  These events are already uniform
over all supports of the required sizes, including supports selected from the
realized noise.
\end{proof}

\section{PROOF OF THE MINIMAX LOWER BOUND}
\label{sec:lower-bound-proof}

The lower bound has two sources: the active predictors are unknown, and so
are the exception locations within those predictors.  We bound each difficulty
separately and then combine the two rates.  The theorem proof comes first,
followed by the packing details.

\subsection{Proof of Theorem~\ref{thm:minimax-lower-bounds}}

The next lemma constructs the exception sets used in the proof.  Its proof,
together with the elementary packing facts it uses, appears in
Section~\ref{subsec:packing-details}.

\begin{lemma}[Packing predictor-specific exception locations]
\label{lem:predictor-exception-packing}
Let \(s\ge1\), \(T\ge2\), and \(1\le r\le s(T-1)\).  Put
\[
        r_0=\min\{r,s\lfloor T/2\rfloor\}.
\]
There is a collection \(\mathcal A\) of subsets of
\(\{1,\ldots,s\}\times\{1,\ldots,T\}\) such that every \(A\in\mathcal A\)
satisfies
\[
        |A|\le r_0\le r,
        \qquad
        |A_j|\le\lfloor T/2\rfloor,
        \qquad
        A_j:=\{t:(j,t)\in A\},
\]
and, for universal constants \(c_1,c_2>0\),
\begin{align}
 |A\triangle A'|&\ge c_1r_0,
 \qquad A\ne A',
 \label{eq:predictor-exception-separation}\\
 \log|\mathcal A|
 &\ge c_2r_0\log\!\left(\frac{esT}{r_0}\right).
 \label{eq:predictor-exception-cardinality}
\end{align}
Moreover,
\begin{equation}
\label{eq:r0-r-comparison}
 r_0\log\!\left(\frac{esT}{r_0}\right)
 \ge c_1 r\log\!\left(\frac{esT}{r}\right).
\end{equation}
\end{lemma}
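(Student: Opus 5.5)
We build \(\mathcal A\) by a Gilbert--Varshamov style construction applied to a constant-weight family with a per-predictor cap. We split into two cases according to whether \(r_0\) is small or large relative to the \(s\lfloor T/2\rfloor\) available positions. Write \(m=\lfloor T/2\rfloor\ge1\) and \(N=sm\), so \(r_0=\min\{r,N\}\).

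\emph{Ground set.} Restrict attention to the \(N\) positions \(\{1,\ldots,s\}\times\{1,\ldots,m\}\). Any subset of this grid automatically satisfies \(|A_j|\le m\).

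\emph{Case \(r_0\le N/2\).} Take the family of all subsets of the grid of size exactly \(r_0\). The standard sparse Varshamov--Gilbert lemma gives a subfamily with pairwise symmetric differences at least \(r_0/2\). The lemma applies to \(d\)-sparse binary vectors in dimension \(N\) with \(d\le N/2\), and yields \(\log|\mathcal A|\ge c\,r_0\log(eN/r_0)\). We prove this packing fact by a greedy count. A Hamming ball of radius \(r_0/2\) around a weight-\(r_0\) set contains at most \(\binom{r_0}{r_0/4}\binom{N-r_0}{r_0/4}\)-type many weight-\(r_0\) sets. The ratio of \(\binom{N}{r_0}\) to this count is \(\exp\{c\,r_0\log(N/r_0)\}\) once \(N/r_0\) exceeds a constant. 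When \(N/r_0\) is between \(2\) and that constant, the bound \(c\,r_0\) follows from the ordinary Varshamov--Gilbert bound restricted to a sub-grid of size \(2r_0\). Finally, \(N=s\lfloor T/2\rfloor\ge sT/3\), so \(\log(eN/r_0)\gtrsim\log(esT/r_0)\). This gives \eqref{eq:predictor-exception-cardinality}.

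\emph{Case \(N/2<r_0\le N\).} Here \(\log(esT/r_0)\) is bounded by a constant, so it suffices to have \(\log|\mathcal A|\gtrsim r_0\) and separation \(\gtrsim r_0\). Apply the plain Varshamov--Gilbert bound on the whole grid of \(N\) positions. It gives \(e^{cN}\) subsets with pairwise Hamming distance at least \(N/8\ge r_0/8\). Each of these subsets has size at most \(N\), but we need \(|A|\le r_0\). To get this, intersect with a fixed sub-grid of size \(\lfloor r_0\rfloor\ge N/2\) and run Varshamov--Gilbert there instead. The loss is only constants.

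\emph{Comparison \eqref{eq:r0-r-comparison}.} If \(r_0=r\) there is nothing to prove. Otherwise \(N<r\le s(T-1)\le 2sT\) and \(r_0=N\ge sT/3\). Then \(r_0\log(esT/r_0)\ge sT/3\). Also \(r\log(esT/r)\le 2sT\log(e/\,\cdot)\lesssim sT\), because \(x\log(esT/x)\) is increasing for \(x\le sT\) and bounded by a constant times \(sT\) for \(x\in[sT,2sT]\). The constants can then be unified into \(c_1\).

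The main obstacle is the sparse packing count in the first case. Getting the logarithmic factor \(\log(eN/r_0)\) uniformly, including the intermediate regime where \(N/r_0\) is moderate, needs some care. I would cite the standard lemma (e.g. Raskutti et al.) rather than redo the binomial estimates.
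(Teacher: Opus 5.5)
Your proof is correct, but it is organized differently from the paper's.

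\textbf{The paper's route.} The paper uses a product construction. When \(r_0\le s\), it codes a set of \(r_0\) predictors with the constant-weight packing and then one task location per predictor with a finite-alphabet code over \([T]\). When \(r_0>s\), it places exactly \(q=\lfloor r_0/s\rfloor\) exceptions in every predictor. It does this by using a packing of \(q\)-subsets of \([T]\) as the alphabet of a code across the \(s\) predictors.

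\textbf{Your route.} You collapse everything into a single code on the reduced grid \([s]\times[\lfloor T/2\rfloor]\). The per-predictor cap \(|A_j|\le\lfloor T/2\rfloor\) then holds automatically. Since \(N=s\lfloor T/2\rfloor\ge sT/3\), discarding half the tasks costs only a constant inside \(\log(esT/r_0)\).

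For \(r_0\le N/2\), your first case is exactly Lemma~\ref{lem:constant-weight-packing}(i) with ground set of size \(N\) and \(k=r_0\). That lemma already covers the whole range \(1\le k\le N/2\), so the intermediate regime you flag as the main obstacle needs no separate treatment, and you do not need to cite Raskutti et al. For \(r_0>N/2\), a binary code on \(r_0\) fixed grid cells suffices, since \(\log(esT/r_0)\le\log(6e)\) there. That code is Lemma~\ref{lem:constant-weight-packing}(ii) with \(M=2\), or simply all \(2^{r_0}\) subsets when \(r_0<8\). Your comparison step for \(r_0\) versus \(r\) is the same as the paper's.

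\textbf{What each buys.} Your version is shorter and has fewer cases. The paper's keeps the ``which predictors'' and ``which tasks'' sources of difficulty visible and gives a uniform number of exceptions per active predictor. That extra structure is not needed for the Fano argument.

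\textbf{One small slip.} In the intermediate regime you propose the ordinary Varshamov--Gilbert bound on a sub-grid of size \(2r_0\). Those subsets can have up to \(2r_0\) elements, which violates \(|A|\le r_0\). Either use a sub-grid of size \(r_0\), as in your second case (only \(\log|\mathcal A|\gtrsim r_0\) is needed there, since \(\log(eN/r_0)\) is bounded), or apply the constant-weight greedy count directly. That count gives log-cardinality of order \(r_0\log(eN/r_0)\) throughout \(r_0\le N/2\).

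\textbf{A minor simplification.} In the comparison step, \(r\le s(T-1)<sT\), so \(r\log(esT/r)\le sT\) follows directly. The detour through \([sT,2sT]\) is unnecessary.
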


\begin{proof}[Proof of Theorem~\ref{thm:minimax-lower-bounds}]
Under the balanced Gaussian model,
\begin{equation}
\label{eq:gaussian-regression-kl}
 D_{\rm KL}(P_B\|P_{B'})
 =\frac{n}{2\sigma^2}\|B-B'\|_F^2,
\end{equation}
because the design distribution is common to all parameters and
\(\mathbb E X^{(t)\top}X^{(t)}=nI_p\).  We use standard Fano and Assouad
arguments.  For a packing with at
least eight points, squared separation \(\rho^2\), and maximum KL divergence
from a fixed point bounded by a sufficiently small multiple of its
log-cardinality, Fano's inequality gives expected squared-error risk at least
\(c\rho^2\).  For a packing of between two and seven points, choose two
points and decrease the same universal amplitude constant so that their KL
divergence is at most \(1/8\).  Pinsker's inequality bounds their total
variation distance by \(1/4\).  Assigning an estimate to its nearer packing
point then gives maximal squared-error risk at least
\(\rho^2(1-1/4)/8=3\rho^2/32\).  Hence the same lower-bound conclusion also covers packings with only a few
points.

\paragraph{Step 1: Unknown active predictors.}
First fix the active predictors to be \(\{1,\ldots,s\}\) and let
\[
        u=\mathbf1_T/\sqrt T.
\]
For \(\omega\in\{-1,1\}^s\), put
\[
        B_\omega
        =a\sum_{j=1}^s\omega_j e_j u^\top,
        \qquad
        a^2=\frac{\sigma^2}{16n}.
\]
Each active predictor is constant across tasks, so
\(B_\omega\in\Theta_{\rm share}(s,r)\) for every \(r\ge0\).  Neighboring hypercube vertices have squared Frobenius
distance \(4a^2\), and their KL divergence is \(1/8\) by
\eqref{eq:gaussian-regression-kl}.  Assouad's argument therefore gives
\begin{equation}
\label{eq:lower-bound-active-values}
 \inf_{\widehat B}\sup_{B^*\in\Theta_{\rm share}(s,r)}
 \mathbb E\|\widehat B-B^*\|_F^2
 \ge c\frac{\sigma^2s}{n}.
\end{equation}

If \(s\le p/2\), apply Lemma~\ref{lem:constant-weight-packing}(i) to
\(s\)-subsets \(J\subseteq[p]\).  For each packing set define
\[
        B_J=\mu\sum_{j\in J}e_j u^\top.
\]
These matrices also have no exceptions: \(r(B_J)=0\).  Pairwise distances satisfy
\[
        \|B_J-B_{J'}\|_F^2
        =\mu^2|J\triangle J'|
        \ge c\mu^2s,
\]
and the divergence from a fixed packing point is at most
\(ns\mu^2/\sigma^2\).  Taking
\[
        \mu^2
        =c_\mu\frac{\sigma^2}{n}\log(p/s)
\]
with sufficiently small universal \(c_\mu\), Fano's inequality gives
\begin{equation}
\label{eq:lower-bound-active-support}
 \inf_{\widehat B}\sup_{B^*\in\Theta_{\rm share}(s,r)}
 \mathbb E\|\widehat B-B^*\|_F^2
 \ge c\frac{\sigma^2}{n}s\log(p/s).
\end{equation}
When \(s>p/2\), \(\log(ep/s)\) is bounded by an absolute constant, so
\eqref{eq:lower-bound-active-values} already has the required order.
Combining \eqref{eq:lower-bound-active-values} and
\eqref{eq:lower-bound-active-support} yields
\begin{equation}
\label{eq:lower-bound-active-total}
 \inf_{\widehat B}\sup_{B^*\in\Theta_{\rm share}(s,r)}
 \mathbb E\|\widehat B-B^*\|_F^2
 \ge c\frac{\sigma^2}{n}s\log(ep/s).
\end{equation}

\paragraph{Step 2: Unknown exception locations.}
Let \(\mathcal A\) be the collection in
Lemma~\ref{lem:predictor-exception-packing}.  Fix a constant \(b>0\), and for
\(A\in\mathcal A\) define
\[
 (B_A)_{j,t}
 =\begin{cases}
 b+\mu\mathbf1_{\{(j,t)\in A\}},&j\le s,\\
 0,&j>s.
 \end{cases}
\]
For predictor \(j\le s\), let \(w_j=|A_j|\).  Since
\(w_j\le\lfloor T/2\rfloor\), the value \(b\) is a most common coefficient and
\[
 \min_{a\in\mathbb R}
 \|(B_A)_{j,:}-a\mathbf1_T\|_0=w_j.
\]
Thus every packing point belongs to \(\Theta_{\rm share}(s,r)\).
For two packing elements,
\[
        \|B_A-B_{A'}\|_F^2
        =\mu^2|A\triangle A'|.
\]
Let \(r_0\) be as in Lemma~\ref{lem:predictor-exception-packing} and take
\[
        \mu^2
        =c_\mu\frac{\sigma^2}{n}
        \log\!\left(\frac{esT}{r_0}\right)
\]
with a sufficiently small universal constant.  The maximum KL divergence
from a fixed packing point is at most
\[
 C\frac{n\mu^2r_0}{\sigma^2}
 \le c' r_0\log(esT/r_0),
\]
which is a sufficiently small multiple of \(\log|\mathcal A|\) by
\eqref{eq:predictor-exception-cardinality}.  Fano's inequality and
\eqref{eq:predictor-exception-separation} therefore give
\[
 \inf_{\widehat B}\sup_{B^*\in\Theta_{\rm share}(s,r)}
 \mathbb E\|\widehat B-B^*\|_F^2
 \ge
 c\frac{\sigma^2}{n}
 r_0\log\!\left(\frac{esT}{r_0}\right).
\]
Using \eqref{eq:r0-r-comparison},
\begin{equation}
\label{eq:lower-bound-predictor-exceptions}
 \inf_{\widehat B}\sup_{B^*\in\Theta_{\rm share}(s,r)}
 \mathbb E\|\widehat B-B^*\|_F^2
 \ge
 c\frac{\sigma^2}{n}
 r\log\!\left(\frac{esT}{r}\right).
\end{equation}
\paragraph{Step 3: Combine the two bounds.}
The minimax risk is bounded below by both
\eqref{eq:lower-bound-active-total} and
\eqref{eq:lower-bound-predictor-exceptions}.  Since
\(\max\{a,b\}\ge(a+b)/2\), the two bounds combine, after changing the
universal constant, to give \eqref{eq:predictor-sharing-lower-bound}.
\end{proof}

\subsection{Packing Details}
\label{subsec:packing-details}

We use two elementary packing facts.  They are stated here so that the
lower-bound construction, including the small-parameter cases, is self-contained.

\begin{lemma}[Two elementary packing facts]
\label{lem:constant-weight-packing}
\label{lem:finite-alphabet-packing}
\emph{(i) Sets of equal size.}
For integers \(1\le k\le N/2\), there is a collection \(\mathcal V\)
of \(k\)-subsets of \(\{1,\ldots,N\}\) such that
\[
 \log|\mathcal V|\ge c k\log(N/k),\qquad
 |A\triangle A'|\ge k/2\quad(A\ne A').
\]
In particular, \(|\mathcal V|\ge2\).

\emph{(ii) Words in a finite alphabet.}
For integers \(d\ge1\) and \(M\ge2\), there is
\(\mathcal W\subseteq\{1,\ldots,M\}^d\) such that
\[
 \log|\mathcal W|\ge c d\log M,\qquad
 d_{\rm H}(w,w')\ge c d\quad(w\ne w'),
\]
where \(d_{\rm H}\) counts the differing positions.  Constants are universal.
\end{lemma}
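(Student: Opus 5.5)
Both parts are classical and I would prove them with the probabilistic method (Gilbert–Varshamov style), greedily extracting a maximal separated family and bounding the size of Hamming balls.

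\emph{Part (i).} Consider all \(\binom Nk\) subsets of size \(k\). Take a maximal collection \(\mathcal V\) with pairwise \(|A\triangle A'|\ge k/2\). By maximality, the sets \(\{A':|A\triangle A'|<k/2\}\) for \(A\in\mathcal V\) cover everything. So \(|\mathcal V|\) is at least \(\binom Nk\) divided by the size of one such neighbourhood.

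The neighbourhood of \(A\) consists of sets sharing more than \(3k/4\) elements with \(A\). Its size is therefore at most
\[
 \sum_{i<k/4}\binom ki\binom{N-k}i\le 2^k\binom{N}{\lfloor k/4\rfloor}.
\]
Using \((N/k)^k\le\binom Nk\) and \(\binom{N}{m}\le(eN/m)^m\), the log-ratio is at least
\[
 k\log(N/k)-k\log2-(k/4)\log(4eN/k).
\]
This is \(\ge ck\log(N/k)\) once \(N/k\) exceeds a large absolute constant \(C_*\).

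For \(2\le N/k\le C_*\) the target \(ck\log(N/k)\) is of order \(k\), and I would handle this regime separately. Split \(\{1,\dots,N\}\) into \(k\) disjoint pairs and pick one element from each pair. The Hamming-code bound for binary words gives \(2^{ck}\) words with pairwise distance \(\ge k/4\), so the corresponding sets have symmetric difference \(\ge k/2\). The very small case (e.g. \(k=1\)) is trivial: take \(\{1\},\{2\}\), giving \(|\mathcal V|\ge2\) and \(\log 2\ge c\log(N/k)\) when \(N/k\) is bounded. For unbounded \(N/k\) the large case applies, provided that for small \(k\) we directly take \(\lfloor N/k\rfloor\) disjoint \(k\)-sets. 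That family gives \(\log|\mathcal V|\ge\log(N/k)-\log 2\), which is adequate whenever \(k\) is bounded.

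\emph{Part (ii).} Use the same greedy argument in \(\{1,\dots,M\}^d\) with radius \(\rho=\lfloor cd\rfloor\). The Hamming ball has size
\[
 \sum_{i\le\rho}\binom di(M-1)^i\le(ed/\rho)^\rho M^\rho,
\]
so
\[
 \log|\mathcal W|\ge d\log M-\rho\log M-\rho\log(e/c).
\]
With \(c\) small this is \(\ge d\log M/2\), since \(\log M\ge\log2\) absorbs the \(\rho\log(e/c)\) term after choosing \(c\log(e/c)\le\log2/4\). The case \(cd<1\) requires separation only \(\ge1\), so \(\mathcal W\) is the whole space.

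The main obstacle is bookkeeping the regimes so that a single universal \(c\) works in all of them: \(N/k\) bounded versus large in (i), and small \(d\) in (ii). I would resolve this by covering the bounded-ratio and small-size regimes with the explicit constructions above and shrinking \(c\) at the end.
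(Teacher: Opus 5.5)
Your proposal is correct and follows essentially the paper's route, a greedy Gilbert--Varshamov packing with a Hamming-ball count in both parts. The one difference is in (i): for $2\le N/k\le C_*$ your crude neighbourhood bound $2^k\binom{N}{\lfloor k/4\rfloor}$ is too weak, and you patch that regime with the pairing construction reduced to a binary code, whereas the paper asserts that a single count covers all regimes; your ``disjoint $k$-sets'' proviso for small $k$ is unnecessary, since your large-ratio bound already holds for every $k$. One small fix in (ii): $x\mapsto x\log(ed/x)$ is increasing on $(0,d]$, so the correct bound is $\rho\log(ed/\rho)\le cd\log(e/c)$, not $\rho\log(e/c)$, which goes the wrong way because $d/\rho\ge1/c$; with this correction, your choice $c\log(e/c)\le\log 2/4$ still gives the claim.
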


\begin{proof}
For part (i), apply the constant-weight Varshamov--Gilbert construction,
greedily selecting \(k\)-subsets separated by at least \(k/2\) in symmetric
difference.  The Hamming-ball count gives
\(\log|\mathcal V|\ge c k\log(N/k)\), including \(k=1\).

For part (ii), greedily select words and, after each selection, remove the
Hamming ball of radius \(\lfloor d/8\rfloor\).  Such a ball has size at most
\[
 \sum_{k\le d/8}\binom dk(M-1)^k
 \le \exp\{dH(1/8)\}M^{d/8},
\]
where \(H\) is the binary entropy function.  For \(M\ge2\), its logarithm
is at most \((1-c)d\log M\) for a universal \(c>0\).  It follows that the selected family has log-cardinality at least
\(cd\log M\), while every pair of words is separated by more than \(d/8\).
Reducing \(c\) if necessary covers the integer endpoints.
\end{proof}

\begin{proof}[Proof of Lemma~\ref{lem:predictor-exception-packing}]
We use a direct product construction.  First suppose \(r_0\le s\) and set
\(k=r_0\).  If \(k\le s/2\), use
Lemma~\ref{lem:constant-weight-packing}(i) to choose a family of \(k\)-subsets
of the predictors with log-cardinality at least \(ck\log(s/k)\) and predictor-set
symmetric difference at least \(k/2\).  For each selected predictor place exactly
one exception.  Lemma~\ref{lem:finite-alphabet-packing}(ii), with alphabet size \(T\), assigns
the task locations with Hamming separation at least \(ck\) and
log-cardinality at least \(ck\log T\).  Combining the predictor-set and task-location codes gives
\[
 \log|\mathcal A|\ge
 ck\{\log(s/k)+\log T\}
 \ge ck\log(e sT/k),
\]
with pairwise symmetric difference at least \(ck\).  If \(k>s/2\), fix the
first \(k\) predictors and use only the task-location code from
Lemma~\ref{lem:finite-alphabet-packing}(ii).  Since \(s/k<2\) and \(T\ge2\),
its log-cardinality \(ck\log T\) is still at least
a constant multiple of \(k\log(e sT/k)\).

Next suppose \(r_0>s\), and set
\[
        q=\lfloor r_0/s\rfloor,
        \qquad r_1=sq.
\]
Then \(1\le q\le\lfloor T/2\rfloor\) and
\(r_0/2\le r_1\le r_0\).  By
Lemma~\ref{lem:constant-weight-packing}(i), there is a family \(\mathcal V\) of
\(q\)-subsets of the \(T\) tasks with pairwise symmetric difference at least
\(q/2\) and
\[
        \log|\mathcal V|\ge cq\log(T/q).
\]
Apply Lemma~\ref{lem:finite-alphabet-packing}(ii) with alphabet \(\mathcal V\)
to obtain a subset of \(\mathcal V^s\) in which any two elements differ on at
least a constant fraction of the \(s\) predictors and whose log-cardinality is at
least \(cs\log|\mathcal V|\).  Interpreting each codeword as the exception set in
each predictor gives sets with exactly \(r_1\le r\) entries, at most \(T/2\) per
predictor, pairwise symmetric difference at least \(cr_1\), and
\[
 \log|\mathcal A|
 \ge cr_1\log(T/q)
 \ge cr_0\log(e sT/r_0),
\]
where we used \(r_1\in[r_0/2,r_0]\) and \(r_0\le sT/2\).
This proves \eqref{eq:predictor-exception-separation}--
\eqref{eq:predictor-exception-cardinality}.

Finally, compare \(r_0\) with \(r\).  They are equal when
\(r\le s\lfloor T/2\rfloor\).  Otherwise,
\(r_0=s\lfloor T/2\rfloor\ge sT/3\), since \(T\ge2\).
The function \(x\mapsto x\log(esT/x)\) is increasing on \((0,sT]\)
and is at most \(sT\).  Therefore
\[
 r_0\log(esT/r_0)\ge r_0\ge sT/3
       \ge\tfrac13 r\log(esT/r),
\]
which proves \eqref{eq:r0-r-comparison}.
\end{proof}

\section{PROOF OF THE SHARED-TASK RESULT}
\label{sec:taskwise-proof}

We first impose the constraint that all tasks in \(\mathcal C\) share one
coefficient vector and bound the error of this constrained estimator.  We then
verify the optimality conditions of the original problem.  Strict inequalities
for pairs within \(\mathcal C\) force every unrestricted minimizer to agree on
that set, so the constrained error bounds carry over to the original estimator.

The set \(\mathcal C\), the true parameter, and the tuning parameter are
fixed before sampling; the proof does not search over \(\mathcal C\).
Write
\[
 m=|\mathcal C|=T-q,\qquad
 S_\Delta=\bigcup_{u\in\mathcal C^c}\operatorname{supp}(\delta^{*(u)}),
 \qquad |S_\Delta|=s_\Delta,
\]
and, for a general failure probability \(\delta\), let
\[
 L=\log\{pT(T-1)/\delta\},\qquad
 \widehat\Sigma_{\mathcal C}=\frac1m
          \sum_{t\in\mathcal C}\widehat\Sigma_t.
\]
We retain \(\widehat\Sigma_\bullet,h_N,\zeta\) from
\eqref{eq:common-noise-and-residual}.  The theorem proof will set
\(\delta=(pT)^{-2}\).

\subsection{Three Ingredients}
\label{subsec:taskwise-proof-ingredients}

The first lemma collects the design and noise bounds.  The second controls
the profiled quadratic loss, and the third removes the temporary equality
constraint.  Their proofs appear in Section~\ref{subsec:taskwise-supporting-proofs}.

\begin{lemma}[A joint design and noise bound]
\label{lem:taskwise-design-noise}
\label{lem:profiled-design-event}
\label{lem:profiled-score-pooled-noise}
Under Assumption~\ref{ass:task-specific-design}, suppose \(m\ge T/2\),
\(n_{\min}\ge CL\), and
\(n_{\min}T\ge C\{p+\log(T/\delta)\}\).
With probability at least \(1-\delta\),
\begin{equation}
\label{eq:profiled-design-event}
 cI\preceq\widehat\Sigma_\bullet\preceq CI,\qquad
 cI\preceq\widehat\Sigma_{\mathcal C}\preceq CI,\qquad
 \max_{t,j}(\widehat\Sigma_t)_{jj}\le C,
\end{equation}
\begin{equation}
\label{eq:taskwise-operator-design-event}
 \max_t\|\widehat\Sigma_t\|_{\rm op}
 \le C(1+p/n_{\min}),
\end{equation}
and
\begin{align}
 \|h_N\|_2^2
 &\le C\frac{\sigma^2\{p+\log(2/\delta)\}}{n_{\min}T},
 \label{eq:pooled-noise-common-rate}\\
 \max_t\|w_t-\widehat\Sigma_t h_N\|_\infty
 &\le C\sigma\sqrt{L/n_{\min}}.
 \label{eq:profiled-cluster-noise-residual}
\end{align}
In particular, the last display divided by \(T\) bounds
\(\max_t\|\zeta_t\|_\infty\).
\end{lemma}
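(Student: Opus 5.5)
The plan is to prove each of the five displays on its own event and then take a union bound, giving each event a fixed fraction of \(\delta\). All of them use tools already developed in the proof of Lemma~\ref{lem:upper-design-bounds} and Lemma~\ref{lem:upper-noise-bounds}.

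\emph{Covariance bounds \eqref{eq:profiled-design-event}.} For the pooled matrix \(\widehat\Sigma_\bullet\), I fix a unit vector \(v\). Then \(v^\top\widehat\Sigma_\bullet v\) is a weighted sum of \(\sum_t n_t\) independent Gaussian squares with weights \(1/(Tn_t)\). It has mean in \([c_0,C_0]\), squared weight sum at most \(1/(Tn_{\min})\), and maximal weight at most \(1/(Tn_{\min})\). The weighted Bernstein inequality \eqref{eq:weighted-bernstein} gives a constant-order deviation with probability at most \(2e^{-cn_{\min}T}\). A \(1/4\)-net of size \(9^p\) together with \eqref{eq:finite-net-comparisons} then gives \(cI\preceq\widehat\Sigma_\bullet\preceq CI\) once \(n_{\min}T\ge C\{p+\log(1/\delta)\}\). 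The same computation applies to \(\widehat\Sigma_{\mathcal C}\): since \(m\ge T/2\), the weights are \(1/(mn_t)\le 2/(Tn_{\min})\). For the diagonal entries, each \((\widehat\Sigma_t)_{jj}\) is an average of \(n_t\) squares with mean at most \(C_0\). Bernstein plus a union bound over \(pT\) pairs gives \(\max_{t,j}(\widehat\Sigma_t)_{jj}\le C\) when \(n_{\min}\ge C\log(pT/\delta)\), and this is implied by \(n_{\min}\ge CL\).

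\emph{Operator bound \eqref{eq:taskwise-operator-design-event}.} I use the standard Gaussian singular value bound: \(\|X^{(t)}\Sigma_t^{-1/2}\|_{\rm op}\le\sqrt{n_t}+\sqrt p+\sqrt{2\log(2T/\delta)}\) with probability at least \(1-\delta/(2T)\). Squaring and dividing by \(n_t\) gives \(\|\widehat\Sigma_t\|_{\rm op}\le C_0\{1+p/n_t+\log(T/\delta)/n_t\}\). The last term is \(O(1)\) because \(n_{\min}\ge CL\). A union bound over \(t\) finishes this part.

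\emph{Noise bounds.} I work conditionally on the design, on the event \(cI\preceq\widehat\Sigma_\bullet\preceq CI\). The conditional tail bound \eqref{eq:pooled-projection-hn-tail} with \(u=\log(2/\delta)\) gives \eqref{eq:pooled-noise-common-rate}. For \eqref{eq:profiled-cluster-noise-residual}, I use \eqref{eq:pooled-projection-coordinate-residual}. Given the diagonal bound, each coordinate \((w_t-\widehat\Sigma_th_N)_j\) is conditionally centered Gaussian with sub-Gaussian norm at most \(C\sigma/\sqrt{n_{\min}}\). A union bound over the \(pT\le pT(T-1)\) pairs \((j,t)\) at level \(\delta\) gives the maximum bound \(C\sigma\sqrt{L/n_{\min}}\). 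The final remark follows from \(\zeta_t=T^{-1}(w_t-\widehat\Sigma_th_N)\).

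\emph{Main obstacle.} The only delicate point is the ordering of the conditioning. The noise bounds hold conditionally on \(X\) and only on the design event, so I integrate the conditional probabilities over that event and add the design failure probability. With the failure budget split as described, the total failure probability is at most \(\delta\) after enlarging the constants.
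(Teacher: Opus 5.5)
Your proposal is correct and follows the paper's proof essentially step for step. You use weighted Bernstein plus a \(1/4\)-net for \(\widehat\Sigma_\bullet\) and \(\widehat\Sigma_{\mathcal C}\) (via \(m\ge T/2\)), a union bound over the \(pT\) diagonal entries, and the conditional bounds \eqref{eq:pooled-projection-hn-tail} and \eqref{eq:pooled-projection-coordinate-residual} integrated over the design event; the only deviation is that you obtain \eqref{eq:taskwise-operator-design-event} from the Gaussian extreme-singular-value bound for \(X^{(t)}\Sigma_t^{-1/2}\) rather than from the paper's net-and-Bernstein bound on \(\|\widehat\Sigma_t-\Sigma_t\|_{\rm op}\), which is an equally valid route to the same conclusion.
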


\paragraph{The loss and penalty under the equality constraint.}
For outside-task deviations \(G=[g^{(u)}:u\in\mathcal C^c]\), the complete
pairwise penalty becomes
\begin{equation}
\label{eq:collapsed-pairwise-penalty}
 \mathcal P(G)=m\sum_{u\in\mathcal C^c}\|g^{(u)}\|_1
 +\sum_{\substack{u<v\\u,v\in\mathcal C^c}}
             \|g^{(u)}-g^{(v)}\|_1.
\end{equation}
The first term collects all pairs involving the \(m\) shared tasks.  The
penalty is additive across predictors and satisfies
\begin{equation}
\label{eq:collapsed-penalty-l1-equivalence}
 m\|G\|_1\le\mathcal P(G)\le T\|G\|_1.
\end{equation}
For the upper bound, each outside coefficient appears in \(q-1\) additional
pairs, so the triangle inequality gives the factor \(m+q-1\le T\).  When \(q=0\), \(G\) is empty and the penalty is zero.

After fitting the common vector, the quadratic loss for a deviation error
\(G\) is
\begin{equation}
\label{eq:profiled-quadratic-variational}
 \mathcal Q_X(G)=\min_{a\in\mathbb R^p}\frac1{2T}
 \left\{\sum_{t\in\mathcal C}a^\top\widehat\Sigma_ta
 +\sum_{u\in\mathcal C^c}(a+g^{(u)})^\top
                   \widehat\Sigma_u(a+g^{(u)})\right\}.
\end{equation}
Because at least half the tasks are shared, this profiled loss still controls
\(G\), as shown next.

\begin{lemma}[Loss after fitting the common vector]
\label{lem:profiled-pairwise-curvature}
Under Assumption~\ref{ass:task-specific-design}, suppose \(m\ge T/2\) and
\begin{equation}
\label{eq:profiled-curvature-sample}
 n_{\min}\ge C(1+s_\Delta q)L,\qquad
 n_{\min}T\ge C\{p+s_\Delta qL+\log(T/\delta)\}.
\end{equation}
With probability at least \(1-\delta\), every \(G\) satisfying
\begin{equation}
\label{eq:profiled-entry-cone}
 \|G_{S_\Delta^c}\|_1\le6\|G_{S_\Delta}\|_1
\end{equation}
obeys
\begin{align}
 \mathcal Q_X(G)&\ge\frac cT\|G\|_F^2,
 \label{eq:profiled-curvature-lower}\\
 \frac1T\sum_{u\in\mathcal C^c}
       g^{(u)\top}\widehat\Sigma_u g^{(u)}
 &\le\frac CT\|G\|_F^2,
 \label{eq:profiled-curvature-upper}\\
 \left\|\widehat\Sigma_\bullet^{-1}\frac1T
       \sum_{u\in\mathcal C^c}\widehat\Sigma_u g^{(u)}\right\|_2^2
 &\le\frac CT\min\!\left\{1,
       \frac qT\left(1+\frac p{n_{\min}}\right)\right\}\|G\|_F^2.
 \label{eq:profiled-cross-bound}
\end{align}
If \(s_\Delta q=0\), condition \eqref{eq:profiled-entry-cone} forces
\(G=0\), and the conclusions hold on the pooled design event.
\end{lemma}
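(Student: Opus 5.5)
The plan is to reduce all three conclusions to Lemma~\ref{lem:upper-design-bounds}(ii), using a recentering of the outside deviations, and to get the cross bound \eqref{eq:profiled-cross-bound} by a direct duality argument.

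\emph{Lower bound \eqref{eq:profiled-curvature-lower}.} For \(a\in\mathbb R^p\), let \(\widetilde G\in\mathbb R^{p\times T}\) have column \(g^{(u)}\) for \(u\in\mathcal C^c\) and zero columns on \(\mathcal C\). The bracket in \eqref{eq:profiled-quadratic-variational} is then exactly \(F_X(a\mathbf1_T^\top+\widetilde G)^2\). Write \(\bar g=T^{-1}\sum_{u\in\mathcal C^c}g^{(u)}\) and \(G_c=\widetilde G-\bar g\mathbf1_T^\top\), so that \(G_c\mathbf1_T=0\) and \(a\mathbf1_T^\top+\widetilde G=(a+\bar g)\mathbf1_T^\top+G_c\).

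Lemma~\ref{lem:upper-design-bounds}(ii) then gives \(F_X^2\ge c\{T\|a+\bar g\|_2^2+\|G_c\|_F^2\}\ge c\|G_c\|_F^2\), uniformly in \(a\), provided \(G_c\) satisfies the predictor cone with \(k\asymp s_\Delta q\). Two checks are needed.
\begin{itemize}
\item By orthogonality, \(\|G_c\|_F^2=\|G\|_F^2-T\|\bar g\|_2^2\). Cauchy--Schwarz gives \(T\|\bar g\|_2^2\le (q/T)\|G\|_F^2\le\|G\|_F^2/2\), since \(m\ge T/2\). Hence \(\|G_c\|_F^2\ge\|G\|_F^2/2\).
\item The entry cone \eqref{eq:profiled-entry-cone} gives \(\|G\|_1\le7\|G_{S_\Delta}\|_1\le7\sqrt{s_\Delta q}\,\|G\|_F\), because \(G_{S_\Delta}\) has at most \(s_\Delta q\) entries. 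Also \(T\|\bar g\|_1\le\|G\|_1\). Therefore
\[
\|G_c\|_{2,1}\le\|G_c\|_1\le2\|G\|_1\le14\sqrt{s_\Delta q}\,\|G\|_F\le20\sqrt{s_\Delta q}\,\|G_c\|_F .
\]
\end{itemize}
The sample conditions of Lemma~\ref{lem:upper-design-bounds} with \(k=s_\Delta q\) are \(n_{\min}\ge C\{\bar k\log(ep/\bar k)+\log(T/\delta)\}\lesssim(1+s_\Delta q)L\) and \(n_{\min}T\ge C\{p+\log(1/\delta)\}\). Both are covered by \eqref{eq:profiled-curvature-sample}. Dividing by \(2T\) yields \eqref{eq:profiled-curvature-lower}. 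When \(s_\Delta q=0\), the cone forces \(G=0\) and nothing is needed beyond the pooled event.

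\emph{Upper bound \eqref{eq:profiled-curvature-upper}.} This is the upper half of the same lemma, applied with \(a=0\) to \(\widetilde G=\bar g\mathbf1_T^\top+G_c\). It gives \(\sum_u g^{(u)\top}\widehat\Sigma_u g^{(u)}=F_X(\widetilde G)^2\le C\{T\|\bar g\|_2^2+\|G_c\|_F^2\}=C\|G\|_F^2\).

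\emph{Cross bound \eqref{eq:profiled-cross-bound}.} Since \(\widehat\Sigma_\bullet\succeq cI\) on the event \eqref{eq:profiled-design-event}, it suffices to bound \(\|T^{-1}\sum_u\widehat\Sigma_ug^{(u)}\|_2\). Test against a unit vector \(v\) and apply Cauchy--Schwarz in each \(\widehat\Sigma_u\)-inner product:
\[
\Bigl|T^{-1}\sum_u v^\top\widehat\Sigma_ug^{(u)}\Bigr|\le T^{-1}\Bigl(\sum_{u\in\mathcal C^c}v^\top\widehat\Sigma_uv\Bigr)^{1/2}\Bigl(\sum_u g^{(u)\top}\widehat\Sigma_ug^{(u)}\Bigr)^{1/2}.
\]
The second factor is at most \(C\|G\|_F\) by the upper bound just proved. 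The first factor admits two bounds:
\begin{itemize}
\item \(\sum_{u\in\mathcal C^c}v^\top\widehat\Sigma_uv\le T\,v^\top\widehat\Sigma_\bullet v\le CT\), which gives the term \(1\) in the minimum;
\item \(\sum_{u\in\mathcal C^c}v^\top\widehat\Sigma_uv\le q\max_t\|\widehat\Sigma_t\|_{\rm op}\le Cq(1+p/n_{\min})\) by \eqref{eq:taskwise-operator-design-event}, which gives the term \((q/T)(1+p/n_{\min})\).
\end{itemize}
Squaring and taking the smaller bound gives \eqref{eq:profiled-cross-bound}. I would intersect the event of Lemma~\ref{lem:upper-design-bounds} with that of Lemma~\ref{lem:taskwise-design-noise}, allocating fractions of \(\delta\).

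\emph{Main obstacle.} The delicate step is the lower bound. The naive route expands \((a+g)^\top\widehat\Sigma_u(a+g)\) and controls the dense common part \(a\) on the outside tasks, and that would require \(q(1+p/n_{\min})\ll T\), which is not assumed. The recentering trick avoids this by moving \(\bar g\) into the common direction, where Lemma~\ref{lem:upper-design-bounds}(ii) already tolerates a dense vector. The remaining care is the bookkeeping that converts the entry-wise cone into the predictor cone \(\|G_c\|_{2,1}\lesssim\sqrt{s_\Delta q}\,\|G_c\|_F\) with a fixed constant \(K\), together with checking that \(k=s_\Delta q\), possibly exceeding \(p\) and then truncated to \(\bar k=p\), fits within \eqref{eq:profiled-curvature-sample}.
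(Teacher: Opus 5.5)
Your proof is correct. It differs from the paper's mainly in how the lower bound \eqref{eq:profiled-curvature-lower} is obtained.

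The paper does not recenter. On the pooled design event it observes that the shared tasks alone supply a fixed fraction of the pooled covariance: $\frac1T\sum_{u\in\mathcal C^c}\widehat\Sigma_u\preceq(1-\kappa)\widehat\Sigma_\bullet$, which follows from $\frac mT\widehat\Sigma_{\mathcal C}\succeq cI$ and $\widehat\Sigma_\bullet\preceq CI$. Cauchy--Schwarz and completing the square then show that profiling out $a$ removes at most a $(1-\kappa)$ fraction of the outside-task loss, i.e.\ $\mathcal Q_X(G)\ge\frac{\kappa}{2T}\sum_u g^{(u)\top}\widehat\Sigma_ug^{(u)}$. Only after this does the paper apply the sparse bound of Lemma~\ref{lem:upper-design-bounds}(i) to $G$ padded with zero columns. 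That single step gives both \eqref{eq:profiled-curvature-lower} and \eqref{eq:profiled-curvature-upper}.

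Your route instead moves $\bar g$ into the common direction and uses part~(ii) as a black box. The price is the bookkeeping you carried out correctly: a factor $2$ in $\ell_1$, a factor $\sqrt2$ in Frobenius norm via $q\le T/2$, and the truncation $\bar k=\min\{p,s_\Delta q\}$.

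Each route has its merits. The paper's argument is more elementary and gives a stronger deterministic intermediate statement: on the pooled event, the profiled loss is comparable to the unprofiled outside-task loss, and $m\ge T/2$ enters only through $\kappa$. The same Cauchy--Schwarz inequality then feeds the cross bound. Your argument avoids any new covariance comparison but leans on the heavier, projection-based part~(ii).

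The obstacle you flag is therefore not a real one. The direct expansion works once the common quadratic is compared with $\widehat\Sigma_\bullet$ rather than bounded by task-wise operator norms. Your cross-bound argument, Cauchy--Schwarz in the $\widehat\Sigma_u$ inner products, is equivalent to the paper's use of $A^2\preceq\|A\|_{\rm op}A$.
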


\begin{lemma}[A residual condition that forces equality]
\label{lem:complete-cluster-fusion}
Suppose \(\lambda>0\) and \(m\ge2\).  Let \(\widetilde B\) minimize
\eqref{eq:pairwise-fusion-obj} subject to equality of its task vectors on
\(\mathcal C\).  Write
\[
 r_t=\frac{X^{(t)\top}\{y^{(t)}-X^{(t)}\widetilde\beta^{(t)}\}}{n_t},
 \qquad \bar r=\frac1m\sum_{t\in\mathcal C}r_t.
\]
If
\begin{equation}
\label{eq:complete-cluster-fusion-condition}
 \max_{t\in\mathcal C}\|r_t-\bar r\|_\infty<\frac{T\lambda m}{2},
\end{equation}
then \(\widetilde B\) is an unrestricted minimizer, and every unrestricted
minimizer is constant on \(\mathcal C\).
\end{lemma}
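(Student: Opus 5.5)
The plan is to write down an explicit dual certificate (a subgradient of the pairwise penalty) at \(\widetilde B\) that satisfies the unrestricted optimality conditions. The within-\(\mathcal C\) entries of this certificate will lie strictly inside \((-1,1)\), and a standard strict-dual-feasibility argument then forces equality on \(\mathcal C\) for every minimizer.

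\emph{Gradient and optimality conditions.} The gradient of \(\mathcal L_n\) with respect to \(\beta^{(t)}\), evaluated at \(\widetilde B\), is \(-r_t/T\). An unrestricted minimizer is characterized coordinatewise as follows. For each pair \(t\ne u\) and each predictor \(j\) we need \(\Gamma_{j,tu}=-\Gamma_{j,ut}\in[-1,1]\), with \(\Gamma_{j,tu}=\operatorname{sign}(\widetilde b_{j,t}-\widetilde b_{j,u})\) whenever the two entries differ, such that
\[
 \frac{r_t}{T}=\lambda\sum_{u\ne t}\Gamma_{tu}\qquad\text{for every }t.
\]

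\emph{Constrained optimality conditions.} Let \(\widetilde\beta_{\mathcal C}\) be the common vector on \(\mathcal C\). In the constrained problem the pairs inside \(\mathcal C\) contribute nothing. Each outside task \(u\) contributes \(m\|\widetilde\beta_{\mathcal C}-\widetilde\beta^{(u)}\|_1\) together with the pairs among outside tasks. The constrained optimality conditions therefore supply vectors \(\gamma_u\in[-1,1]^p\), consistent with the signs of \(\widetilde\beta_{\mathcal C}-\widetilde\beta^{(u)}\), such that
\[
 \frac{m\bar r}{T}=\lambda m\sum_{u\notin\mathcal C}\gamma_u .
\]
They also supply pairwise subgradients among outside tasks making each outside task's equation hold, with the outside task's contribution from \(\mathcal C\) equal to \(-m\gamma_u\).

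\emph{Construction of \(\Gamma\).} I set \(\Gamma_{tu}=\gamma_u\) for \(t\in\mathcal C\), \(u\notin\mathcal C\), and keep the constrained subgradients among the outside tasks. Then every outside task's equation is exactly its constrained equation. For \(t\in\mathcal C\), put \(\rho_t=(r_t-\bar r)/(T\lambda)\), so that \(\sum_{t\in\mathcal C}\rho_t=0\), and define
\[
 \Gamma_{tu}=\frac{\rho_t-\rho_u}{m},\qquad t,u\in\mathcal C .
\]
This choice is antisymmetric and gives \(\sum_{u\in\mathcal C,u\ne t}\Gamma_{tu}=\rho_t\). Adding the constrained identity \(\bar r/T=\lambda\sum_{u\notin\mathcal C}\gamma_u\) then recovers \(r_t/T=\lambda\sum_{u\ne t}\Gamma_{tu}\). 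Condition \eqref{eq:complete-cluster-fusion-condition} gives \(|\Gamma_{tu}|\le 2\max_t\|\rho_t\|_\infty/m<1\) for within-\(\mathcal C\) pairs, which are exactly the pairs where \(\widetilde B\) has zero differences. Hence \(\widetilde B\) is an unrestricted minimizer.

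\emph{Every minimizer is constant on \(\mathcal C\).} This is the step needing most care. The loss is a strictly convex function of the fitted values \(X^{(t)}\beta^{(t)}\), so all minimizers share the fitted values and hence the gradient \(\nabla\mathcal L_n(\widetilde B)\). For another minimizer \(B'\), the quadratic loss is exact along fitted-value-preserving directions, so
\[
 \mathcal L_n(B')=\mathcal L_n(\widetilde B)+\langle\nabla\mathcal L_n(\widetilde B),B'-\widetilde B\rangle .
\]
Equality of the objective values then gives \(\mathcal J(B')=\mathcal J(\widetilde B)-\lambda^{-1}\langle\nabla\mathcal L_n,B'-\widetilde B\rangle\). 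Using \(-\nabla\mathcal L_n=\lambda D^\top\Gamma\) (in matrix form) and \(\mathcal J(\widetilde B)=\langle\Gamma,\widetilde BD^\top\rangle\), this becomes \(\mathcal J(B')=\langle\Gamma,B'D^\top\rangle\). Since \(|\Gamma|\le1\) entrywise, every term must attain \(|x|=\Gamma x\). For within-\(\mathcal C\) pairs we have \(|\Gamma|<1\), which forces \(b'_{j,t}=b'_{j,u}\). So every minimizer is constant on \(\mathcal C\).
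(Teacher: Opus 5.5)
Your proposal is correct and follows essentially the same route as the paper. You copy the constrained subgradient \(\gamma_u\) onto every pair \((t,u)\) with \(t\in\mathcal C\), use the same internal certificate \((r_t-r_u)/(T\lambda m)\) with the same strict bound, and finish by strict dual feasibility; the only cosmetic difference is that you first argue all minimizers share fitted values and then force \(\mathcal J(B')=\langle\Gamma,B'D^\top\rangle\), whereas the paper gets both facts at once from the single gap inequality \(F(B)-F(\widetilde B)\ge\) (prediction term) \(+\lambda\sum(1-|s_{j,tu}|)|b_{j,t}-b_{j,u}|\).
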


\subsection{Proof of Theorem~\ref{thm:shared-task-pooling}}
\label{subsec:taskwise-theorem-proof}

\begin{proof}
Set \(\delta=(pT)^{-2}\), so
\(L\asymp\log(epT)\) and
\(p+\log(2/\delta)\asymp p+\log T\).
The theorem's per-task condition also implies
\(n_{\min}T\gtrsim s_\Delta qL+\log(T/\delta)\).
Hence the assumptions of the preceding design and noise lemmas are satisfied.
Apply those lemmas with fixed fractions of \(\delta\) and work on the resulting
joint event, which has probability at least \(1-\delta\).

Both the constrained and unrestricted objectives attain their minima.  A
bounded objective controls all task differences because \(\lambda>0\), and
the loss then controls the common component because
\(\widehat\Sigma_\bullet\succ0\).  The sublevel sets are therefore bounded,
including on the closed subspace that imposes equality on \(\mathcal C\).

\paragraph{Step 1: Fit the common vector for fixed deviations.}
Let \(\widetilde B\) be any constrained minimizer, and write its error as
\[
 \widetilde\beta^{(t)}-\beta^{*(t)}=h\quad(t\in\mathcal C),\qquad
 \widetilde\beta^{(u)}-\beta^{*(u)}=h+g^{(u)}\quad(u\in\mathcal C^c).
\]
Put \(G=[g^{(u)}:u\in\mathcal C^c]\) and
\(G^*=[\delta^{*(u)}:u\in\mathcal C^c]\).
For fixed \(G\), minimizing the loss over \(h\) gives
\begin{equation}
\label{eq:profiled-common-error-decomposition}
 h=h_N+h_G,\qquad
 h_G=-\widehat\Sigma_\bullet^{-1}\frac1T
               \sum_{u\in\mathcal C^c}\widehat\Sigma_u g^{(u)}.
\end{equation}
As in the dense upper-bound proof, completing the square leaves the profiled
objective
\[
 \mathcal Q_X(G)-\sum_{u\in\mathcal C^c}\zeta_u^\top g^{(u)}
       +\lambda\mathcal P(G^*+G),
\]
up to a constant independent of \(G\).  Compare this with \(G=0\), which
keeps the true deviations and refits only the common vector, to obtain
\begin{equation}
\label{eq:profiled-pairwise-basic}
 \mathcal Q_X(G)+\lambda\{\mathcal P(G^*+G)-\mathcal P(G^*)\}
 \le\sum_{u\in\mathcal C^c}\zeta_u^\top g^{(u)}.
\end{equation}

\paragraph{Step 2: Bound the constrained estimation error.}
By \eqref{eq:profiled-cluster-noise-residual},
\eqref{eq:collapsed-penalty-l1-equivalence}, and \(m\ge T/2\), a
sufficiently large fixed \(C_\lambda\) ensures
\[
 \left|\sum_{u\in\mathcal C^c}\zeta_u^\top g^{(u)}\right|
 \le\frac\lambda{16}\mathcal P(G).
\]
The true deviations vanish outside \(S_\Delta\).  Additivity across
predictors and the triangle inequality therefore give
\[
 \mathcal P(G^*+G)-\mathcal P(G^*)
 \ge\mathcal P(G_{S_\Delta^c})-\mathcal P(G_{S_\Delta}).
\]
Substitution into \eqref{eq:profiled-pairwise-basic} yields
\[
 \mathcal Q_X(G)+\frac{15\lambda}{16}\mathcal P(G_{S_\Delta^c})
 \le\frac{17\lambda}{16}\mathcal P(G_{S_\Delta}).
\]
Using \(m\|G\|_1\le\mathcal P(G)\le T\|G\|_1\), and the fact that
\(G_{S_\Delta}\) has at most \(s_\Delta q\) entries, we obtain
\begin{equation}
\label{eq:profiled-pairwise-cone-and-bound}
 \|G_{S_\Delta^c}\|_1\le6\|G_{S_\Delta}\|_1,\qquad
 \mathcal Q_X(G)\lesssim\lambda T\sqrt{s_\Delta q}\,\|G\|_F.
\end{equation}
If \(s_\Delta q=0\), these inequalities force \(G=0\).  Otherwise,
Lemma~\ref{lem:profiled-pairwise-curvature} gives
\begin{equation}
\label{eq:profiled-deviation-rate}
 \|G\|_F^2\lesssim\lambda^2T^4s_\Delta q.
\end{equation}
Its bound on the common-vector adjustment also gives
\begin{equation}
\label{eq:profiled-bias-common-rate}
 \|h_G\|_2^2\lesssim\lambda^2T^3s_\Delta q
       \min\!\left\{1,\frac qT\left(1+\frac p{n_{\min}}\right)\right\}.
\end{equation}
The constants in these two bounds are independent of \(C_\lambda\).

Using \(h=h_N+h_G\), the noise bound, and the chosen \(\lambda\), the
second term inside the minimum gives
\[
 \|h\|_2^2\lesssim
 \frac{\sigma^2(p+\log T)}{n_{\min}T}
 +\frac{\sigma^2s_\Delta q^2L}{n_{\min}T^2}
 +\frac{\sigma^2ps_\Delta q^2L}{n_{\min}^2T^2}.
\]
The last term is absorbed into \(\sigma^2p/(n_{\min}T)\), because
\((s_\Delta qL/n_{\min})(q/T)\lesssim1\).  Thus every constrained
minimizer satisfies
\begin{equation}
\label{eq:restricted-pairwise-shared-task-rate}
 \|h\|_2^2\lesssim\frac{\sigma^2}{n_{\min}T}
       \left\{p+\log T+\frac{s_\Delta q^2}{T}L\right\}.
\end{equation}

\paragraph{Step 3: Remove the equality constraint.}
If \(m=1\), the constraint is vacuous.  Otherwise, for
\(t\in\mathcal C\), the constrained residual is
\(r_t=w_t-\widehat\Sigma_t(h_N+h_G)\).  Subtracting its average over
\(\mathcal C\) gives
\begin{equation}
\label{eq:profiled-residual-decomposition}
 r_t-\bar r
 =T\left(\zeta_t-\frac1m\sum_{v\in\mathcal C}\zeta_v\right)
       -(\widehat\Sigma_t-\widehat\Sigma_{\mathcal C})h_G.
\end{equation}
By \eqref{eq:profiled-cluster-noise-residual} and the triangle inequality,
the first term is uniformly bounded by
\(C\sigma\sqrt{L/n_{\min}}\).
For the second term, the diagonal and operator bounds imply
\[
 \|\widehat\Sigma_t e_j\|_2^2
 \le\|\widehat\Sigma_t\|_{\rm op}(\widehat\Sigma_t)_{jj}
 \lesssim1+p/n_{\min}.
\]
Since \(\|\widehat\Sigma_{\mathcal C}\|_{\rm op}\lesssim1\), it
follows that
\[
 \max_{t\in\mathcal C}
 \|(\widehat\Sigma_t-\widehat\Sigma_{\mathcal C})h_G\|_\infty
 \lesssim\sqrt{1+p/n_{\min}}\,\|h_G\|_2.
\]
This deterministic bound remains valid for the fitted, response-dependent
\(h_G\).  Combining it with \eqref{eq:profiled-bias-common-rate}, and
using \(m\asymp T\), yields
\begin{equation}
\label{eq:taskwise-fusion-ratio-outside}
 \frac{\max_{t\in\mathcal C}\|r_t-\bar r\|_\infty}{T\lambda m}
 \lesssim C_\lambda^{-1/2}
       +\frac{q\sqrt{s_\Delta}}{T}
       +\frac{pq\sqrt{s_\Delta}}{n_{\min}T}.
\end{equation}
The first term follows from
\(T\lambda m\asymp\sqrt{C_\lambda}\sigma\sqrt{L/n_{\min}}\).
The remaining two terms are small under the stated lower bounds on \(T\)
and \(n_{\min}T\).  Choosing those constants first and then taking
\(C_\lambda\) sufficiently large makes the ratio smaller than \(1/2\).
No similarity between the task covariance matrices is required.

Lemma~\ref{lem:complete-cluster-fusion} therefore implies that every
unrestricted minimizer is constant on \(\mathcal C\).  Such a minimizer also
solves the constrained problem, so
\eqref{eq:restricted-pairwise-shared-task-rate} applies.  Substituting
\(L\asymp\log(epT)\) gives the shared-task bound.

\paragraph{Step 4: Bound the outside tasks.}
For \(q\ge1\), the outside-task error is \(h+g^{(u)}\), so
\[
 \frac1q\sum_{u\in\mathcal C^c}\|h+g^{(u)}\|_2^2
 \le2\|h\|_2^2+\frac2q\|G\|_F^2.
\]
Using \eqref{eq:profiled-deviation-rate} and
\eqref{eq:restricted-pairwise-shared-task-rate} gives
\[
 \frac1q\sum_{u\in\mathcal C^c}\|h+g^{(u)}\|_2^2
 \lesssim\frac{\sigma^2}{n_{\min}}
 \left\{\frac{p+\log T}{T}+s_\Delta L
                +\frac{s_\Delta q^2L}{T^2}\right\}.
\]
Since \(q\le T/2\), the last term is absorbed by \(s_\Delta L\), giving
\eqref{eq:outside-task-average-error}.  Finally,
\[
 \max_{u\in\mathcal C^c}\|h+g^{(u)}\|_2^2
 \le2\|h\|_2^2+2\|G\|_F^2
\]
gives the maximum-error bound in Remark~\ref{rem:taskwise-interpretation}.
\end{proof}

\subsection{Proofs of the Three Ingredients}
\label{subsec:taskwise-supporting-proofs}

\begin{proof}[Proof of Lemma~\ref{lem:taskwise-design-noise}]
For a fixed unit vector, squared Gaussian projections are uniformly
sub-exponential.  Applying \eqref{eq:weighted-bernstein} with weights
\(1/(Tn_t)\), followed by a \(1/4\)-net argument, gives the required pooled
covariance bounds.  The pooled sample-size condition therefore bounds
\(\widehat\Sigma_\bullet\) above and below.  The same argument applies to
\(\widehat\Sigma_{\mathcal C}\) because \(m\ge T/2\).  A union bound over the \(pT\) individual coordinate
variances gives \(\max_{t,j}(\widehat\Sigma_t)_{jj}\le C\) when
\(n_{\min}\gtrsim L\).

For the task-specific operator norms, the same Gaussian-square and net
argument, combined with a union bound over tasks, gives
\[
 \max_t\|\widehat\Sigma_t-\Sigma_t\|_{\rm op}
 \lesssim\sqrt{\frac{p+\log(T/\delta)}{n_{\min}}}
             +\frac{p+\log(T/\delta)}{n_{\min}}.
\]
Together with \(n_{\min}\gtrsim L\), this proves
\eqref{eq:taskwise-operator-design-event}; it does not require
\(n_{\min}\gtrsim p\).

On these design events, \eqref{eq:pooled-projection-hn-tail} gives
\eqref{eq:pooled-noise-common-rate}.  The conditional coordinate bound
\eqref{eq:pooled-projection-coordinate-residual} and a union bound over
\((j,t)\) give \eqref{eq:profiled-cluster-noise-residual}.
Allocating fixed fractions of \(\delta\) to the design and conditional noise
events gives the stated joint probability.
\end{proof}

\begin{proof}[Proof of Lemma~\ref{lem:profiled-pairwise-curvature}]
On the pooled design event, the shared tasks contribute a fixed fraction of
the total covariance.  Hence, for some numerical \(\kappa>0\),
\[
 \frac1T\sum_{u\in\mathcal C^c}\widehat\Sigma_u
 \preceq(1-\kappa)\widehat\Sigma_\bullet.
\]
For any \(v\), Cauchy--Schwarz over the outside tasks gives
\[
\begin{aligned}
 \left|v^\top\frac1T\sum_{u\in\mathcal C^c}
                    \widehat\Sigma_u g^{(u)}\right|^2
 &\le\left(\frac1T\sum_{u\in\mathcal C^c}
                    v^\top\widehat\Sigma_u v\right)
       \left(\frac1T\sum_{u\in\mathcal C^c}
                    g^{(u)\top}\widehat\Sigma_u g^{(u)}\right)\\
 &\le(1-\kappa)(v^\top\widehat\Sigma_\bullet v)
       \left(\frac1T\sum_{u\in\mathcal C^c}
                    g^{(u)\top}\widehat\Sigma_u g^{(u)}\right).
\end{aligned}
\]
Take \(v=\widehat\Sigma_\bullet^{-1}T^{-1}
\sum_{u\in\mathcal C^c}\widehat\Sigma_u g^{(u)}\).
Completing the square in \eqref{eq:profiled-quadratic-variational} shows that
profiling out the common vector can remove at most a fraction \(1-\kappa\)
of the outside-task quadratic loss.  Hence
\[
 \mathcal Q_X(G)\ge\frac\kappa{2T}
       \sum_{u\in\mathcal C^c}g^{(u)\top}\widehat\Sigma_u g^{(u)}.
\]

Condition \eqref{eq:profiled-entry-cone} implies
\[
 \|G\|_{2,1}\le\|G\|_1\le7\|G_{S_\Delta}\|_1
 \le7\sqrt{s_\Delta q}\,\|G\|_F.
\]
Also \(\|G\|_{2,1}\le\sqrt p\,\|G\|_F\).
Pad \(G\) with zero columns on \(\mathcal C\) and apply part (i) of
Lemma~\ref{lem:upper-design-bounds} with
\(k=\min\{p,s_\Delta q\}\).  The stated sample condition implies its
sample requirement, so
\[
 \frac1T\sum_{u\in\mathcal C^c}
       g^{(u)\top}\widehat\Sigma_u g^{(u)}
 \asymp\frac1T\|G\|_F^2.
\]
Together, these bounds give \eqref{eq:profiled-curvature-lower} and
\eqref{eq:profiled-curvature-upper}.

The same Cauchy--Schwarz inequality and
\(\widehat\Sigma_\bullet\succeq cI\) bound the squared norm in
\eqref{eq:profiled-cross-bound} by \(C\|G\|_F^2/T\).
For the other bound, use \(A^2\preceq\|A\|_{\rm op}A\) for positive
semidefinite \(A\), the individual operator bound, and the preceding
quadratic upper bound:
\[
\begin{aligned}
 \left\|\widehat\Sigma_\bullet^{-1}\frac1T
       \sum_{u\in\mathcal C^c}\widehat\Sigma_u g^{(u)}\right\|_2^2
 &\lesssim\frac q{T^2}
       \sum_{u\in\mathcal C^c}\|\widehat\Sigma_u g^{(u)}\|_2^2\\
 &\lesssim\frac q{T^2}\left(1+\frac p{n_{\min}}\right)
       \sum_{u\in\mathcal C^c}g^{(u)\top}\widehat\Sigma_u g^{(u)}\\
 &\lesssim\frac q{T^2}\left(1+\frac p{n_{\min}}\right)\|G\|_F^2.
\end{aligned}
\]
Taking the smaller of the two estimates gives
\eqref{eq:profiled-cross-bound}.  The argument is uniform over all \(G\)
satisfying \eqref{eq:profiled-entry-cone}.  If \(s_\Delta q=0\), the
condition forces \(G=0\), and no sparse design bound is needed.
\end{proof}

\begin{proof}[Proof of Lemma~\ref{lem:complete-cluster-fusion}]
We construct subgradients for the unrestricted problem and use a strict
subgradient inequality to rule out unequal coefficients within
\(\mathcal C\).
Write \(\operatorname{Sign}(x)=\{\operatorname{sign}(x)\}\) for
\(x\ne0\), and \(\operatorname{Sign}(0)=[-1,1]\).

\paragraph{Construct the internal-pair subgradients.}
In predictor \(j\), let \(c_j\) be the common constrained coefficient on
\(\mathcal C\).  The \(m\) pairs involving an outside task \(u\) contribute
\(\lambda m|c_j-\widetilde b_{j,u}|\).
Choose
\(s_{j,u}\in\operatorname{Sign}(c_j-\widetilde b_{j,u})\) from the constrained
optimality conditions and assign the same value to each corresponding pair:
\(s_{j,tu}=s_{j,u}\) for \(t\in\mathcal C\).
The common-coordinate equation is
\[
 \frac1T\sum_{t\in\mathcal C}r_{j,t}
 =\lambda m\sum_{u\in\mathcal C^c}s_{j,u}.
\]
Dividing by \(m\) gives
\(\bar r_j/T=\lambda\sum_{u\in\mathcal C^c}s_{j,tu}\).
For an outside task, the collapsed subgradient
\(-\lambda m s_{j,u}\) equals the sum of the \(m\) copied pairwise
subgradients, so its optimality equation is unchanged.

For an internal pair, choose
\[
 s_{j,tu}=\frac{r_{j,t}-r_{j,u}}{T\lambda m},
 \qquad t,u\in\mathcal C,\quad t\ne u.
\]
These values are antisymmetric, and
\[
 \sum_{u\in\mathcal C\setminus\{t\}}s_{j,tu}
 =\frac{r_{j,t}-\bar r_j}{T\lambda},\qquad
 |s_{j,tu}|\le
 \frac{2\max_{v\in\mathcal C}\|r_v-\bar r\|_\infty}{T\lambda m}<1.
\]
Retain the constrained subgradients on pairs between outside tasks and
use antisymmetry on the reverse orientations.  The full equations are now
\[
 \frac{r_{j,t}}T=\lambda\sum_{u\ne t}s_{j,tu},
 \qquad s_{j,tu}\in
 \operatorname{Sign}(\widetilde b_{j,t}-\widetilde b_{j,u}).
\]
These are exactly the subgradient optimality conditions for the unrestricted
convex objective, so \(\widetilde B\) is also an unrestricted minimizer.

\paragraph{Show that every minimizer agrees on \(\mathcal C\).}
Let \(F\) denote the unrestricted objective and compare an arbitrary
\(B\) with \(\widetilde B\).  Expanding the squared loss and using the
optimality equations cancels the linear term.  The remaining pairwise terms
are nonnegative by the subgradient inequality.  For an internal pair, where the
constrained coefficients agree, that term is at least
\((1-|s_{j,tu}|)|b_{j,t}-b_{j,u}|\).  Therefore
\begin{equation}
\label{eq:cluster-strict-gap}
\begin{aligned}
 F(B)-F(\widetilde B)
 &\ge\frac1{2T}\sum_t
 \frac{\|X^{(t)}(\beta^{(t)}-\widetilde\beta^{(t)})\|_2^2}{n_t}\\
 &\quad+\lambda\sum_j\sum_{\substack{t<u\\t,u\in\mathcal C}}
       (1-|s_{j,tu}|)|b_{j,t}-b_{j,u}|.
\end{aligned}
\end{equation}
If \(B\) is another minimizer, the left side is zero.  Since every term on
the right is nonnegative and every internal weight \(1-|s_{j,tu}|\) is
strictly positive, all coefficient differences within \(\mathcal C\) must
vanish.
\end{proof}

\section{ADDITIONAL SIMULATIONS AND EXPERIMENTAL DETAILS}
\label{sec:simulation-details}

The main-text random-departure experiment varies predictor-specific sharing
and reports total squared Frobenius error \(\|\widehat B-B^*\|_F^2\).
The supplementary shared-task experiment reports the
maximum coefficient error over \(\mathcal C\).  Each reported setting uses
30 independent repetitions, and competing methods are evaluated on matched
data; parentheses contain Monte Carlo standard errors.

Tuning parameters are selected on independent pilot data and then held fixed
throughout the reporting repetitions.  These experiments are finite-sample
comparisons of estimation behavior.  They are not designed to verify the
sufficient sample-size conditions of the high-probability theorems.

\subsection{Random Departures on the Active Support}
\label{subsec:random-departure-setup}

\paragraph{Coefficients and data.}
Use \((p,s,n,T)=(40,40,140,60)\) in low dimension and
\((180,12,80,60)\) in high dimension.  Draw the active set
\(S\subseteq\{1,\ldots,p\}\) uniformly among sets of size \(s\).
For every \(j\in S\), independently draw a uniform sign \(c_j\), a
uniform permutation \(\pi_j\) of the task indices, and independent
\(z_{j,t}\sim N(0,1)\).  At departure fraction
\(\alpha\in\{0,0.1,0.2,0.4,0.6,0.8,1\}\), let
\(M_j(\alpha)=\{\pi_j(1),\ldots,\pi_j(\alpha T)\}\) and set
\[
 \widetilde b_{j,t}=
 \begin{cases}
 c_j+2z_{j,t}\mathbf1_{\{t\in M_j(\alpha)\}},&j\in S,\\
 0,&j\notin S.
 \end{cases}
\]
Every active predictor has exactly \(\alpha T\) perturbed entries, selected
independently across predictors.  Normalize each active predictor to norm \(\sqrt T\)
to obtain \(U\), and set
\[
 B^*=aU,\qquad
 a=\left\{\frac{4\sigma^2}
 {T^{-1}\sum_t U_{:,t}^{\top}\Sigma U_{:,t}}\right\}^{1/2},
 \qquad \sigma=1.2,\quad \Sigma_{jk}=0.5^{|j-k|}.
\]
This fixes average population SNR at \(4\).  The Gaussian departures have
standard deviation \(2\) before normalization; their final magnitudes change
with the predictor-specific and SNR scales.  The unperturbed coefficients for
a predictor remain exactly equal, and all perturbed values are distinct almost
surely.  Hence
\(q_j=\min\{\alpha T,T-1\}\) on \(S\) and
\(r=s\min\{\alpha T,T-1\}\).  At \(100\%\) departures the exception count is
\(s(T-1)\), because with \(T\) distinct coefficients the largest equality
group has size one.
Design rows are independent \(N_p(0,\Sigma)\), and errors are independent
\(N(0,\sigma^2)\), independently of all designs.

Within a dimension regime and repetition, the active set, signs, permutations,
Gaussian departures, designs, and noise are shared across all seven fractions.
The masks are nested as \(\alpha\) increases; predictor-wise normalization
and the SNR scale are recomputed at each fraction.  These random objects are regenerated
independently for each reporting repetition.  Pilot data are independent of
all reporting data and are likewise paired across fractions within a regime.

\paragraph{Tuning and reporting.}
Each setting uses one pilot dataset and five matched folds.  All methods use
the same training and validation observations; penalties minimize the
average validation prediction error over folds and tasks.  Separate and Pooled
Ridge use \(0\) and 19 logarithmically spaced penalties from \(10^{-4}\) to
\(10^2\).  Write
\[
 \ell=1+\log\{epT^2(T-1)/2\},\qquad
 \lambda_0=\frac{\sigma\sqrt{2\ell}}{T^2\sqrt n},\qquad
 \nu_0=\frac{\sigma\sqrt\ell}{T\sqrt n}.
\]
Fusion uses multipliers \(\{0,0.03,0.1,0.3,1,3\}\) of \(\lambda_0\).
Group penalties use \(\{0.1,0.3,1,3,10\}\) times \(\nu_0\).
Sparse Pairwise Fusion searches their Cartesian product, including all
Group Lasso candidates at zero fusion.  Separate and Pooled Lasso use
multipliers \(\{0.1,0.3,1,3,10\}\) of
\(\sigma\sqrt{\log(p)/n}\) and
\(\sigma\sqrt{\log(p)/(nT)}\), respectively.  These scales use the full
pilot sample size and remain numerically fixed within the CV folds and the
reporting fits.  All regressions omit an intercept.

For every method and fraction, the selected penalties are held fixed for
30 reporting repetitions.  We compute \(\|\widehat B-B^*\|_F^2\) over the
entire matrix, including inactive predictors, then report its mean and sample
standard deviation divided by \(\sqrt{30}\).  No repetitions are discarded.
Fused fits must pass the solver's convergence and objective checks; group
and fused reporting fits also pass a normalized proximal optimality-residual
check with tolerance \(2\times10^{-5}\).  Truth matrices, departure masks,
fitted coefficients, pilot scores, and seeds are retained for independent
verification.

\subsection{Error on the Shared Tasks}
\label{subsec:taskwise-settings}

Table~\ref{tab:taskwise-results} reports
\[
 E_{\max}=\max_{t\in\mathcal C}
 \|\widehat\beta^{(t)}-\beta_{\rm in}^*\|_2^2.
\]
The maximum is taken within each repetition and then averaged over 30
independent repetitions; the parenthesized value is the Monte Carlo standard
error of these 30 maxima.

\paragraph{Two-cluster construction.}
Every setting has \(p=40\), \(n=30\), \(T=60\), and \(\sigma=1\).
Design rows are independent
\(N_p(0,\Sigma)\), where \(\Sigma_{jk}=0.3^{|j-k|}\), and the errors are
independent \(N(0,1)\).  We draw a dense random-sign vector
\(\beta_{\rm in}^*\) and scale it so that
\((\beta_{\rm in}^*)^\top\Sigma\beta_{\rm in}^*=4\).  A second independent
random-sign direction \(v\), normalized to \(\|v\|_2=1\), defines
\[
 \beta_{\rm out}^*=\beta_{\rm in}^*+\Delta v.
\]
For each \(q\), we draw \(\mathcal C\) once, uniformly among subsets of size
\(T-q\), and keep it fixed across the pilot and all reporting repetitions.  All tasks in
\(\mathcal C\) use \(\beta_{\rm in}^*\), and all outside tasks use the same
\(\beta_{\rm out}^*\).  Hence every predictor has one most common value and \(q\)
exceptions, and the two coefficient vectors are exactly distance \(\Delta\)
apart.  Panel (a) uses \(\Delta=4\) and \(q\in\{10,15,20,25\}\); panel (b)
uses \(q=15\) and \(\Delta\in\{0.5,1,2,4,6,8,12\}\).  Within each paired
pilot or reporting repetition in panel (b), \(\beta_{\rm in}^*\), \(v\), the
designs, and the noise are shared across all values of \(\Delta\), so only
\(\Delta\) changes.  Between repetitions, we regenerate \(\beta_{\rm in}^*\),
\(v\), the designs, and the noise, but keep \(\mathcal C\) fixed.

\paragraph{Methods and tuning.}
Separate Ridge fits each task independently with one shared ridge parameter.
Pooled-all Ridge estimates one vector from all tasks.  Pairwise Fusion uses
the unrestricted pairwise-fusion objective.  None of the three methods receives
\(\mathcal C\) or either coefficient vector.

Each of the ten distinct \((q,\Delta)\) settings uses one pilot
data set for five-fold cross-validation.  Each fold contains six observations
per task, with the same row split used by every method and candidate.  Ridge
parameters range over 11 logarithmically spaced values from \(10^{-4}\) to
\(10\), and the fused multiplier ranges over
\(\{0.1,0.3,0.5,0.7,1,3,10\}\).  Pooled-all Ridge and Pairwise Fusion minimize
validation prediction error separately in every setting.  Separate Ridge
selects \(\alpha=0.316\) at the common reference setting
\((q,\Delta)=(15,4)\) and uses it throughout; its definition therefore does
not change as \(\Delta\) changes.  The selected parameters are held fixed for
30 fresh reporting repetitions.  Pilot and reporting seeds are disjoint.

\begin{table}[!t]
\centering
\small
\setlength{\tabcolsep}{7pt}
\caption{Maximum squared coefficient error over \(t\in\mathcal C\) for \(p=40\), \(n=30\), \(T=60\), and \(\sigma=1\). Here \(q=|\mathcal C^c|\) and \(\Delta=\|\beta^*\sb {\rm out}-\beta^*\sb {\rm in}\|\sb 2\). Entries are Monte Carlo means (standard errors) over 30 repetitions. No method is given \(\mathcal C\).}
\label{tab:taskwise-results}
\textbf{(a) Number of outside tasks, \(\Delta=4\)}\par\smallskip
\begin{tabular}{@{}cccc@{}}
\toprule
\(q\) & Separate Ridge & Pooled-all Ridge & Pairwise Fusion \\
\midrule
10 & \(2.9143\;(0.0499)\) & \(0.5180\;(0.0148)\) & \(0.4138\;(0.0148)\) \\
15 & \(2.9946\;(0.0668)\) & \(1.0582\;(0.0232)\) & \(0.5684\;(0.0186)\) \\
20 & \(3.1482\;(0.0633)\) & \(1.7775\;(0.0324)\) & \(0.9478\;(0.0209)\) \\
25 & \(3.0561\;(0.0675)\) & \(2.7443\;(0.0464)\) & \(1.7167\;(0.0441)\) \\
\bottomrule
\end{tabular}
\par\medskip
\textbf{(b) Distance between the two coefficient vectors, \(q=15\)}\par\smallskip
\begin{tabular}{@{}cccc@{}}
\toprule
\(\Delta\) & Separate Ridge & Pooled-all Ridge & Pairwise Fusion \\
\midrule
0.5 & \(2.9946\;(0.0668)\) & \(0.0399\;(0.0016)\) & \(0.0421\;(0.0019)\) \\
1 & \(2.9946\;(0.0668)\) & \(0.0884\;(0.0026)\) & \(0.0892\;(0.0026)\) \\
2 & \(2.9946\;(0.0668)\) & \(0.2802\;(0.0067)\) & \(0.4408\;(0.0167)\) \\
4 & \(2.9946\;(0.0668)\) & \(1.0582\;(0.0232)\) & \(0.5684\;(0.0186)\) \\
6 & \(2.9946\;(0.0668)\) & \(2.2396\;(0.0502)\) & \(1.6274\;(0.0398)\) \\
8 & \(2.9946\;(0.0668)\) & \(3.9695\;(0.0884)\) & \(1.6950\;(0.0421)\) \\
12 & \(2.9946\;(0.0668)\) & \(7.6859\;(0.1775)\) & \(1.7632\;(0.0450)\) \\
\bottomrule
\end{tabular}
\end{table}

In panel (a), increasing \(q\) leaves fewer tasks in \(\mathcal C\) and creates
more between-group pairwise differences.  The Pairwise Fusion error consequently rises from
\(0.414\) at \(q=10\) to \(1.717\) at \(q=25\).  It remains below both alternative methods throughout; Pooled-all deteriorates faster because its
single coefficient vector averages two well-separated groups.

Panel (b) shows why separation matters.  When \(\Delta\leq1\), treating the
two groups as one is competitive: at \(\Delta=0.5\), Pooled-all Ridge and Pairwise
Fusion have errors \(0.040\) and \(0.042\).  Pooled-all remains better through
\(\Delta=2\), but its merging bias then grows rapidly.  Pairwise Fusion is already
better at \(\Delta=4\), and the gap widens at larger separations: at
\(\Delta=12\), their errors are \(1.763\) and \(7.686\), respectively.  Thus
the benefit of learning separate clusters appears once their coefficient
vectors are sufficiently far apart.  The Separate Ridge entries are identical
by construction: the paired experiment changes only the outside-task vector,
whereas Separate Ridge is fitted task by task and its error is evaluated only
on the unchanged tasks in \(\mathcal C\).

The two task groups differ on all \(p\) coordinates, and \(q\) is
deliberately nontrivial.  These settings therefore probe a difficult
finite-sample regime rather than attempting to isolate the asymptotic rate
in Theorem~\ref{thm:shared-task-pooling}.

\section{RECS ANALYSIS DETAILS}
\label{sec:recs-details}

\paragraph{Sample and predictors.}
We use the January 2024 version 7 of the 2020 RECS public-use file
\citep{eia2020recs}.  Each of its 18,496 records is a distinct household
identifier.  All 50 states and DC are retained; task sizes range from 143
in Delaware to 1,152 in California.  The response is the natural logarithm
of \texttt{KWH}, total annual electricity use including self-generated solar
power.  Required fields are available for every record.  We retain the
released imputed values and do not trim outcomes, filter state-specific
contrasts, or remove states.

The fixed additive design has 16 columns.  Three indicate electric main
space heating (\texttt{FUELHEAT}=5), electric water heating
(\texttt{FUELH2O}=5), and air-conditioning use (\texttt{AIRCOND}=1).
The electric-heating reference includes both other main heating fuels and
households that did not use space heating; the not-applicable fuel code is
not treated as a numerical covariate.  Four quantitative columns are
log energy-consuming floor area (\texttt{TOTSQFT\_EN}), household size
(\texttt{NHSLDMEM}), and heating and cooling degree days
(\texttt{HDD65}, \texttt{CDD65}), each degree-day variable divided by 1,000.

Housing type contributes four indicators: mobile homes, attached houses,
apartments in buildings with two to four units, and apartments in buildings
with at least five units, relative to detached houses.  Household income
is grouped into below \$25,000, \$25,000--49,999, \$50,000--99,999, and
at least \$100,000, with the first group as reference.  Construction period
is grouped into before 1980, 1980--1999, and 2000 onward, with the first
period as reference.  Income and construction period thus contribute
three and two indicators.  These encodings are fixed before fitting;
no interactions or further predictors are added.

\paragraph{Weighted loss and common scaling.}
Let \(w_{it}\) be the released \texttt{NWEIGHT} and let
\(a_{it}=w_{it}/\sum_{i\in I_t}w_{it}\), where \(I_t\) is the current
training subset of state \(t\).  On commonly standardized predictors and
response, we minimize
\begin{equation}
\label{eq:recs-objective}
 \frac{1}{2T}\sum_{t=1}^T\sum_{i\in I_t}a_{it}
       (y_{it}-\alpha_t-x_{it}^{\top}\beta^{(t)})^2
 +\lambda\sum_{j=1}^{16}\sum_{t<u}|\beta_j^{(t)}-\beta_j^{(u)}|.
\end{equation}
There are \(\binom{51}{2}=1,275\) equally weighted pairwise differences per
predictor.  The intercepts \(\alpha_t\) are unpenalized; there is no group,
lasso, or ridge penalty.  Census regions and state adjacency do not enter
the estimator.

For each fitting subset, predictor and response means and standard deviations
are computed under the mixture that assigns mass \(1/T\) to each state
and weights \(a_{it}\) within it.  The same scale is used for every state;
we do not standardize separately by state.  Weighted state means are then
profiled out to eliminate the intercepts.  Validation households do not
contribute to these transformations.  The final fit uses transformations
estimated from the full sample; exported raw coefficients and intercepts
recover predictions in original log-kWh units.

\paragraph{Ordinary five-fold cross-validation.}
A seeded permutation (seed 123) assigns households to five approximately
equal folds within each state.  For each candidate parameter and each fold,
we train on the other four folds and validate on that fold.  The validation
score is the average across states of the within-state survey-weighted
log-response squared error.  Its mean over the five validation folds selects
one parameter, after which all households are used to refit one final
\(\widehat B\).  The five fitted matrices are not averaged.

We parameterize the candidate grid by
\(\rho=\lambda/\lambda_{\rm full}\), where \(\lambda_{\rm full}\)
is the smallest penalty admitting the shared-slope weighted least-squares
solution, computed from each training subset.  The grid is
\[
 \{0,.001,.002,.003,.005,.007,.01,.015,.02,.03,.05,.07,.1,.15,.2,.3,.5,.7,1\}.
\]
At zero we use statewise weighted least squares, taking the minimum-norm
solution if needed.  At one, all slopes are shared but intercepts remain
state specific.  Minimizing mean validation error selects \(\rho=0.2\),
with mean validation MSE 0.214651.  The full-data threshold is
\(\lambda_{\rm full}=1.2210626564\times10^{-4}\), giving
\(\lambda=2.4421253128\times10^{-5}\) for the reported fit.
We use neither a one-standard-error rule nor a separate test split for
this coefficient analysis; the validation score is a tuning criterion,
not an independent estimate of test error.

\paragraph{Numerical checks and equality groups.}
The implementation uses the exact pairwise-fusion proximal map with
accelerated proximal gradients and a feasible dual certificate for
\(\nu=0\).  All validation candidates have relative primal--dual gaps
at most \(10^{-8}\); the final fit has gap below \(5\times10^{-11}\).
An independent explicit-pair ADMM solution at the selected parameter differs
by less than \(10^{-8}\) in standardized coefficients and
\(4\times10^{-14}\) in objective value.

For each standardized predictor coefficient vector, sorted values form groups of
maximum diameter \(10^{-5}\).  The mean of the largest group defines the
reference value, and departures within that group are displayed as zero.
Tied largest groups are resolved by proximity to the predictor-specific median and then
the lower mean; electric water heating is the only tied predictor and is marked
with an asterisk.  This convention chooses a display reference, not a
unique common effect.  The full matrix has 247 numerical exceptions from the largest equality groups
out of 816 entries.  No predictor is completely fused, although 14 predictors have a
majority group and the median largest group size is 40.

\paragraph{Complete heatmap and interpretation.}
The heatmap retains every predictor and all 51 tasks, ordered alphabetically
within Census regions.  Every cell receives color
\(s_j(\widehat\beta^{\rm raw}_{jt}-\widehat a_j)\), where \(s_j\)
is the common full-data predictor standard deviation defined above.  No
predictor selection, state selection, color clipping, or stability masking is
applied.  A cross denotes fewer than ten observations in one binary
category, or a predictor that is constant within that state.  This is an
annotation rather than a preprocessing rule.  Both the complete raw and
standardized coefficient matrices accompany the figure.

The full-sample centered design is rank deficient in DC and Hawaii.
The corresponding coefficients can borrow information through fusion even
when a local contrast is absent.  For air conditioning, Delaware, Iowa,
Nebraska, and Oklahoma have fewer than ten households without the equipment.
Survey weighting, imputation, the 2020 observation period, and limited
within-state variation all constrain interpretation.  We make no causal
claim, do not report survey-design confidence intervals, and do not equate
numerical fusion with verified equality of population coefficients.

\end{document}